\documentclass{article}

\usepackage{macros}

\usepackage[accepted]{icml2026}

\icmltitlerunning{Symmetry in language statistics shapes the geometry of model representations}

\begin{document}

\twocolumn[
  \icmltitle{Symmetry in language statistics shapes the geometry of model representations}

  \icmlsetsymbol{internship}{*}
  \begin{icmlauthorlist}
    \vspace{-.04in}
    \icmlauthor{Dhruva Karkada}{ucb,internship}
    \icmlauthor{Daniel J. Korchinski}{epfl}
    \icmlauthor{Andres Nava}{jhu}
    \icmlauthor{Matthieu Wyart}{epfl,jhu}
    \icmlauthor{Yasaman Bahri}{gdm}
  \end{icmlauthorlist}

  \icmlaffiliation{ucb}{UC Berkeley}
  \icmlaffiliation{gdm}{Google DeepMind}
  \icmlaffiliation{epfl}{EPFL}
  \icmlaffiliation{jhu}{Johns Hopkins}

  \icmlcorrespondingauthor{}{dkarkada@berkeley.edu}
     \icmlcorrespondingauthor{}{daniel.korchinski@epfl.ch}
     \icmlcorrespondingauthor{}{anava1@jh.edu}
    \icmlcorrespondingauthor{}{mwyart1@jh.edu}
  \icmlcorrespondingauthor{}{yasamanbahri@gmail.com}  

  \icmlkeywords{Machine Learning, ICML}

  \vskip 0.3in
  \vspace{-.1in}
]

\printAffiliationsAndNotice{$^*$Work done in part during an internship at Google DeepMind.\\}

\begin{abstract}
The internal representations learned by language models consistently exhibit striking geometric structure:
calendar months organize into a circle, historical years form a smooth one-dimensional manifold, and cities' latitudes and longitudes can be decoded using a linear probe.
To explain this neural code, we first show that language statistics exhibit translation symmetry (for example, the frequency with which any two months co-occur in text depends only on the time interval between them).
We prove that this symmetry governs these geometric structures in high-dimensional word embedding models, and we analytically derive the manifold geometry of word representations.
These predictions empirically match large text embedding models and large language models.
Moreover, the representational geometry persists at moderate embedding dimension even when the relevant statistics are perturbed (e.g., by removing all sentences in which two months co-occur).
We prove that this robustness emerges naturally when the co-occurrence statistics are controlled by an underlying latent variable.
Our results indicate that these representational manifolds originate in the statistical symmetries of natural language.

\end{abstract}
\section{Introduction}

\begin{figure*}[t]
  \centering
  \includegraphics[width=\textwidth]{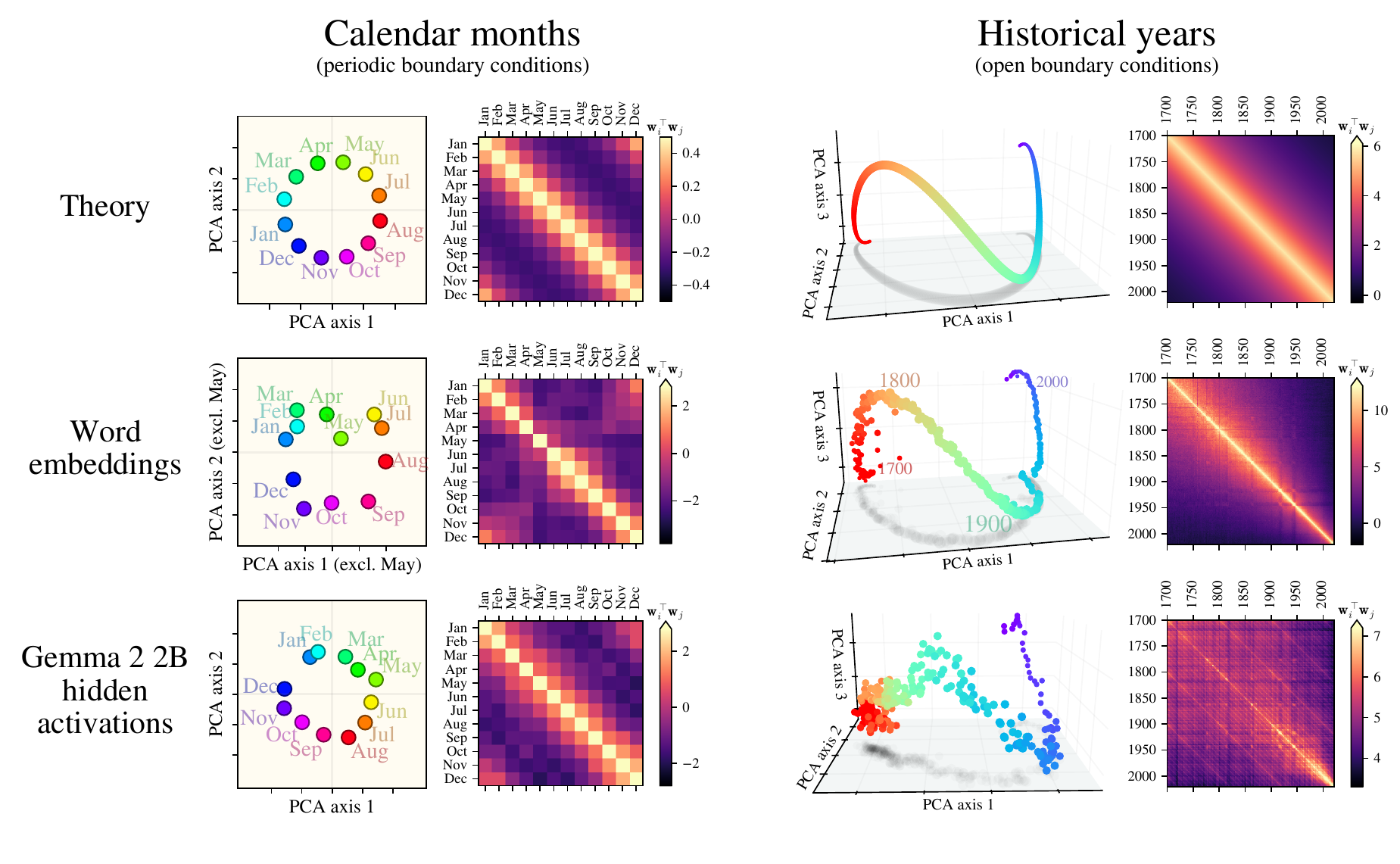}

  \caption{\textbf{Cyclic representation manifolds arise from translation symmetry in co-occurrence statistics.}
  We visualize the top principal components of the word representations for calendar time and historical time, and we show the representation vectors' Gram matrix.
  In the top row, we predict both embedding geometry and their Gram matrix using the parametric curves analytically derived in \cref{cor:exp-periodic} and \cref{prop:exp-open}.
  We empirically compare these predictions to both word embeddings trained on Wikipedia (middle row) and to LLM internal representations (bottom row).
  The excellent agreement provides evidence that translation symmetry in co-occurrence statistics drives the formation of representational manifolds in neural networks.
  See \cref{app:experiments} for experimental details.
  }
  \label{fig:fig1}
\end{figure*}

The semantic information encoded in the trained parameters of an LLM is reflected in the vector geometry of its internal representations.
A growing body of empirical work has documented a number of intriguing geometric structures in this neural code, including prisms \citep{mikolov2013distributed, park2024linear, merullo2024language}, loops \citep{engels2025not, modell2025origins}, and continuous 1D manifolds \citep{gurnee2025when}.
Notably, these structures emerge consistently across a range of model architectures and tasks.
Despite this apparent universality, we lack an organizing principle that explains why these patterns arise.

In this work, we provide such an organizing principle: that representational geometry reflects pairwise co-occurrence statistics between words.
In particular, we give evidence that \textit{symmetry governs these statistics and drives the formation of representational manifolds}. This hypothesis underpins our theory and explains the following phenomena:

\begin{itemize}
    \item \textbf{Circles in representation space for cyclical concepts.}
    \cite{engels2025not, modell2025origins} find that LLMs encode concepts such as days of the week, months of the year, or hues on the color wheel with circular geometry.
    \cite{gromov2023grokking} and related work suggest that circular representations may enable LLMs to efficiently compute modular addition tasks (``Five months after November is:'').

    \item \textbf{``Rippled'' 1D manifolds for continuous sequences.}
    \cite{engels2025not, gurnee2025when} find that LLM representations corresponding to continuums (e.g., years of history or number lines) are organized in compact 1D manifolds with so-called ``ripples,'' or extrinsic curvature.
    The LLM exploits this manifold geometry to compute increments and decrements.

    \item \textbf{Linear decoding of spatiotemporal coordinates.} \cite{gurnee2024language} find that LLM representations of geographical locations and historical events \textit{linearly} encode their spatial and temporal coordinates, respectively.
    Thus, LLMs may straightforwardly compute maps and timelines by linear transformations.
\end{itemize}

To test this hypothesis, we first examine word embedding models, where representation learning is driven exclusively by pairwise co-occurrence statistics.
Surprisingly, we find that each of these geometries arises in this simple setting.
We develop a mathematical theory that unifies and explains these seemingly disparate observations, deriving the representation geometries from translation symmetry in the pairwise co-occurrence statistics of words.
For example, when the co-occurrence probability of two events or places depends only on their temporal or spatial distance, these models spontaneously learn Fourier representations, i.e., the embedding vectors vary sinusoidally along each principal component.
We analytically predict both the amplitude and the frequency of each component, showing that loops and circles in the manifold geometry correspond to the long-wavelength modes, while ripples are the higher harmonics.

Importantly, even if the co-occurrence statistics of a given set of words is perturbed (e.g., removing all sentences where two months co-occur), their representational geometry is preserved at intermediate embedding dimension. One of our central results is to explain this surprising robustness: it emerges because the co-occurrence of many words in the vocabulary are affected by the corresponding latent variable (e.g., the season/time of year), leading to large eigenvalues in the pairwise statistics of words whose eigenvectors are insensitive to noise or perturbation.

In summary, our concrete contributions are:
\begin{itemize}
    \item Empirical observations of representation manifolds (\cref{fig:fig1}) and linear coordinate decoding (\cref{fig:fig2}) in shallow word embedding models (e.g., \texttt{word2vec}).
    \item Empirical evidence that words embodying a continuous latent concept have pairwise co-occurrence statistics that exhibit translation symmetry (\cref{fig:app-month-mstar,fig:app-year-mstar}).
    \item Analytical expressions that predict the representational geometry directly from these translation symmetric statistics (\cref{cor:exp-periodic} and \cref{prop:exp-open}). 
    \item Empirical validation that the predicted representational structure is present, stable, and interpretable in both word embedding models and deep transformer-based models (\cref{fig:fig1,fig:fig3}).
    \item Theoretical analysis that predicts how the error in the linear coordinate decoding task scales with the probe dimension (\cref{prop:decode}) and empirical validation in \cref{fig:fig2}.
    \item Empirical evidence that representation manifolds are robust to significant perturbations in their co-occurrence statistics even at intermediate embedding dimension (\cref{fig:fig4}).
    \item A continuous latent variable model that explains this surprising observation, showing that the emergence of representational geometric structure is a collective phenomenon involving many words (\cref{sec:collective}).
\end{itemize}

Our work demonstrates how structure in the data---namely, symmetry in low-order token correlations---shapes learned representations in neural networks across a variety of model architectures and learning tasks.

\newpage
\section{Preliminaries and Related Work}

We use capital boldface to denote matrices and lowercase boldface for vectors.
Parenthesized subscripts denote tensor elements (e.g., $\mA_\idx{ij}$ is a scalar).

The training corpus is a long sequence of tokens (or words) drawn from a vocabulary of size $V$.
We reserve $i$ and $j$ for vocabulary indices.
For any two tokens $i$ and $j$, we use $P_{ij}$ to denote their co-occurrence probability, which represents how frequently they appear within a shared context window in natural text.
Similarly, $P_i \defn \sum_j P_{ij}$ is the unigram probability of word $i$.
See \cref{app:experiments-wordembeds} for formal definitions.

\subsection{Predicting word embedding geometry}
\label{ssec:prelim-geom}

Pairwise co-occurrences are arguably the simplest nontrivial statistic of natural language data.
Well-known methods such as \texttt{word2vec}, GloVe, and Latent Semantic Analysis (LSA) \citep{mikolov2013distributed,pennington2014glove,landauer1997solution} learn exclusively from this statistic.
In particular, word embedding models learn an embedding matrix $\mW\in\R^{V\times d}$ whose $i^\text{th}$ row $\vw_i$ is the $d$-dimensional embedding vector for word $i$.
In recent work, \cite{karkada2025closedform} showed that such models learn to represent the top eigenmodes of a \textit{normalized co-occurrence matrix}, approximately equal to the pointwise mutual information (PMI) matrix:
\begin{equation}
    \Mstar_\idx{ij} \defn \frac{P_{ij}-P_i P_j}{\frac{1}{2}(P_{ij} + P_i P_j)}\approx \log\left(\frac{P_{ij}}{P_i P_j}\right).
    \label{eq:mstar}
\end{equation}
In terms of the eigendecomposition $\Mstar=\mPhi\mLambda^\star\mPhi^\top$, the learned embeddings are
\begin{equation}
    \mW_\idx{i\mu} = \mPhi_\idx{i \mu} \sqrt{|\mLambda^\star_\idx{\mu\mu}|}.
    \label{eq:qwem-embeds}
\end{equation}
This key result states that the $\mu^\text{th}$ principal component of the embeddings encodes the $\mu^\text{th}$-largest eigenmode of $\Mstar$, i.e., $\mPhi_\idx{\cdot \mu}$. This insight provides an analytical link between co-occurrence statistics and learned embedding geometry, and thus it underlies our theoretical analysis.
In \cref{app:qwem} we review this result, including a complete discussion of $\Mstar$, its interpretation, and its relation to the PMI matrix.

Although this result gives exact expressions for the learned embeddings, it requires numerically diagonalizing $\Mstar$ in general. In certain cases, however, $\Mstar$ may be dominated by a simple structure that may be \textit{analytically} diagonalized.
A recent example of this is the work of \cite{korchinski2025emergence}, which shows that the parallelogram-like geometry underlying linear analogies is a consequence of Kronecker structure in $\Mstar$.
In particular, when the co-occurrence statistics are driven by weakly correlated discrete latent variables, analogy completion by vector addition follows naturally.

\newpage
Our approach mirrors this logic. We show that the curved geometry underlying the observed ``feature manifolds'' is a consequence of \textit{translation symmetry} in $\Mstar$. In particular, when the corpus co-occurrence statistics\footnote{We use the shorthand ``co-occurrence statistics'' to refer to the composite quantity $\Mstar$ rather than the raw probabilities $P_{ij}$. This is because the spectral structure of $\Mstar$ is dominated by $P_{ij}$ rather than the unigram component $P_iP_j$.} are driven by a \textit{continuous} latent variable, both the linear decoding property and robustness to noise
follow naturally.


To fully describe the mutual word embedding geometry for some subset $\sys$ of the vocabulary, it is sufficient to specify their PCA coordinates.
Let $\mW_\sys \defn \mP\mW_\idx{[\sys],\cdot}\in \R^{|\sys|\times d}$ be the centered embeddings for $\sys$, where $\mP=\mI- |\sys|^{-1}\bm 1\T{\bm 1}$ is the mean-centering matrix.
Let its compact SVD be $\mW_\sys= \mPhi\mLambda^{1/2}\mU^\top$.
Then to describe its PCA geometry, it suffices to predict the elements of $\overline\mW_\sys\defn \mW_\sys\mU=\mPhi\mLambda^{1/2}$.
The projected embeddings $\overline \mW_\sys\in\R^{|\sys| \times \min(|\sys|, d)}$ retain all information contained in the subspace spanned by $\mW_\sys$, thus fully recovering the Gram matrix: $\mW_\sys\mW_\sys^\top = \overline\mW_\sys \overline\mW_\sys^\top$.

Note that if we approximate $\Mstar$ as positive semi-definite and if the embedding dimension satisfies $d\geq\rank\Mstar$, then the uncentered Gram matrix must exactly recover the relevant submatrix of $\Mstar$, denoted $\MstarS$.
Therefore $\overline\mW_\sys \overline\mW_\sys^\top=\mP\MstarS\T\mP$.
The theoretical results in \cref{sec:model} follow from this key fact.

\subsection{Related work}

\cite{engels2025not, gurnee2024language, gurnee2025when} identify examples of highly structured geometry for the representations of spatial, temporal, and numerical concepts within language models.
These are motivating examples for our work; we unify them with a single principle rooted in symmetry and we study them analytically in the setting of word embedding models.

\cite{cagnetta2024towards,favero2025compositional,rende2024distributional} show that LLMs learn low-order token correlations---e.g., the pairwise statistics captured by word embedding models---before higher-order ones. This suggests that large language models learn contextualized representations and computational circuits atop coarse low-order statistics of natural language. In this view, it is natural that symmetry in pairwise co-occurrence statistics remains a primary factor in shaping model representations.

\cite{saxe2019mathematical} show that when a linear model is trained on synthetic data derived from a periodic lattice, its hidden representation geometry is circular. However, this work does not consider the self-supervised natural language setting where the corpus statistics are key.
In LLMs, \cite{park2025iclr} observe representational manifolds shaped \textit{in-context} when the prompt is a random walk on a lattice;
\cite{noroozizadeh2025deep} observe similar geometries when training transformers on graphs, showing that the geometry captures top eigenvectors of the underlying graph adjacency matrix.
In natural data, \cite{prieto2026data} observe cyclic correlations between months in Wikipedia and empirically find that nonlinear autoencoders learn circular representations by approximately performing PCA.

We go beyond these works by 1) providing a mechanism that explicitly connects symmetry in the data to representational geometry; 2) demonstrating and theoretically explaining the phenomenon in word embedding models, arguably the simplest language models; 3) giving a detailed and unified mathematical description of the representation manifolds by handling open boundary conditions and 2-dimensional lattices; and 4) importantly, identifying and characterizing the collective nature of the phenomenon.

\section{Co-occurrence model with symmetry}
\label{sec:model}

We aim to describe the representational geometry of words and phrases that share an underlying continuous concept such as the time of year, historical time, or geography.
A given semantic continuum is represented by a subset $\sys$ of the vocabulary (e.g., the 12 months of the year or the 50 US states).
We use $D$ to denote the dimension of the underlying semantic continuum, and we say a semantic continuum has ``periodic boundary conditions" (BC) if it wraps around, and ``open BC'' otherwise.
As examples, calendar time and the color wheel have $D=1$ with periodic BC; historical time and number lines have $D=1$ with open BC; and geography of some sub-region of the globe has $D=2$ with open BC.

We endow each such concept continuum with a coordinate system, defining $\vx_i\in\R^D$ as the latent coordinate of word $i\in \sys$.
For example, $x_\text{april}=4$, and $\vx_\text{paris}=[48.86, 2.35]$ using geographic coordinates.
We then define $\mathrm{dist}(\vx, \vx')$ as the minimum Euclidean distance between $\vx$ and $\vx'$, giving a single notion of distance for both periodic and open BCs.

Intuitively, we expect that in the absence of other strongly discriminating features, no particular location on a semantic continuum is distinguished over any other. If this semantic symmetry holds, its signatures should appear in the corpus statistics; for example, the co-occurrence probability of two months would depend only on the time interval between them, with no dependence on absolute calendar position. We model this \textit{translation symmetry} in the co-occurrence statistics as follows:

\begin{assumption}
\label{asmp:translationsymmetry}
\textbf{Translation symmetry with kernel $C(\cdot)$.} Let $\tilde C(\cdot)$ be some well-behaved function. For any two words $i,j\in\sys$, assume $P_{ij}=P_i P_j \tilde C(\mathrm{dist}(\vx_i, \vx_j))$.
It follows that the co-occurrence matrix $\Mstar$ (defined in \cref{eq:mstar}) inherits the translation symmetry:
\begin{equation}
    \Mstar_\idx{ij}=f(\tilde C(\mathrm{dist}(\vx_i, \vx_j))) \eqcolon C(\mathrm{dist}(\vx_i, \vx_j)),
    \label{eq:convkernel}
\end{equation}
where $f(y)=2(y-1)/(y+1)$. Thus, the \textit{co-occurrence convolution kernel} $C$ is simply a reparameterization of $\tilde C$.
Furthermore, assume that $\Mstar$ is positive semi-definite.
\end{assumption}

\cref{asmp:translationsymmetry} posits that the co-occurrence probability of any two words in $\sys$ depends only on their distance on the semantic continuum.
This translation symmetry is a strong constraint on the co-occurrence statistics; nonetheless, we show in \cref{fig:app-month-mstar,fig:app-year-mstar} that it accurately describes real data.
We assume $\Mstar$ is positive semi-definite simply for ease of presentation; in \cref{app:qwem-asmp-psd-nsd} we relax this assumption and show that our predictions still hold.%
\footnote{In \cref{sec:collective}, we show that the continuum coordinates can be seen as a latent variable that underlies and modulates the co-occurrence statistics. In the latent variable model, the symmetry condition follows by construction rather than by direct assumption.}

The co-occurrence kernel $C(\cdot)$ captures the functional form of the pairwise correlations. It plays an important role in determining the embedding geometry; in \cref{prop:fourier-embed}, we show that its Fourier spectrum is connected to the PCA spectrum of the representations.

In the theoretical treatment, we always choose a coordinate system satisfying $\vx_i\in [-1, 1]^D$. We are afforded this freedom since our theory depends only on the structure of $\Mstar$; under \cref{asmp:translationsymmetry}, rigid transformations of the latent coordinates leave $\Mstar$ invariant, and isotropic rescaling induces only a scalar reparameterization of $C$.

Theoretical analysis is simplest when the words of interest are uniformly spaced on the latent continuum (e.g., the calendar months or historical years).
In this case, the words occupy sites on a latent semantic \textit{lattice}.

We begin with a general result: \cref{prop:fourier-embed} predicts the geometry of learned word embeddings for words on a semantic lattice with arbitrary dimension $D$, arbitrary kernel $C(\cdot)$, and periodic BCs. The theory directly specifies the vector components of the embeddings in their PCA basis.



\begin{figure*}[t]
    \centering
    \includegraphics[width=\textwidth]{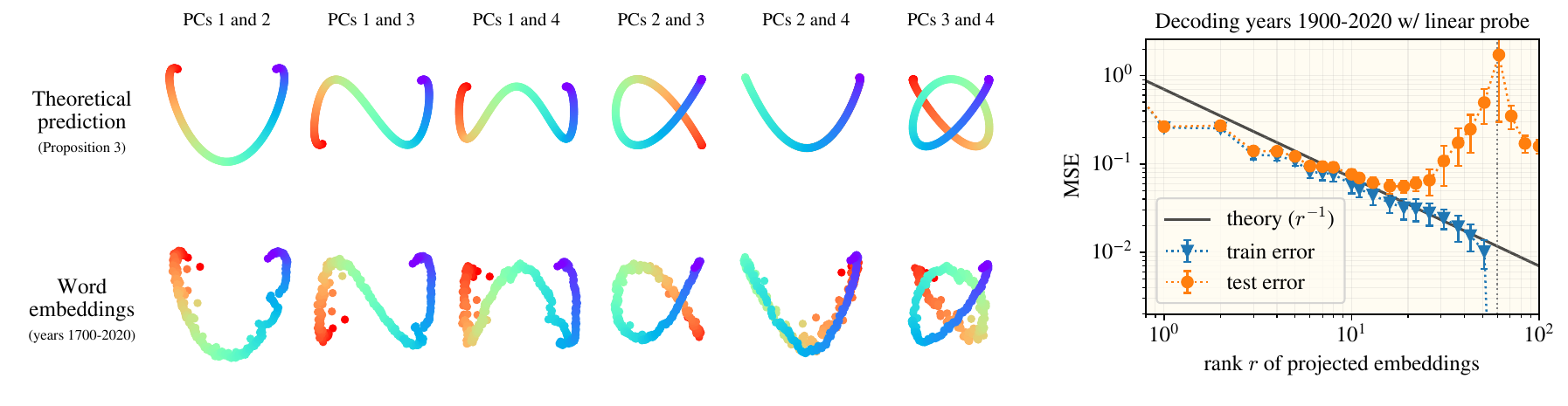}
    
    \caption{
    \textbf{Lattice translation symmetry implies Lissajous curves in embedding space and linear coordinate decoding.}
    We empirically validate two novel predictions of our theory.
    \textbf{(Left.)} When projected on any two principal components, word embeddings (e.g., for historical years) display Lissajous curves whose amplitudes, phases, and frequencies are analytically predicted by \cref{prop:exp-open}.
    \textbf{(Right.)} Linear probes can decode the numerical year from the sinusoidal embedding modes. Each included higher-frequency mode increases the temporal resolution; the error decay rate is predicted by \cref{prop:decode}. At the interpolation threshold $r=n_\text{train}$, training error quickly vanishes and the test error exhibits the expected double descent peak.
    }
    \label{fig:fig2}
\end{figure*}


\begin{restatable}[Fourier embedding geometry (simplified)]{proposition}{fourierembed}
\label{prop:fourier-embed}
Let $\sys$ correspond to a periodic semantic lattice.
Assume \cref{asmp:translationsymmetry} holds on $\sys$ with kernel $C(\cdot)$.
Assume the embedding dimension $d\geq \rank(\Mstar)$.
Define the set of allowed wavevectors $\{\vk_\mu\}_{\mu=1}^{|\sys|}$, each satisfying $\vk=\pi\vn$ for some $\vn \in \mathbb{Z}^D$.
Define $\tilde m(\vk)$ to be the discrete Fourier transform of $m(\vx)\defn C(\mathrm{dist}(\vx, \bm{0}))$.
Define $\lambda_\mu \defn \tilde m(\vk_\mu)$.
Then the PCA-projected embeddings for the words $i\in\sys$ have Fourier geometry:
\begin{equation}
\begin{aligned}
    \overline\vw_i = \sqrt{2/|\sys|}
    \Big[
    &\sqrt{\lambda_1}\cos(\vk_1^\top \vx_i),\;
    \sqrt{\lambda_1}\sin(\vk_1^\top \vx_i),\;\\
    &\sqrt{\lambda_2}\cos(\vk_2^\top \vx_i),\;
    \sqrt{\lambda_2}\sin(\vk_2^\top \vx_i),\;\\
    &\sqrt{\lambda_3}\cos(\vk_3^\top \vx_i),\;
    \dots \Big].
    \label{eq:fourier-embeds}
\end{aligned}
\end{equation}
\end{restatable}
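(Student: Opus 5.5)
The approach is to use the key fact from \cref{ssec:prelim-geom}: since $\Mstar$ is PSD and $d\ge\rank\Mstar$, the centered Gram matrix of the PCA-projected embeddings equals $\mP\MstarS\mP$. The geometry is therefore pinned down, up to the usual rotational ambiguity within degenerate eigenspaces, by the eigendecomposition of $\mP\MstarS\mP$. I will diagonalize this matrix explicitly with Fourier modes and then read off $\overline\mW_\sys=\mPhi\mLambda^{1/2}$.

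\textbf{Step 1: $\MstarS$ is circulant.} On a periodic lattice, the sites $\vx_i$ form a discrete torus, i.e.\ a finite abelian group $G\cong\prod_a \mathbb{Z}_{L_a}$, with $|\sys|=\prod_a L_a$. By \cref{asmp:translationsymmetry}, $(\MstarS)_{ij}=m(\vx_i-\vx_j)$, where $m(\vx)=C(\mathrm{dist}(\vx,\bm 0))$ and the difference is taken modulo the period. Here I use that the periodic distance is translation-invariant and even, so $m(\vx)=m(-\vx)$. Hence $\MstarS$ is a (multi-level) circulant matrix, i.e.\ a convolution operator on $G$.

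\textbf{Step 2: Diagonalization by characters.} The characters $e^{i\vk^\top\vx}$, with $\vk$ ranging over the dual lattice, are simultaneous eigenvectors of all convolution operators on $G$. With the coordinates scaled to $[-1,1]^D$, these are exactly the wavevectors $\vk=\pi\vn$ compatible with the periodicity. The eigenvalue of such a character is $\sum_{\vx}m(\vx)e^{-i\vk^\top\vx}=\tilde m(\vk)$, which is real because $m$ is real and even. The two facts needed here are that the characters are eigenvectors and that they are orthogonal with norm $\sqrt{|\sys|}$; both are routine. Since $\tilde m(\vk)=\tilde m(-\vk)$, each pair $\pm\vk$ (with $\vk\neq -\vk$) spans a real two-dimensional eigenspace. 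An orthonormal real basis for it is $\sqrt{2/|\sys|}\cos(\vk^\top\vx_i)$ and $\sqrt{2/|\sys|}\sin(\vk^\top\vx_i)$. PSD-ness of $\Mstar$ gives $\lambda_\mu=\tilde m(\vk_\mu)\ge 0$, so the square roots are real.

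\textbf{Step 3: Centering and assembly.} The mode $\vk=\bm 0$ is the constant vector $\bm 1$, and $\mP$ annihilates it. Every other mode is orthogonal to $\bm 1$, so $\mP$ fixes it. Therefore $\mP\MstarS\mP$ has the same eigenvectors, with the zero mode removed. I order the remaining pairs by decreasing $\lambda_\mu$ and form $\mPhi\mLambda^{1/2}$, which gives exactly \cref{eq:fourier-embeds}. The Gram-matrix identity $\overline\mW_\sys\overline\mW_\sys^\top=\mP\MstarS\mP$ then determines $\overline\mW_\sys$ up to the rotation freedom inside degenerate eigenspaces. Each cos/sin pair is such a degenerate eigenspace, which is why the statement is asserted only in this basis.

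\textbf{Main obstacle.} The main obstacle is bookkeeping rather than analysis, in two places.

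The first is self-conjugate wavevectors, where $\vk\equiv-\vk$ modulo the dual lattice. An example is the Nyquist mode for even $L$. For such $\vk$ the sine component vanishes identically, and the cosine has normalization $1/\sqrt{|\sys|}$ instead of $\sqrt{2/|\sys|}$. I would handle this by remarking that the "(simplified)" statement suppresses these edge modes.

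The second is making the $\pi\vn$ indexing and the lattice spacing consistent with the $[-1,1]^D$ coordinate convention, so that the listed wavevectors are exactly the distinct characters. I would check this first in $D=1$ with $|\sys|$ sites at spacing $2/|\sys|$, and then take tensor products across dimensions.
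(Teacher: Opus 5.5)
Your proposal is correct and matches the paper's proof. The periodic lattice makes the relevant submatrix (block-)circulant, so plane waves are eigenvectors with eigenvalue $\tilde m(\vk)$. The degenerate $\pm\vk$ pairs recombine into real $\cos/\sin$ modes, self-conjugate (Nyquist) modes get the $1/\sqrt{|\sys|}$ normalization, and centering simply deletes the constant mode.

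The only difference is that the paper's precise version runs the same argument on the $\sys$-submatrix of $|\Mstar|$ under the weaker \cref{asmp:symmetry-psd-nsd}, giving amplitudes $\sqrt{|\tilde m(\vk_\mu)|}$. These reduce to yours when $\Mstar$ is PSD.
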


Put simply, when the latent semantic lattice has periodic BC, each principal component of the learned representations encodes a sinusoidal function of the latent coordinate. Since $m$ is an even function $m(\vx)=m(-\vx)$, the Fourier modes come in degenerate $\sin$ and $\cos$ pairs. In each degenerate subspace, the amplitude is given by a Fourier coefficient of the co-occurrence kernel.
See \cref{app:fourier-embed-proof} for the precise statement and proof; the proof follows from the fact that circulant matrices are diagonalized by the discrete Fourier transform.

We now focus on $D=1$ semantic continuums, which are both simple and prevalent in natural language. Furthermore, in concepts such as calendar time and historical time, we empirically find that the true co-occurrence statistics are well-described by an exponential co-occurrence kernel (i.e., $C(\Delta x;\sigma)\sim\exp(-\Delta x/\sigma)$; see \cref{fig:app-month-mstar,fig:app-year-mstar}).
We will exploit this additional structure to \textit{analytically predict the full parametric curve} for the learned 1-dimensional feature manifolds.

\begin{restatable}[Periodized exponential kernel (simplified)]{corollary}{expperiodic}
\label{cor:exp-periodic}
Let $\sys$ correspond to a $D=1$ semantic lattice with periodic BC.
Let \cref{asmp:translationsymmetry} hold on $\sys$ with exponential kernel $C(\Delta x)=\sum_{n\in\mathbb{Z}} \exp(-|\Delta x+2n|/\sigma)$, where $\sigma$ is a free parameter.
Assume the embedding dimension is large, i.e., $d\geq \rank(\Mstar)$.
Then the PCA-projected embeddings for the words $i\in\sys$ are given by
\begin{equation}
\begin{aligned}
    \overline\vw_i = \sqrt{2/|\sys|}
    \Big[
    &a_1\cos(k_1x_i),\;
    a_1\sin(k_1x_i),\;\\
    &a_2\cos(k_2x_i),\;
    a_2\sin(k_2x_i),\;\\
    &a_3\cos(k_3x_i),\;
    \dots \Big]
\end{aligned}
\end{equation}
where
\begin{equation}
k_n= \pi n \qqtext{and}
\lim_{|\sys|\to\infty}a_n = \sqrt\frac{2\sigma}{1+\sigma^2 k_n^2}.
\end{equation}
\end{restatable}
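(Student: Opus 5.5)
This follows from \cref{prop:fourier-embed}, so the only work is to compute the eigenvalues $\lambda_n=\tilde m(k_n)$ for the periodized exponential kernel and take their large-$|\sys|$ limit. The amplitudes are then $a_n=\sqrt{\lambda_n}$ in the normalization of \cref{eq:fourier-embeds}.

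\textbf{Setup.} Let $N=|\sys|$ and place the lattice at $x_i=-1+2i/N$, so the spacing is $h=2/N$ and the period is $2$. The $2$-periodicity makes the allowed wavevectors $k_n=\pi n$. By translation symmetry $\Mstar$ is circulant, with first row $m(x_j)=\sum_{\ell\in\mathbb Z}e^{-|x_j+2\ell|/\sigma}$. Its eigenvalues are therefore
\[
\lambda_n=\sum_{j} m(x_j)\,e^{-\mathrm{i}k_n x_j}.
\]

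\textbf{Unfolding the periodization.} The sum over periodic images combined with the sum over one period is a sum over the infinite lattice $h\mathbb Z$. Since $e^{-\mathrm{i}\pi n\cdot 2\ell}=1$, this gives
\[
\lambda_n=\sum_{s\in\mathbb Z}e^{-h|s|/\sigma}e^{-\mathrm{i}\pi n h s}.
\]
This is a two-sided geometric series. With $q=e^{-h/\sigma}$ and $\theta=\pi n h$, it sums in closed form to the Poisson-kernel expression
\[
\lambda_n=\frac{1-q^2}{1-2q\cos\theta+q^2}.
\]
This formula is exact for every finite $N$.

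\textbf{Normalization.} The main obstacle is matching the normalization convention so that $a_n$ has a finite limit. The raw eigenvalue grows like $N$, because it is a Riemann sum without the factor $h$. Expanding for $h\to0$ with $n$ fixed gives $\lambda_n\approx (2h/\sigma)\big/\big(h^2/\sigma^2+\pi^2n^2h^2\big)=\tfrac{2\sigma}{h(1+\sigma^2k_n^2)}$. This equals $h^{-1}$ times the continuum Fourier transform $\int_{\mathbb R}e^{-|x|/\sigma}e^{-\mathrm{i}kx}\,dx=2\sigma/(1+\sigma^2k^2)$.

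To fix the convention, I would return to the proof of \cref{prop:fourier-embed} in \cref{app:fourier-embed-proof}. There I would check how the DFT $\tilde m$ and the prefactor $\sqrt{2/|\sys|}$ distribute the lattice-spacing factor. The claim $\lim a_n=\sqrt{2\sigma/(1+\sigma^2k_n^2)}$ corresponds to $a_n^2=h\lambda_n$, i.e.\ the DFT normalized as a Riemann sum with measure $h$. This is equivalent to measuring distances in units where the kernel lengthscale $\sigma$ scales with the domain. I would state this explicitly and absorb the constant $N/2$ into the prefactor.

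\textbf{Remaining details.} The $\cos/\sin$ pairing and the degeneracy of $\pm k_n$ are inherited directly from \cref{prop:fourier-embed}. Convergence of the limit is uniform for each fixed $n$, since the closed form is smooth in $h$.
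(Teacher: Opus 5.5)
Your proposal is correct and follows the paper's proof: the paper also invokes \cref{prop:fourier-embed}, unfolds the periodized kernel into the two-sided geometric series in $q=e^{-2/(\sigma L)}$ (with $L=|\sys|$) to obtain $\frac{1-q^2}{1-2q\cos(2k/L)+q^2}$, and then takes the $L\to\infty$ limit. The paper handles normalization the way you guessed, by building the factor $2/L$ into $\tilde m$ without comment (your convention $a_n^2=h\lambda_n$); your explicit remark that this convention has to be reconciled with the $\sqrt{2/|\sys|}$ prefactor is more careful than the original.
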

For calendar words, \cref{cor:exp-periodic} predicts that the feature manifold is a closed loop whose components oscillate with calendar time with integer frequencies.
This prediction is empirically validated in \cref{fig:fig1,fig:app-month-lissajous}.
\cref{app:exp-periodic-proof} has the precise statement of our general result and its proof, which follows directly from \cref{prop:fourier-embed} on the periodized exponential kernel.\footnote{The \textit{periodized} kernel (i.e., sum over $n$) is the natural choice because, e.g., the total co-occurrence of ``April'' and ``January'' must account for not only the closest January to April, but also the following January, and the previous year's January, etc.}
We now turn to $D=1$ lattices with \textit{open BC}, where \cref{prop:fourier-embed} does not apply.

\begin{restatable}[Exponential kernel, open BC (simplified)]{proposition}{expopen}
\label{prop:exp-open}
Let $\sys$ correspond to a $D=1$ semantic lattice with open BC.
Let \cref{asmp:translationsymmetry} hold on $\sys$ with exponential kernel $C(\Delta x)=\exp(-|\Delta x|/\sigma)$, where $\sigma$ is a free parameter.
Assume large embedding dimension, $d\geq \rank(\Mstar)$.
Then in the continuum limit, the PCA-projected embeddings for the words $i\in\sys$ are given by
\begin{equation}
\begin{aligned}
    \lim_{|\sys|\to\infty} \overline\vw_i
    = \Big[
    a_1\sin_{k_1}(k_1x_i),\;
    a_2\cos_{k_2}(k_2x_i),\;\\
    a_3\sin_{k_3}(k_3x_i),\;
    a_4\cos_{k_4}(k_4x_i),\;
    \dots \Big]
\end{aligned}
\end{equation}
where $\sin_{k_n}$ and $\cos_{k_n}$ are the mean-zero unit-normalized versions of $\sin$ and $\cos$ on $[-1, 1]$, the wavenumbers obey the self-consistent quantization condition
\begin{equation}
k_n
=
\begin{cases}
\dfrac{(n+1)\pi}{2} - \tan^{-1}(\sigma k_n),
& n \text{ odd}, \\
\dfrac{n \pi}{2} + \tan^{-1}\left(\dfrac{k_n}{1+\sigma(1+\sigma)k_n^2}\right),
& n \text{ even},
\end{cases}
\end{equation}
such that $k_n<k_{n+1}$, and
\begin{equation}
a_n = \sqrt\frac{2\sigma}{1+\sigma^2 k_n^2}.
\end{equation}
\end{restatable}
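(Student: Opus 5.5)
By the key fact from the preliminaries, since $\Mstar$ is PSD and $d\geq\rank\Mstar$, we have $\overline\mW_\sys\overline\mW_\sys^\top=\mP\MstarS\mP$. So it suffices to diagonalize the centered kernel matrix $\mP\MstarS\mP$ with entries $\exp(-|x_i-x_j|/\sigma)$. If $\mP\MstarS\mP=\mPhi\mLambda\mPhi^\top$, then $\overline\mW_\sys=\mPhi\mLambda^{1/2}$ up to sign conventions.

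In the continuum limit $|\sys|\to\infty$ with lattice spacing $2/|\sys|$, the matrix eigenproblem becomes the integral eigenproblem for the centered operator $\mathcal{P}K\mathcal{P}$ on $L^2[-1,1]$, where $(Kf)(x)=\int_{-1}^1 e^{-|x-y|/\sigma}f(y)\,dy$ and $\mathcal{P}$ subtracts the mean. Eigenvalues of the matrix scale as $|\sys|/2$ times those of the operator, and unit eigenvectors scale as $\sqrt{2/|\sys|}$ times $L^2$-normalized eigenfunctions. In the product $\mPhi\mLambda^{1/2}$ these factors cancel, so the $|\sys|$-dependence drops out, as in the statement. I will make this Riemann-sum/Nyström convergence precise only at the level of convergence of isolated eigenpairs, which is standard for continuous symmetric kernels. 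The eigenfunctions of $\mathcal{P}K\mathcal{P}$ with nonzero eigenvalue are mean-zero, and the eigen-equation reads $Kf=\lambda f+c$ for some constant $c$.

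The key tool is that $e^{-|x|/\sigma}$ is the Green's function of $1-\sigma^2\partial_x^2$, up to a factor $2\sigma$. Concretely, $(\sigma^2\partial_x^2-1)(Kf)=-2\sigma f$ on the interior. Applying this operator to $Kf=\lambda f+c$ gives the ODE $\lambda\sigma^2 f''=(\lambda-2\sigma)f+c$. Its solutions are $f=A\cos kx+B\sin kx+\text{const}$ with $\lambda=2\sigma/(1+\sigma^2k^2)$, which yields the amplitude $a_n=\sqrt{\lambda_n}$ immediately.

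The quantization comes from the boundary conditions that $Kf$ inherits. Differentiating $Kf$ at $x=\pm1$ shows $(Kf)'(1)=-(Kf)(1)/\sigma$ and $(Kf)'(-1)=(Kf)(-1)/\sigma$. These are Robin conditions, which become conditions on $f$ through $Kf=\lambda f+c$. By the reflection symmetry $x\mapsto -x$, the eigenfunctions split into odd and even classes.

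For odd $f=\sin kx$, we have $c=0$, and the Robin condition gives $k\cos k=-\sin k/\sigma$, i.e. $\tan k=-\sigma k$. This is equivalent to $k=(n+1)\pi/2-\tan^{-1}(\sigma k)$ on the appropriate branches.

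For even $f$, we take $f=\cos kx-\frac{\sin k}{k}$, which is mean-zero. The constants must then satisfy both the ODE's constant term and the Robin condition with $c\neq0$. To handle this, I will compute $Kf$ explicitly: integrate the exponential against $\cos ky$ and against a constant. Matching the coefficient of $\cosh(x/\sigma)$ to zero, which is the homogeneous-solution term that must vanish for $Kf-\lambda f$ to be constant, gives a transcendental equation. I will then rearrange it into $k=\frac{n\pi}{2}+\tan^{-1}\!\big(k/(1+\sigma(1+\sigma)k^2)\big)$.

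Finally, I will interleave the odd and even roots. Since $\lambda$ is decreasing in $k$, ordering by $k_n<k_{n+1}$ is the same as ordering by eigenvalue. This gives the alternating $\sin$/$\cos$ pattern starting with $\sin$: the lowest odd root lies in $(\pi/2,\pi)$, and the lowest even root lies above $\pi$. Completeness follows because $\mathcal{P}K\mathcal{P}$ is compact, self-adjoint, and positive on mean-zero functions, so these eigenfunctions exhaust its range.

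I expect the main obstacle to be the even sector. The mean-centering introduces the constant $c$, and obtaining the precise quantization condition requires carefully tracking the $\cosh(x/\sigma)$ and constant terms in $Kf$. That trigonometric simplification is where I would spend the most care, and I would check it numerically against the discrete matrix for a few values of $\sigma$. Verifying the branch bookkeeping, so that each interval $(n\pi/2,(n+1)\pi/2)$ contains exactly one root of the correct parity, is the secondary difficulty.
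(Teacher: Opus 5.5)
Your proposal is correct and follows essentially the paper's route: reduce to the centered integral operator on $[-1,1]$, use that $e^{-|x|/\sigma}$ is the Green's function of $(1-\sigma^2\partial_x^2)/(2\sigma)$ to get sinusoidal eigenfunctions with $\lambda=2\sigma/(1+\sigma^2k^2)$, split by parity, and impose the inherited Robin conditions. The only difference is in the even sector, where the paper imposes shifted Robin conditions involving the constant $\alpha$ (with $\alpha/(\lambda\sigma^2)=-k\sin k$), whereas you cancel the $\cosh(x/\sigma)$ term of $Kf$ directly; that term's coefficient is proportional to $2\sigma\frac{\sin k}{k}-\lambda(\cos k-\sigma k\sin k)$, so both routes give $\tan k=k/(1+\sigma(1+\sigma)k^2)$.
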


\begin{figure*}[t]
  \centering
  \includegraphics[width=0.95\textwidth]{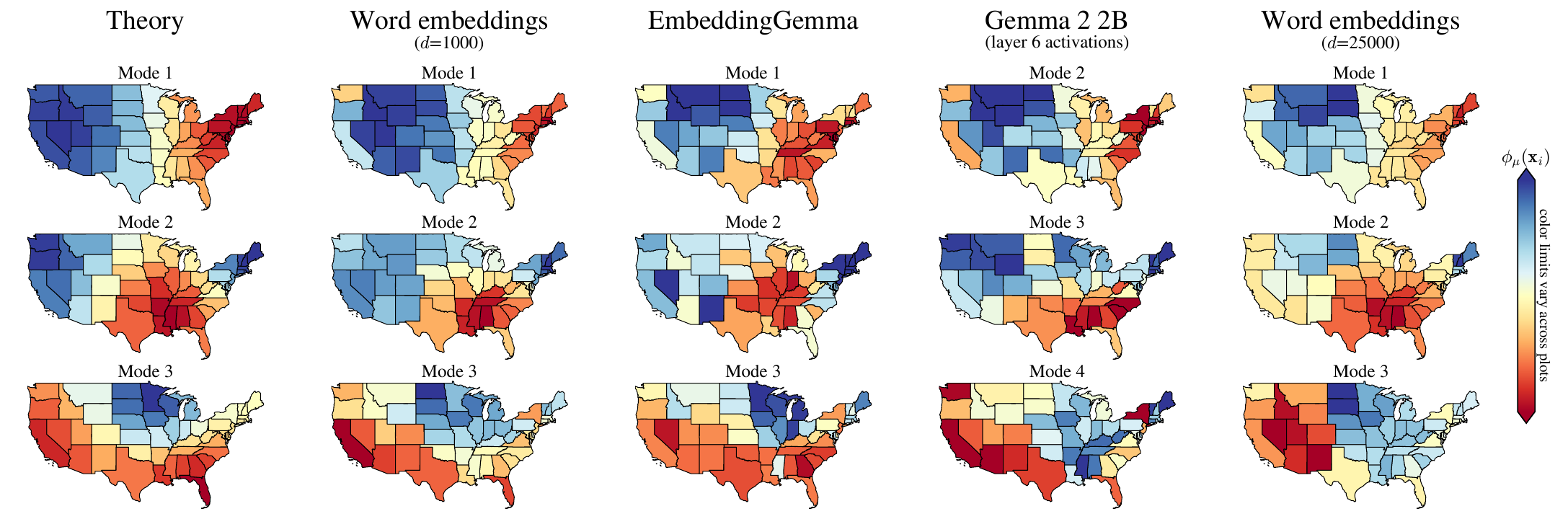}

  \caption{
  \textbf{Translation symmetry in geographic data yields top embedding modes with slow spatial variation.}
  We depict the top PCA modes (i.e., top eigenfunctions of the representational Gram matrix) for the 48 contiguous US states.
  In the leftmost column, we show the modes obtained from a theoretical model in which the co-occurrence probability of two states depends only on the Euclidean distance between their centroids, in geographic coordinates.
  Since the states do not lie on a perfect lattice, the theoretical eigenmodes are not exactly 2D plane waves; however, they still qualitatively exhibit slowly-varying oscillations.
  We compare to word embeddings, text embeddings, and LLM internal representations, and find good agreement.
  We observe that low-dimensional word embeddings more faithfully reflect the geographic semantic attributes (see \cref{sec:collective} for an explanation as to  why low-dimensional embeddings can be more robust to noise).
  See \cref{app:experiments} for experimental details.
  }
  \label{fig:fig3}
\end{figure*}

For historical years, \cref{prop:exp-open} predicts that the feature manifold is an open loop whose components oscillate with historical time with non-integer frequencies.
We validate this prediction in \cref{fig:fig1,fig:fig2,fig:app-year-modes}.
See \cref{app:exp-open-proof} for the precise statement, its proof, and closed-form expressions for $\sin_{k_n}$ and $\cos_{k_n}$. The proof directly diagonalizes the co-occurrence statistics by exploiting special properties of the exponential kernel, enabling us to analytically handle the boundary effects.

\subsection{Beyond the lattice approximation}
\label{ssec:beyond-lattice}

The previous results describe the embedding geometry of words that are equally spaced on a semantic continuum. If the words are instead distributed unevenly (e.g., states on a map), we cannot analytically diagonalize $\Mstar$. Nonetheless, we expect these results to provide a qualitatively correct description of the PCA modes: the top directions encode slow variations. We empirically validate this in \cref{fig:fig3} using numerical diagonalization for theoretical predictions. The agreement serves as evidence that translation symmetry drives embedding geometry in two-dimensional semantic continuums (such as geographic locations) as well.

\subsection{New predictions of the co-occurrence model}
\label{ssec:new-preds}

\begin{figure*}[!ht]
  \centering
  \includegraphics[width=\textwidth]{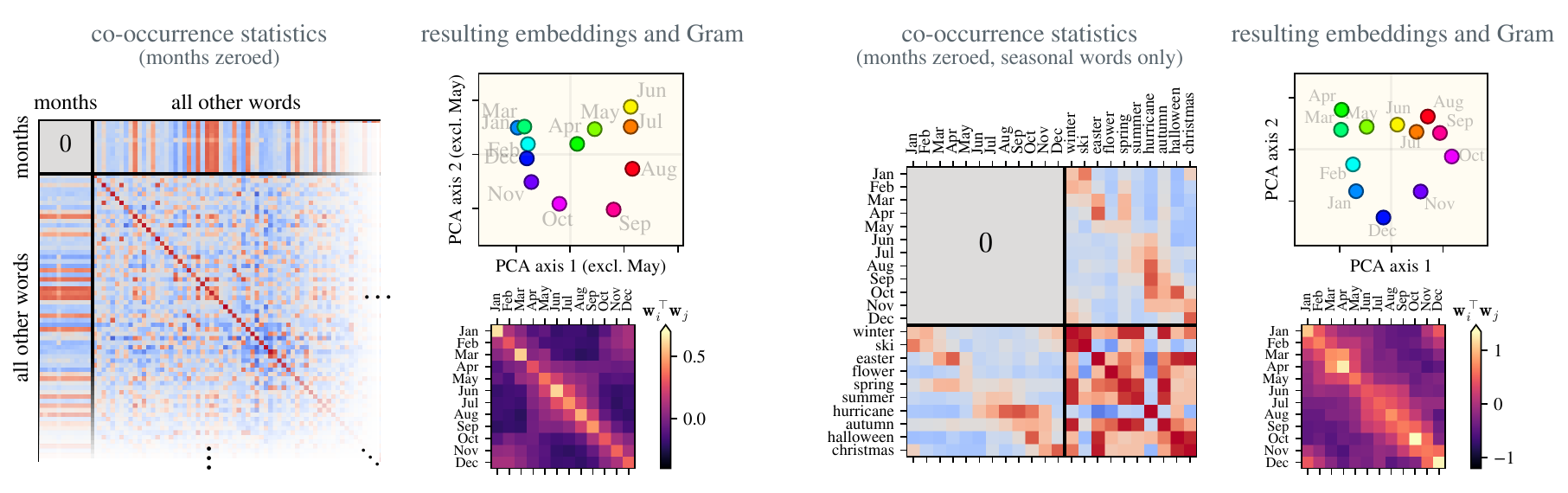}
  \caption{
  \textbf{ Circular embedding of the months reconstructed without the month-month co-occurrences.} \textit{Left}: we consider the English Wikipedia $\mM^*$ matrix ($V=2.5\times10^4$), with the month-month co-occurrences ablated. Embeddings of dimension $d = 10^3\ll V$ nonetheless exhibit a circular geometry (for a PCA over the months, excluding May) with the correct ordering, and lead to a Gram matrix that closely approximates the original month-month co-occurrences.
  That this is possible implies the existence of redundant time-of-the-year latent variables affecting the rest of the vocabulary. \textit{Right:} we identify a subset of highly seasonal words and their phases (see \cref{sec:appdx_reconstruction_data} for details), and show that just a handful of such seasonal words are sufficient to reconstruct the month ordering. See \cref{app:experiments} for experimental details.
  }
  \label{fig:fig4}
\end{figure*}

\cref{prop:fourier-embed,prop:exp-open} predict that the projections of word representations onto any two principal components lie on Lissajous curves: parametric curves of the functional form $(x(t),y(t))=(A\sin(at),B\sin(bt+\delta))$.
In \cref{fig:fig2} (left), we empirically confirm that Lissajous curves arise in the representation geometry of Wikipedia word embeddings.
These results reveal the \textit{monotonicity} of eigenmodes: since the amplitudes $a_n$ decrease with the frequencies $k_n$, the top principal components encode the slowest Fourier modes.
This explains why projecting the representations onto their top two directions yield circles and open loops, whereas 3D visualizations display ``ripples'' with higher extrinsic curvature \citep{gurnee2025when}.

A second prediction concerns how models may exploit the Fourier embedding geometry to do useful computations.
We consider the coordinate decoding task in which one aims to predict $\vx_i$ from the representation vector $\vw_i$ using a linear probe.
For example, \cite{gurnee2024language} show that given LLM representations of events and places projected onto their top $r$ principal components, one may linearly decode the numerical value of historical years or the latitude and longitude of geographical locations.
This property holds even when $r$ is relatively small. The following result explains this surprising phenomenon, assuming eigenmode monotonicity.

\begin{restatable}[Linear coordinate decoding (simplified)]{proposition}{coorddecode}
\label{prop:decode}
We work in the same setting as \cref{prop:fourier-embed}.
Let $\overline\vw_{i,r}\in\R^r$ be the rank-$r$ PCA-projected embeddings, where $r\leq |\sys|$.
Given a linear probe $\mOmega\in \R^{r\times D}$, define the decoding error
\begin{equation}
    \varepsilon^2(r) \defn \frac{\mathbb{E}_{i \in \sys}\|\overline \vw_{i,r}^\top\mOmega-\vx_i\|^2 }{ \mathbb{E}\|\vx_i\|^2}.
\end{equation}
If $\lambda_\mu$ is monotone decreasing with $|\vk_\mu|$, then for each $r$ there exists a linear probe $\mOmega$ such that 
\begin{align}
    \lim_{|\sys|\to\infty} \varepsilon^2(r) \leq
    \frac{6}{\pi^2} \left(\left(\frac{r}{\mathrm{Vol}_D}\right)^{1/D}-\frac{\sqrt D}{2} \right)^{-1}
\end{align}
where $\mathrm{Vol}_D$ is the volume of the unit $D$-sphere.
Thus the error scales asymptotically as $\varepsilon^2 \sim r^{-1/D}$.
\end{restatable}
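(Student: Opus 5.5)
By \cref{prop:fourier-embed}, the PCA-projected embeddings in the periodic lattice setting are explicit Fourier modes. The plan is therefore to turn the decoding problem into a truncated Fourier approximation of the coordinate function $\vx\mapsto\vx$ on the torus $[-1,1)^D$. Because each nonzero $\lambda_\mu$ is a fixed positive rescaling of a column, a linear probe on $\overline\vw_{i,r}$ can realize any linear combination of the $r$ retained functions $\cos(\vk_\mu^\top\vx_i)$, $\sin(\vk_\mu^\top\vx_i)$. Hence the optimal decoding error equals the error of the best $L^2$ approximation of $\vx$ from the span of those $r$ modes. By orthogonality of the discrete Fourier basis, this best approximation is the truncated Fourier series, and the residual is the sum of squared Fourier coefficients over the discarded wavevectors. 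Since $\vx$ itself is not mean-zero, I will also check that the constant mode is irrelevant: $\mathbb{E}\,\vx_i=0$ on the symmetric lattice, so centering costs nothing. Finally, I take $|\sys|\to\infty$ so that the discrete sums become the Fourier series of the sawtooth on $[-1,1)$ in each coordinate.

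The key computation is per coordinate. The function $x_d$ depends only on the $d$-th variable, so its Fourier coefficients vanish except on wavevectors $\vk=\pi n\,\ve_d$ with $n\neq0$. There the sawtooth expansion $x=\sum_{n\ge1}\frac{2(-1)^{n+1}}{\pi n}\sin(\pi n x)$ gives squared coefficient $\frac{4}{\pi^2 n^2}$ in $L^2$-mean with normalization $\frac12$. Normalizing by $\mathbb{E}x_d^2=1/3$, the fraction of variance carried by mode $n$ is $\frac{6}{\pi^2 n^2}$, and these fractions sum to $1$. If all axis modes with $|n|\le N$ are retained, the relative error is therefore $\frac{6}{\pi^2}\sum_{n>N}n^{-2}\le\frac{6}{\pi^2}\cdot\frac1N$. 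The same bound holds summed over coordinates, since the ratio of sums is bounded by the maximum ratio.

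Next I use monotonicity to control which modes are kept. Since $\lambda_\mu$ decreases with $|\vk_\mu|$, the top $r$ PCA components are the $r$ wavevectors $\pi\vn$ with smallest $|\vn|$, and they contain every lattice point of a ball of some radius $R$ with $|\vn|\le R$. To relate $R$ to $r$, I count lattice points. The unit cubes centered at lattice points with $|\vn|\le R$ cover the ball of radius $R-\sqrt D/2$, so counting with cubes placed around the retained points gives $R\ge (r/\mathrm{Vol}_D)^{1/D}-\sqrt D/2$. The axis points $\pi n\ve_d$ with $|n|\le N=\lfloor R\rfloor$ are all retained, which yields the stated bound with $N$ replaced by $(r/\mathrm{Vol}_D)^{1/D}-\sqrt D/2$. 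I will handle the floor by bounding the tail $\sum_{n>N}n^{-2}$ by $1/R$, using $\sum_{n>N} n^{-2}\le 1/N$ and adjusting $N$ accordingly, or, if needed, by absorbing it into the $\sqrt D/2$ slack.

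I expect the main obstacle to be the bookkeeping rather than any deep step. One issue is the degenerate sin/cos pairs and ties in $|\vk|$, which make "the top $r$" ambiguous; I will resolve this by taking any consistent tie-breaking, since the ball-containment argument only needs the retained set to contain all strictly smaller norms. The other is the direction of the lattice-counting inequality: I need it to guarantee that every point of the ball of radius $R$ is retained, meaning that the number of lattice points in a ball of radius $R$ is at most $\mathrm{Vol}_D\,(R+\sqrt D/2)^D$. Setting this bound equal to $r$ then gives the displayed $R$.
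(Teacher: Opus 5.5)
Your proposal is correct and follows the same route as the paper. Both arguments reduce the optimal probe to orthogonal projection onto the retained Fourier modes, keep only the axis-aligned sine overlaps, bound the discarded tail by roughly $\frac{6}{\pi^2}\cdot\frac{1}{R}$, and convert $R$ to $r$ through the containment bound $r\le \mathrm{Vol}_D\,(R+\sqrt D/2)^D$ from your last paragraph. (Your earlier sentence about cubes covering the ball of radius $R-\sqrt D/2$ gives the opposite, unusable direction.) The only difference is that you pass directly to the continuum sawtooth series, whereas the paper evaluates the exact lattice sum $\sum_{x=-M}^{M} x\sin(2\pi n x/L)=(-1)^{n+1}L/(2\sin(\pi n/L))$ and so obtains a finite-$L$ bound carrying the extra factor $L^2/(L^2-1)$.
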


Put simply, \cref{prop:decode} states that it is easy to linearly decode the spatial or temporal coordinates from the learned Fourier representations, even when they are low-passed (moderate $r$).
We empirically validate the predicted scaling on embeddings trained on Wikipedia in \cref{fig:fig2} (right).
See \cref{app:coord-decode-proof} for the full and precise statement, which handles finite lattices (no $|\sys|\to\infty$ limit), and its proof. The proof idea is that since the representations encode Fourier modes of increasing frequency, the optimal rank-$r$ linear decoder computes the truncated Fourier series of $f(\vx)=\vx$, and the error is the missing high-frequency tail.

\section{Collective effects control the embedding of space and time}
\label{sec:collective}


Thus far, our analysis has focused on embeddings of maximal dimension $d = V$. In that case, the scalar product of two embedding vectors simply corresponds to the associated elements of the co-occurrence matrix. Our assumption above is that if the co-occurrence is restricted to the subset of words considered, e.g., the months of the year, it is a circulant  matrix. 
 \Cref{fig:fig4}~(Left) shows however that the circular embedding of the months can be recovered if these two assumptions are broken simultaneously, by removing explicitly the co-occurrences of all the months in the PMI matrix, and by considering smaller embedding dimensions. In that new set-up, the Gram matrix of the month embeddings very nearly matches the raw co-occurrence statistics of the months, despite this reconstruction task utilizing only a small fraction of the eigenspectrum. 

 \newpage
 Such robustness of embedding properties to perturbation has already been observed in the context of analogical reasoning, e.g., $\text{King}-\text{Queen}+\text{Woman} = \text{Man}$ ~\citep{mikolov2013distributed}. Remarkably, accurate analogy completions persist even when all sentences with a pair of words involved in the analogy (e.g., King, Queen) are explicitly removed from the corpus ~\citep{chiang2020understanding}. It was argued that this robustness is a consequence of  a low-rank structure of the PMI matrix ~\citep{korchinski2025emergence}. We hypothesize that a similar effect is at play here and provide a mechanism for the emergence of an effectively low-rank structure. 
 
 Consider, as an example, the embedding of seasonality, i.e., time of the year (a similar argument can be made, e.g., for geography). We argue that many words are ``seasonal," such that they occur more often when certain periods of the year are considered. For example, ``ski'' is associated to winter and will thus co-occur more frequently with, e.g., ``December" than ``August''; and the reverse can be said of ``beach" or ``hurricane." In this view, the time of the year $t$ is a latent variable that characterizes many words, which will globally affect the PMI co-occurrence matrix, leading to some large eigenvalues -- a structure that is very robust to perturbations. This view is tested empirically in \Cref{fig:fig4} (Right), where we show that if the months of the year are complemented with highly seasonal words, the modified PMI in which all co-occurrences between months are removed still leads to circular months embeddings of some embedding dimension. This can be contrasted with highly \textit{non-seasonal} words, like the numbers ``one'' through ``seventeen," which do not enable seasonal reconstruction, as shown in  \cref{fig:appdx_helper_reconstruction_words_numbers}.

\subsection{Co-occurrence arising from a shared continuous latent variable}
\label{ssec:collective-latent}
To illustrate how recovery of the geometry is possible from co-occurrences with other seasonal words, we consider a simple setting where the collection of all seasonal words is governed by a  periodic latent variable $t$ in the domain $[-T/2,T/2]$, which describes the time of year. Suppose each word $i$ has an intrinsic seasonal center $t_i$, uniformly distributed in $[-T/2,T/2]$. $t_i$ is a $D=1$ version of the semantic lattice coordinates $\vx_i$ introduced in \cref{sec:model}, but we now consider the larger set---beyond, e.g., the calendar months---of seasonal words, which share statistical structure.

For word $i$, we assume its conditional probability
of occurring at season $t$ is modulated by 
\begin{equation}
P(i \mid t)
= P(i)\bigl( 1 + g(t - t_i) \bigr),
\end{equation}
where $g(t)$ is a symmetric, zero-mean function peaked at $t=0$ that monotonically decreases towards $\pm T/2$. Normalization implies that $\sum_i P(i\mid t)=1$. For a vocabulary of $N$ words taken to be equispaced in time, $t_i=\left(\frac{i}{N}-\frac{1}{2}\right)T$, normalization is obtained in the limit $N\rightarrow \infty$ e.g., if the $P(i)$'s are  iid variables of finite variance and mean $1/N$.

Averaging over seasons, and assuming conditional independence, yields the joint marginal
\begin{align}
P(i,j)
&= \int_{-T/2}^{T/2} \frac{dt}{T}\; P(i \mid t)P(j \mid t) \notag \\
&= P(i)P(j)\Bigl[1 + \tilde{K}(t_i - t_j)\Bigr],
\end{align}
where $\tilde{K}$ is the circular autocorrelation
\begin{equation}
\tilde{K}(\Delta)
= \frac{1}{T}\int_{-T/2}^{T/2} g(u)\,g(u+\Delta)\,du.
\end{equation}
Thus the PMI between two words depends only on their seasonal separation:
\begin{align}
\label{eq:log}
\mathrm{PMI}(i,j)
&= \log\frac{P(i,j)}{P(i)P(j)}
= \log\!\bigl(1 + \tilde{K}(t_i - t_j)\bigr)\\
&\eqcolon K(t_i - t_j),
\end{align}
which defines the kernel $K$ (approximately equal to $C(\cdot)$ in \eqref{eq:convkernel}).  $\mathrm{PMI} = \mK$ is  a circulant matrix. 

\subsubsection{Diagonalization of the PMI}
Circulant matrices are diagonalized by discrete Fourier modes
\begin{align}
\bphi_{k(i)} = \frac{1}{\sqrt{N}}\,e^{2\pi \iu k t_i/T}, \qquad
\mK \bphi_k = \mu_k \bphi_k,
\end{align}
where $\mu_k = \sum_{j=0}^{N-1} e^{2\pi\iu k j /N }K(t_j)$ is the discrete Fourier transform of $K$, and $k=0,\pm1,...,\pm (N-1)/2$.  For large $N$ we have:
\begin{align}
\lim_{N \rightarrow \infty}\mu_k/N=\int e^{2\pi \iu k t/T} K(t)dt.
\end{align}
Because $K$ is smooth and even,
its Fourier coefficients $\mu_k$ are real and decay rapidly with $|k|$.    
The eigenvector associated with $\mu_0$ is the constant mode, representing  
the global seasonal ``mean.''  
The next two eigenvectors, $\phi_1$ and $\phi_{-1}=\phi_1^*$, form a cosine–sine pair corresponding to a once-per-year oscillation.  
Subsequent eigenvectors encode higher-frequency seasonal variations. 

\paragraph{Robustness of embeddings to removal of the PMI elements:} Consider a fixed embedding dimension $d$, in the limit of large $N$. Because eigenvalues and the gap between them (or more precisely between degenerate pairs of eigenvalues) are  proportional to $N$, any pertubation that affects a fixed number of entries of the PMI (such as the $12^2$ co-occurrence between months) will be much smaller than these gaps. Using the Weyl or Davis-Kahan theorem, we obtain that the top $d$ eigenvectors and thus the word embeddings are not perturbed in this limit.

\subsection{Joint continuous and binary model}
In the collective model just described, the only signal assumed to be present in the PMI matrix is ``seasonality." In real data, there are many more signals present; e.g., signals that relate to other continuous latent variables such as ``geography" or signals that lead to linear analogies of the type $\text{King} - \text{Man} = \text{Queen} - \text{Woman}$. In \Cref{sec:combined_model}, we build on \cite{korchinski2025emergence} to develop a model where both seasonality and correlations inducing linear analogies are present. We show that circular embedding geometry still emerges in this more realistic framework. 

\section{Discussion}
\label{sec:discussion}

Word embeddings, text embedding models, and LLMs all learn to represent time and space in a remarkable fashion: cyclic time exhibits circular geometry, historical time takes the form of open and curved one-dimensional manifolds, and geographic maps are represented as curved sheets. In addition, the high-dimensional ``overtones'' in these vectors enable linear probes to easily decode temporal and spatial coordinates. We have shown that this behavior emerges from a single unifying principle: pairs of words for times (or places) co-occur with a frequency that is dominated by their temporal (or spatial) separation. Importantly, this effect is collective, since there are many words that carry seasonal or geographical semantics. As a consequence, the learned representational geometry is remarkably robust to perturbations of the statistics.

We emphasize that the foundation of our theory lies in the low-order correlational statistics of the training data. We prove our results in word embedding models, where closed-form training dynamics enable us to explicitly bridge the data statistics to the learned representations, but we find that these predictions carry over to a diverse set of larger sequence models. This suggests that the geometric structure of internal model representations is largely governed by task-agnostic and architecture-agnostic principles. Several recent works give evidence for this idea, including \cite{shai2024transformers,prieto2026data,shai2026transformers}.

Though our experiments focus on model representations of time and space, a growing body of empirical evidence indicates that a zoo of such manifolds exists within language models, corresponding to concepts as varied as color, temperature, and age \citep{modell2025origins,bhalla2026sparse}. Evidently, these models develop a form of perceptual grounding (e.g., internally representing hues as a color wheel) despite being trained exclusively on text. Assuming that these geometries are indeed driven by low-order text statistics, these findings suggest that the principles we develop in this paper may reveal the degree and character of such learned perceptual structure.

An intriguing question is whether such arguments play a role in neuroscience as well. In this respect, it is worth noting that grid cells---which encode two-dimensional space in the mammalian entorhinal cortex---display a firing pattern that can be interpreted as the interference of a small number of Fourier modes \citep{hafting2005microstructure,dordek2016extracting,stachenfeld2017hippocampus}. Our work illustrates that seeking to predict the next position of the animal from a dataset of past trajectories (e.g., characterized as a succession of landmarks as in \cite{rajuSpaceLatentSequence2024}) will spontaneously create such representations, in either shallow or deep networks.

Ultimately, the representations are important because the model exploits their geometric structure during the forward-pass computation. For example, LLMs use the parallelogram geometry of linear analogies to complete relational tasks \citep{merullo2024language}, exploit Fourier structure for (modular) addition \citep{engels2025not,feucht2026arithmetic}, and use representational manifolds to reason about when to insert linebreaks \citep{gurnee2025when}. It may be useful to decompose model interpretability into two closely-related research programs: one which sharply predicts how representational geometry and computational primitives arise as a result of optimization pressures (as in this work), and one which describes how these pieces interconnect within the model to produce intelligent behavior.



\section*{Limitations}
Our theory is derived in the setting of word embedding algorithms. Remarkably, we demonstrated that it also captures many aspects of the geometry of representations in LLMs. The latter, however, have the ability to adapt to context; for example, ambiguities may be removed. A very illustrative example is shown in \cref{fig:appdx_contextualized_months}: if a prompt containing ``May" provides disambiguating context, the representations of months become increasingly circular as they propagate through the forward pass. Building a theory to explain such phenomena would be very interesting. 

In this work we consider the semantic relations between words that can be characterized by continuous attributes. This extends the previous work of \cite{korchinski2025emergence} that analyzed binary attributes and the associated emergence of parallelograms in representation space. \cite{park2025geometry} have identified the emergence of geometric structure for hierarchical attributes that currently remains unexplained. Providing a global framework for all these properties would be desirable.

\section*{Acknowledgements}

We thank  Blake Bordelon, Francesco Cagnetta, Alessandro Favero, Daniel Kunin, Eric Michaud, Jamie Simon, Antonio Sclocchi and the Simons Collaboration for useful conversations and mathematical insights. Y. Bahri thanks Andrew Lampinen, Yuxuan Li, and Eghbal Hosseini for conversations and Andrew Lampinen for feedback on a draft.
D. J. Korchinski acknowledges financial support from the Natural Sciences and Engineering Research Council of Canada (NSERC PDF - 587940 - 2024).
This work was supported by the Simons Foundation through the Simons Collaboration on the Physics of Learning and Neural Computation (Award ID: SFI-MPS-POL00012574-05), PIs Bahri and Wyart.
\section*{Impact Statement}

This paper presents work whose goal is to advance the field of Machine
Learning. There are many potential societal consequences of our work, none
which we feel must be specifically highlighted here.

\newpage
\bibliography{references}
\bibliographystyle{icml2026}

\newpage
\appendix
\onecolumn

\section{Experimental details}
\label[app]{app:experiments}

In this appendix we first provide more details about the code that produced the main text figures. We then summarize the experimental setup and evaluation details needed to reproduce and interpret our results.
Our code is publicly available at \href{https://github.com/dkarkada/symmetry-stats-repgeom}{https://github.com/dkarkada/symmetry-stats-repgeom}.

\subsection{Main text figures}

\subsubsection{\cref{fig:fig1}.}
\textbf{(Top row.)} For our theoretical model, representation vectors are obtained by measuring the corpus statistics, constructing the matrix $\MstarS$, and using it to estimate the free parameters of the exponential kernel (see \cref{fig:app-month-mstar,fig:app-year-mstar}). \textbf{(Middle row.)} For static word embeddings, the representation of ``May'' is perturbed due to its other meaning (i.e., to express possibility/permission/hope). To eliminate the effect of this polysemy, we exclude May from the computation of the PCA basis. \textbf{(Bottom row.)} For LLM internal representations, we use the prompt formulas given in \cref{app:gemma2_internalreps}. The effect of polysemy on the representation of ``May'' is absent since the model is able to disambiguate from context (see \cref{fig:appdx_contextualized_months}). Since Gemma 2 2B tokenizes digits separately, prompts for historical years that share the last digit or last two digits (e.g., 1735, 1835, and 1935) produce similar internal representations. This explains the bright off-diagonals in the Gram matrix.

\textbf{(Left column.)} For calendar months, we show \textit{centered} Gram matrices (i.e., the Gram matrix of the mean-centered representation vectors). Since they have periodic BC, the constant mode is an eigenvector, so centering the embeddings simply subtracts a multiple of the all-ones matrix from the Gram matrix. \textbf{(Right column.)} For historical years, however, centering the embeddings changes the eigenbasis of the Gram matrix; to visually emphasize the Toeplitz-like nature of the uncentered eigenproblem, we show the uncentered Gram matrix instead. All geometry plots share the same color map: the red boundary is the year 1700 and the purple boundary is the year 2020. The geometric shape of the manifold is qualitatively insensitive to the chosen start and end years.

\subsubsection{\cref{fig:fig2}.}
\textbf{(Left subplot, top.)} As before, the theoretical model is obtained by measuring the corpus statistics, constructing the matrix $\MstarS$, and using it to estimate the free parameters of the exponential kernel. The color map is as in \cref{fig:fig1}: the red boundary is the year 1700 and the purple boundary is the year 2020. \textbf{(Left subplot, bottom.)} The local ``kinks'' in the empirical Lissajous curves in the cyan and blue region correspond to the years of World War I and World War II. The preponderance of Wikipedia articles discussing the events of those wars (and other wars) weakly breaks the time translation symmetry, distorting the representational manifold.

\textbf{(Right subplot.)} We restrict our experiment to the word embeddings for the years 1900-2020. (We make this choice simply for variety; choosing a different set of years does not meaningfully affect the results.) For each rank-$r$ projection of the embeddings, we run 100 trials; each trial we randomly select $60$ embeddings for a training set, reserving the remaining $60$ for a test set, and train a linear probe using ridgeless linear regression to decode the numerical year. We depict both the trial-wise mean of the train/test errors as markers, and the standard deviation as error bars. The colored dotted lines are simply visual aids connecting the markers. The faint vertical dotted line denotes the so-called ``interpolation threshold'' at which the learning problem transitions from underparameterized to overparameterized.

In practice, one may estimate the optimal linear probe using nonzero ridge regularization. 
Although this improves test performance, the purpose of \cref{fig:fig2} is to depict how the optimal \textit{training} error decays in the underparameterized regime. Therefore we relegate the figure depicting the regularized performance to \cref{fig:app-decoding-ridge}.

\subsubsection{\cref{fig:fig3}.}
\label[app]{app:experiments-geography}
Since the embeddings lie on a two-dimensional manifold (since $D=2$ for geography), they are difficult to visualize; instead, we plot the eigenmodes directly.  Across the various subfigures, the ranges of the color maps do not coincide; we set the upper and lower boundaries of each color map separately to maximize visual clarity since the different eigenmodes have different extremal values.

\textbf{(Column 1.)} The theoretical model is obtained as follows. Let $(x_i, y_i)$ be the latitude and longitude of the centroid of state $i$. Let $d_{ij}\defn \sqrt{(x_i-x_j)^2 + 0.78(y_i-y_j)^2}$. The factor 0.78 is approximately the aspect ratio of one square degree of latitude/longitude in the continental US. Therefore, $d_{ij}$ approximately measures the loxodromic distance between state $i$ and $j$. We use the theoretical co-occurrence model
${\MstarS}_\idx{ij} = 10\exp(-d_{ij}/20)$,
where the parameters $10$ and $20$ were chosen by visual inspection; the predicted eigenmodes are qualitatively insensitive to these choices. We then numerically diagonalize $\MstarS$ to obtain the embeddings. 

\textbf{(Columns 3, 4.)} We use the prompt formula given in \cref{app:gemma2_internalreps}. For the Gemma 2 2B plots, we omit the first eigenmode because it reflects tokenization rather than semantics: it activates strongly for states with two tokens in their name (e.g., West Virginia, New York, North Dakota).

\subsubsection{\cref{fig:fig4}.}
\label[app]{app:exp-details-fig4}
In the left panel, we depict an experiment in which we set $P_{ij}=P_i P_j$ for every $i,j\in\sys$, the set of target words. The effect is to set $\MstarS=0$. The embedding geometry (extracted in the usual manner via SVD, as described in \cref{ssec:prelim-geom}) again excludes May from the PCA computation. The Gram matrix is computed from the centered embeddings.
In the right panel, we obtain word embeddings by factorizing the submatrix of $\Mstar$ containing only the 12 months and the 10 chosen ``seasonal words,'' with the month-month co-occurrences still satisfying $\MstarS=0$.
We do \textit{not} exclude May from the PCA, and the Gram matrix remains centered.

\subsection{Word embeddings}
\label{app:experiments-wordembeds}

\paragraph{Corpus.}
We train all word embedding models on the November 2023 dump of English Wikipedia, downloaded from \url{https://huggingface.co/datasets/wikimedia/wikipedia}.
We lowercase the corpus, remove all numerals with commas or decimal points, replace all non-alphanumeric characters with whitespace, and tokenize by splitting on whitespace.
We discard all articles with fewer than 200 tokens, leaving 3.37 million articles.

We use a vocabulary consisting of the $V=25000$ most frequently appearing words. This automatically includes the words corresponding to the continuous concepts we probe (months, years, and US states).
We discard out-of-vocabulary words from the corpus; our robustness checks indicated that it does not practically matter whether out-of-vocabulary words are removed or simply masked.
Ultimately, the training corpus comprises 2.72 billion tokens.



\paragraph{Co-occurrence probabilities.}
Given hyperparameters $L$ (representing the context window size) and $f(\cdot)$ (representing a distance-based reweighting function),
the co-occurrence (skip-gram) distribution $P_{ij}$ of corpus $\mathcal{C}$ is defined
\begin{equation}
    P_{ij}= \frac{1}{Z} \sum_{\nu} \delta_{\mathcal{C}[\nu],i} \sum_{d=1}^L f(d)
    \left(\delta_{\mathcal{C}[\nu+d],j} + \delta_{\mathcal{C}[\nu-d],j}\right)
\end{equation}
where $\mathcal{C}[\nu]$ is the $\nu$-th token in the corpus, $\delta$ is the Kronecker delta, and $Z$ is simply the normalizing constant.
We choose $L=16$ and $f(d)=L+1-d$ simply for consistency with well-known word embedding algorithms; we find that our results are not strongly sensitive to these choices. In particular, these correspond to a hyperparameter scheme similar to the original \texttt{word2vec} algorithm, where co-occurring word pairs are collected from symmetrical contexts whose window size varies uniformly randomly between 1 and $L$. This dynamic context window has the aggregate effect of downweighting the co-occurrence probability of words exactly as our $f(d)$. This decreases the co-occurrence probability of word pairs that are typically separated by many words, compared to e.g., common bigrams.
The unigram probabilities are defined $P_i \defn \sum_j P_{ij}$.

\paragraph{Training.}
We first construct the target matrix $\Mstar$ from the co-occurrence statistics according to \cref{eq:mstar}.
We then obtain $\mW$ by diagonalizing $\Mstar$ and evaluating \cref{eq:qwem-embeds}.
See \cref{app:qwem} for a complete discussion.

\subsection{Gemma 2 2B internal representations}
\label[app]{app:gemma2_internalreps}

We utilize the two-billion-parameter Gemma 2 2B model (\texttt{google/gemma-2-2b}) \citep{gemmateam2024gemmaopenmodelsbased}. It has a vocabulary of 256K tokens and the model hidden size is $d=2304$. Using the \texttt{TransformerLens} library \citep{nanda2022transformerlens} and loading weights from the Hugging Face Hub, we extract residual-stream activations from the end of each transformer block (\texttt{blocks.\{l\}.hook\_resid\_post}) for layers $l \in \{0,\dots,25\}$. We analyze the activations for three semantic categories: calendar months (e.g., ``January"), historical years (e.g., ``1776"), and US states (e.g., ``New York"). For each string $x$, we construct a prompt using the templates below:
\begin{center}
\begin{tabular}{ll}
Calendar months: & ``The month of the year is $x$'' \\
Historical years: & ``In the year $x$'' \\
US states: & ``The location of the US state $x$'' \\
\end{tabular}
\end{center}

We retain the activation vector corresponding to the final token position $T-1$, where $T$ is the tokenized prompt length.

\subsection{EmbeddingGemma representation vectors}

We utilize the 308 million parameter text embedding model \texttt{EmbeddingGemma} \citep{vera2025embeddinggemma} via the \texttt{huggingface} library. \texttt{EmbeddingGemma} is derived from the \texttt{Gemma3} model family and yields fixed-length vector representations of text. We obtain the vectors using the SentenceTransformers module \citep{reimers-2019-sentence-bert} and query for vectors with the default size (768-dimensional) and unit normalization for the semantic category US states. The queried sentences are identical to those in \cref{app:gemma2_internalreps}, namely:

\begin{center}
\begin{tabular}{ll}
US states: & ``The location of the US state $x$''\\
\end{tabular}
\end{center}

with string $x$ corresponding to each of the 48 contiguous states. The top PCA modes of these representations are shown in \cref{fig:fig3}.

\newpage
\section{Review of word embedding algorithms}
\label[app]{app:qwem}

In this appendix we review recent results showing that word embedding algorithms are well-approximated by spectral methods.
We first review the mechanism by which symmetric word embeddings trained via self-supervised gradient-based algorithms learn the spectral decomposition of the positive part of the co-occurrence matrix $\Mstar$.
We then discuss how the PCA geometry of \textit{asymmetric} word embeddings recovers instead the matrix absolute value $|\Mstar|$.
Finally, we discuss the relation between $\Mstar$ and the well-known pointwise mutual information (PMI) matrix.

\subsection{Symmetric word embeddings from $\Mstar$}

In the main text, we use $\mW$ to denote the trained embeddings. Since we are now discussing optimization, we change notation, using $\mW$ to instead refer to the variable being optimized, and $\hat \mW$ to denote the optimal value.

Let the co-occurrence (skip-gram) distribution $P_{ij}$ and the unigram distribution be defined as in \cref{app:experiments-wordembeds}.
Let $V$ be the number of words in the vocabulary, and let $\mW\in\R^{V\times d}$ be a trainable weight matrix whose $i^\text{th}$ row is the $d$-dimensional embedding vector for word $i$.
Word embedding algorithms such as \texttt{word2vec} aim to imbue $\mW$ with semantic structure so that the inner products between word embeddings capture semantic similarity.
In particular, these self-supervised algorithms iterate over the text corpus, aligning the embeddings of frequently co-occurring words and repelling unrelated words.

\cite{karkada2025closedform} show that the following symmetric matrix factorization problem well-approximates \texttt{word2vec}, producing word embeddings of comparable linear algebraic structure and semantic quality:
\begin{equation}
    \hat \mW = \mathop{\arg\min}_{\mW} \| \mW\T{\mW} - \Mstar\|_\mathrm{F}^2,
    \label{eq:app-mfac}
\end{equation}
where the target matrix $\Mstar$ is defined as in \cref{eq:mstar}:
\begin{align}
    \Mstar_\idx{ij} &\defn \frac{P_{ij}-P_i P_j}{\frac{1}{2}(P_{ij}+P_i P_j)} \\[4pt]
    &= 2 \frac{\frac{P_{ij}}{P_i P_j}-1}{\frac{P_{ij}}{P_i P_j}+1}  \\[4pt]
    & \approx \log\left(\frac{P_{ij}}{P_i P_j}\right).
\end{align}
Intuitively, $\Mstar$ encodes the normalized excess co-occurrences of word pairs; $\Mstar_\idx{ij}>0$ when words $i$ and $j$ co-occur more frequently than if the words were drawn independently, and vice versa.
Some additional properties of $\Mstar$ include that its elements are bounded ($-2 \leq \Mstar_\idx{ij} \leq 2$) and that it is real symmetric (and therefore has a real-valued spectral decomposition $\Mstar=\mPhi \mLambda \T\mPhi$).
The approximation in the last line shows that $\Mstar$ is approximately equal to the \textit{pointwise mutual information} (PMI) matrix, elementwise; we derive and discuss it in \cref{app:qwem-pmi}.

\cite{karkada2025closedform} show that running the \texttt{word2vec} algorithm is approximately equivalent to running gradient descent on the objective in \cref{eq:app-mfac}, which is known to converge to $\hat\mW$ with high probability. By the Eckart-Young-Mirsky theorem, the trained embeddings must have the structure
\begin{equation}
    \hat \mW = \mPhi_\idx{\cdot,:d} \sqrt{\mLambda_\idx{:d,:d}}
\end{equation}
up to semantically-irrelevant right orthogonal transformations. In other words, $\mW$ learns to model the $d$ largest positive eigenmodes of $\Mstar$.

A limitation of this analysis is that it is restricted to symmetric factorizations. This prevents $\mW$ from ever learning eigenmodes of $\Mstar$ with negative eigenvalues. To be precise, let us define the positive semidefinite and negative semidefinite components of $\Mstar$, $\mM^+$ and $ \mM^-$ respectively, to be the unique matrices satisfying
\begin{equation}
    \Mstar = \mM^+ - \mM^- 
    \qquad\qquad \mM^+ \succeq \bm 0
    \qquad\qquad \mM^- \succeq \bm 0
    \qquad\qquad \mM^+\mM^- = \bm 0.
    \label{eq:qwem-psd-nsd-decomp}
\end{equation}
Then, even if $\mW\T\mW$ has no rank constraint, one may only achieve $\mW\T\mW=\mM^+ \neq \Mstar$ in general.

\subsection{Asymmetric word embeddings from $\Mstar$}
\label[app]{app:qwem-asym}

To circumvent this limitation, we consider the asymmetric setting
\begin{equation}
    (\hat \mW, \hat\mW') = \mathop{\arg\min}_{\mW,\mW'} \| \mW\T{\mW'} - \Mstar\|_\mathrm{F}^2,
\end{equation}
where $\mW$ is the word embedding matrix and $\mW'$ is known as the context embedding matrix. Unlike the case of symmetric factorization, where global minimizers are all related by right orthogonal transformations, the asymmetric objective has a substantially larger invariance: $(\hat \mW, \hat\mW') \equiv (\hat \mW \mA, \hat\mW' \mA^{-\top})$ for any invertible matrix $\mA$. Therefore the geometric structure of the learned word embeddings $\hat\mW$ is identifiable only up to arbitrary linear transformations. 

This is a highly underdetermined problem. To simplify the analysis of the training dynamics, we consider a special initialization scheme: $\mW(0)$ initialized with i.i.d. Gaussian elements as usual, and $\mW'(0) = \mQ\mW(0)$ with random orthogonal matrix $\mQ$. Then, exploiting the well-known conservation law of gradient flow on deep linear networks, i.e., $\frac{d}{dt}(\T\mW\mW - \T{\mW'}\mW')=0$, we conclude that the right singular vectors of $\mW$ and $\mW'$ must agree throughout training (and likewise for the singular values). Consequently, let us denote the SVD of the optimal $\mW$ as $\hat\mW=\hat\mPhi\hat\mS$ and the SVD of the optimal $\mW'$ as $\hat\mW'=\hat\mPhi'\hat\mS$, where without loss of generality we assume the right singular vectors are simply identity. Invoking Eckart-Mirsky-Young again, and re-indexing the modes of $\Mstar$ to be non-increasing in the \textit{magnitude} of its eigenvalues, we find
\begin{align}
    \hat \mW {\hat\mW}'\T{}
    &= \hat\mPhi\hat\mS^2 {\hat\mPhi}'\T{} \\
    &= \mPhi_\idx{\cdot,:d} \mLambda_\idx{:d,:d} \T{\mPhi_\idx{\cdot,:d}} \\
    &= \mPhi_\idx{\cdot,:d} \sqrt{|\mLambda_\idx{:d,:d}|}\sqrt{|\mLambda_\idx{:d,:d}|} \mD \T{\mPhi_\idx{\cdot,:d}}
\end{align}
where $\mD \defn \mathrm{sign}(\mLambda)$ is the diagonal matrix containing the signs of the modes of $\Mstar$.  Clearly, the learned solutions are
\begin{align}
    \hat \mW
    &= \mPhi_\idx{\cdot,:d} \sqrt{|\mLambda_\idx{:d,:d}|} \\
    \hat\mW'
    &= \left(\mPhi_\idx{\cdot,:d} \mD\right) \sqrt{|\mLambda_\idx{:d,:d}|}.
\end{align}

Since we will no longer discuss optimization, we revert to our previous notation where $\mW$ and $\mW'$ refer to the optima. This yields the expression given in \cref{eq:qwem-embeds}. However, the PCA geometry of the word embeddings is given by the gram matrix $\mW\T\mW$, not the model matrix $\mW\T{\mW'}$:

\begin{align}
    \mW\T\mW
    &= \mPhi_\idx{\cdot,:d} |\mLambda_\idx{:d,:d}| \T{\mPhi_\idx{\cdot,:d}}
    \label{eq:app-gram-asym}
\end{align}

Indeed, if the embedding dimension is sufficiently large, $d\geq\rank\Mstar$, the gram matrix factors the matrix absolute value of $\Mstar$, defined $|\Mstar| \defn \mM^+ + \mM^{-}$:
\begin{equation}
    \mW\T\mW = |\Mstar|.
\end{equation}

\subsection{Weaker version of \cref{asmp:translationsymmetry}}
\label[app]{app:qwem-asmp-psd-nsd}

Note that if $\Mstar$ is positive semi-definite, then $\mM^-=\bm 0$, and the result \cref{eq:app-gram-asym} is exactly the same as in the symmetric case. In particular, the gram matrix exactly recovers $\Mstar$. This convenience is why we include the PSD assumption in \cref{asmp:translationsymmetry}.

However, this assumption is violated in real data. Indeed, the empirical spectrum of $\Mstar$ on Wikipedia has a unimodal distribution peaked near $\lambda=0$, and $\rank \mM^+ \approx \rank \mM^-$ (\cref{fig:app-esd}). Therefore, in \cref{fig:fig1,fig:fig2}, the exponential kernel estimated by fitting to $\Mstar$ does not, strictly speaking, represent the kernel that is actually factored (which should be measured from $|\Mstar|_\sys$ instead). This explains why we typically do not accurately predict the absolute scale of the mode amplitudes. Despite this, the predictions of \cref{cor:exp-periodic,prop:exp-open} appear to correctly predict both the relative amplitudes between modes and the wavenumbers.

To develop theoretical results that more faithfully model real data, we propose the following drop-in replacement for \cref{asmp:translationsymmetry}:

\begin{assumption}
\label{asmp:symmetry-psd-nsd}
\textbf{Translation symmetry of $|\Mstar|$ with kernel $C(\cdot)$.}
Define $\Mstar$ as in \cref{eq:mstar}, and define its unique decomposition into PSD and NSD components $\Mstar=\mM^+-\mM^-$ as in \cref{eq:qwem-psd-nsd-decomp}. Let $C^+(\cdot)$ and $C^-(\cdot)$ both be well-behaved functions. For any two words $i,j\in\sys$, assume 
\begin{equation}
    \mM^+_\idx{ij}= C^+(\mathrm{dist}(\vx_i, \vx_j)) \qqtext{and}
    \mM^-_\idx{ij}= C^-(\mathrm{dist}(\vx_i, \vx_j))
\end{equation}
so that 
\begin{align}
    |\Mstar|_\idx{ij} &= C^+(\mathrm{dist}(\vx_i, \vx_j)) + C^-(\mathrm{dist}(\vx_i, \vx_j)) \\
    &= C(\mathrm{dist}(\vx_i, \vx_j))
\end{align}
where $C(\cdot)\defn C^+(\cdot) + C^-(\cdot)$ is the \textit{co-occurrence convolution kernel}.
\end{assumption}

\cref{asmp:symmetry-psd-nsd} implies that both $|\Mstar|_\sys$ and $\MstarS$ have a translation symmetry (with different convolution kernels).
Note that \cref{asmp:symmetry-psd-nsd} strictly softens \cref{asmp:translationsymmetry}, which corresponds to the special case $\mM^-=\bm 0$. In addition, \cref{asmp:symmetry-psd-nsd} is well-supported in both the empirical data (\cref{fig:app-month-mstar,fig:app-year-mstar}) and in the collective model of \cref{ssec:collective-latent}. In particular, if the function $g$ is large enough in magnitude, the non-linear effects associated with the logarithm in \cref{eq:log} can lead to both positive and negative values $\mu_k$. 

However, estimating the kernel $C^++C^-$ from the raw data statistics requires computing $|\Mstar|_\sys$. This involves a global spectral transformation of the co-occurrence probabilities for \textit{all} words. This contrasts estimating the kernel $C^+-C^-$ from $\MstarS$, which only requires knowledge of co-occurrence probabilities and unigram probabilities for the words in $\sys$. In this sense, the approximate \cref{asmp:translationsymmetry} is easier to use than \cref{asmp:symmetry-psd-nsd}. For this reason, we choose to use  \cref{asmp:translationsymmetry} when producing theoretical predictions in \cref{fig:fig1,fig:fig2}.

\subsection{Relation to PMI matrix}
\label[app]{app:qwem-pmi}

Early word and document embedding algorithms obtained low-dimensional embeddings by explicitly constructing some target matrix and employing a dimensionality reduction algorithm. For instance, latent semantic analysis \cite{deerwesterIndexingLatentSemantic1990} constructed document embeddings the SVD of from correlations between terms and documents, while \cite{turneyHumanlevelPerformanceWord2004} used the SVD of word pair relations -- the latter produced embeddings solving linear analogies. For words, one popular choice of target matrix was the \textit{pointwise mutual information} (PMI) matrix \citep{church1990word}, defined
\begin{equation}
    \mathrm{PMI}_\idx{ij} = \log \frac{P_{ij}}{P_i P_j}.
\end{equation}

Later, \cite{levy2014neural} showed that $\mathrm{PMI}$ is the rank-unconstrained minimizer of the \texttt{word2vec} objective. To see the relation between $\mathrm{PMI}$ and the normalized co-occurrence $\Mstar$, let us first write
\begin{equation}
    \log\left(\frac{P_{ij}}{P_i P_j}\right) = \log(1+\Delta(x_{ij})),
\end{equation}
where the function $\Delta(x)$ represents the fractional deviation away from independent word statistics ($P_{ij}=P_i P_j$), in terms of some small parameter $x$ of our choosing (so that $\Delta(0)=0$). This setup allows us to Taylor expand quantities of interest around $x=0$. 
A judicious choice will produce terms that cancel the $-\frac{1}{2} \Delta^2$ that arises from the Taylor expansion of $\log (1+\Delta)$, leaving only third-order corrections. One such example is $\Delta(x)=2x/(2-x)$, which yields
\begin{equation}
    \mathrm{PMI} = \log\left(1+\frac{2x}{2-x}\right) = x + \frac{x^3}{12} + \frac{x^5}{80} + \cdots
\end{equation}
and has the solution
\begin{equation}
    x_{ij} = \frac{P_{ij} - P_i P_j}{\frac{1}{2}(P_{ij}+P_i P_j)} = \Mstar_\idx{ij}.
\end{equation}
This calculation reveals that $\Mstar$ is a very close approximation to the PMI matrix; the leading correction is third order. 
For this reason, we use the PMI matrix in the theoretical analysis of \cref{sec:collective}. 
However, $\Mstar$ is empirically much friendlier to least squares approximation because its elements are bounded ($-2 \leq \Mstar_\idx{ij} \leq 2$). Indeed, \cite{levy2015improving} observed that word embeddings obtained by least squares factorization of the PMI matrix perform poorly on downstream tasks. One can mitigate this problem by instead factorizing a regularized variant of the PMI matrix:
\begin{equation}
\mathrm{PMI}_\epsilon = \log\left(\frac{P_{ij}}{P_i P_j} + \epsilon\right)
\end{equation}
for some hyperparameter $\epsilon$.

\newpage
\section{Additional empirical evidence}
\label[app]{app:moreexpts}

\vspace{12pt}

\begin{figure}[htbp]
    \centering
    \includegraphics[width=0.48\textwidth]{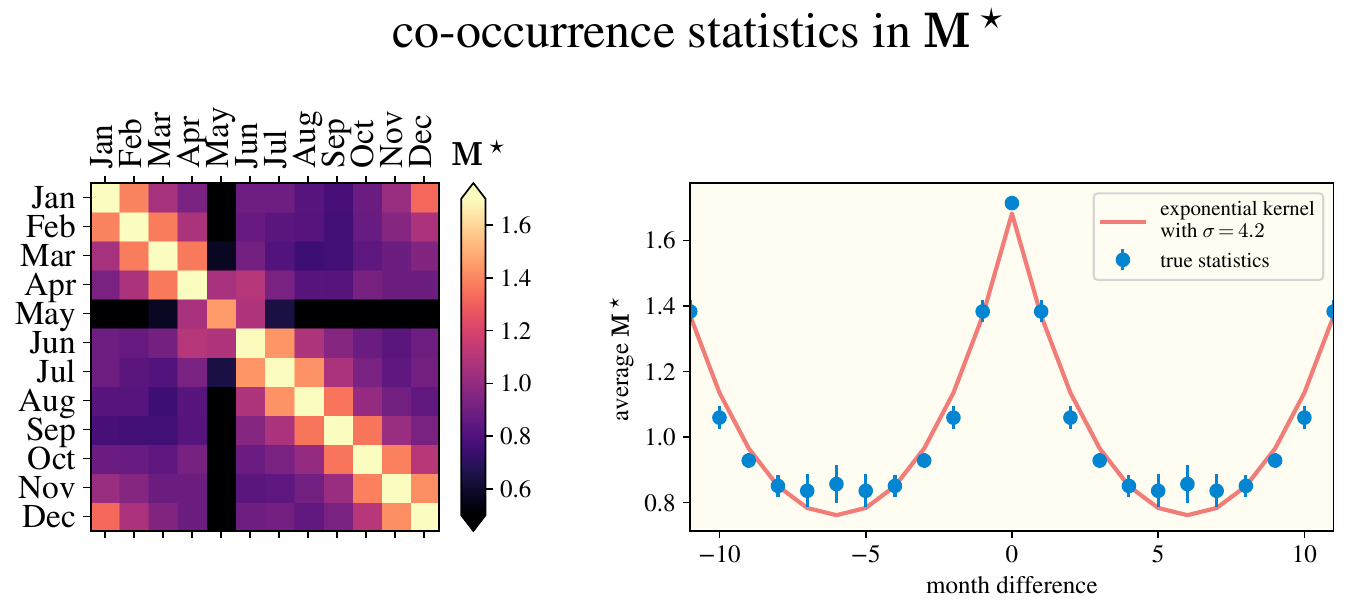}
    \hfill
    \includegraphics[width=0.48\textwidth]{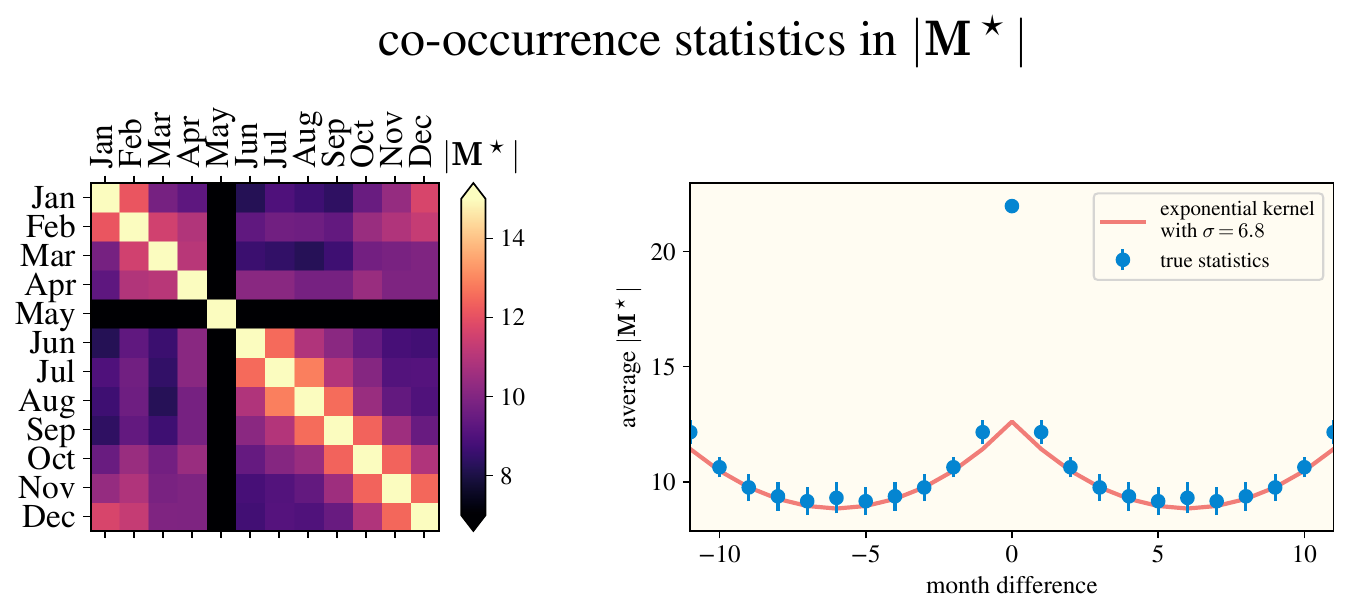}

    \vspace{0.5em}

    \includegraphics[width=0.48\textwidth]{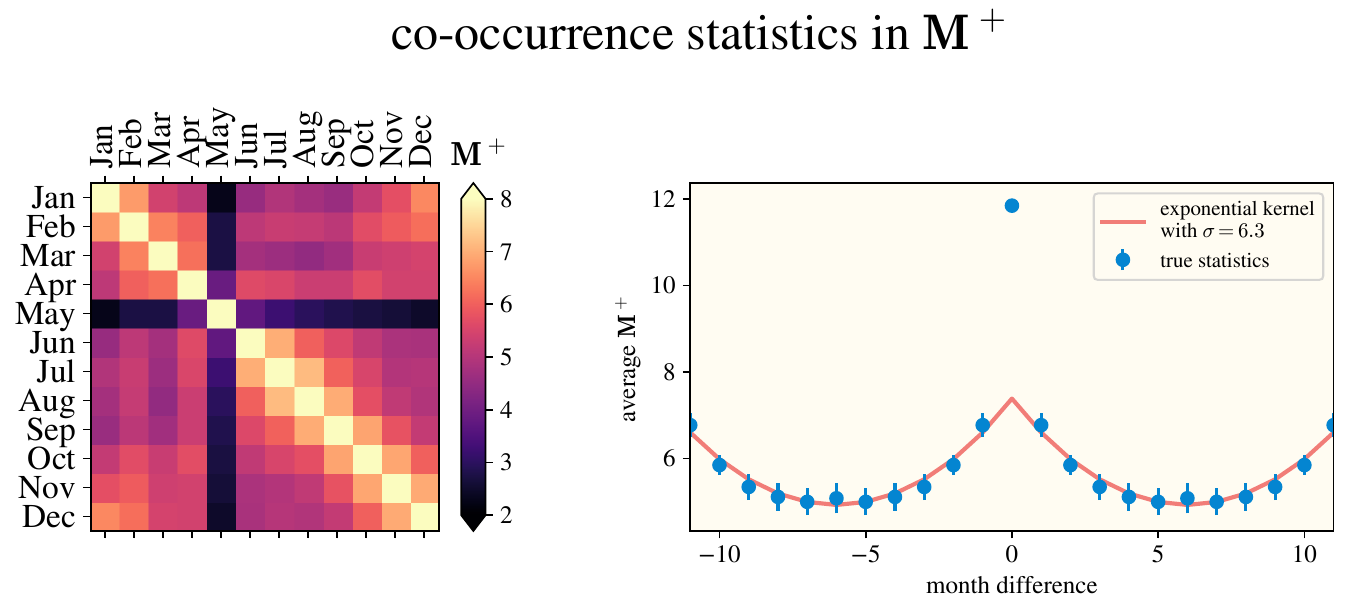}
    \hfill
    \includegraphics[width=0.48\textwidth]{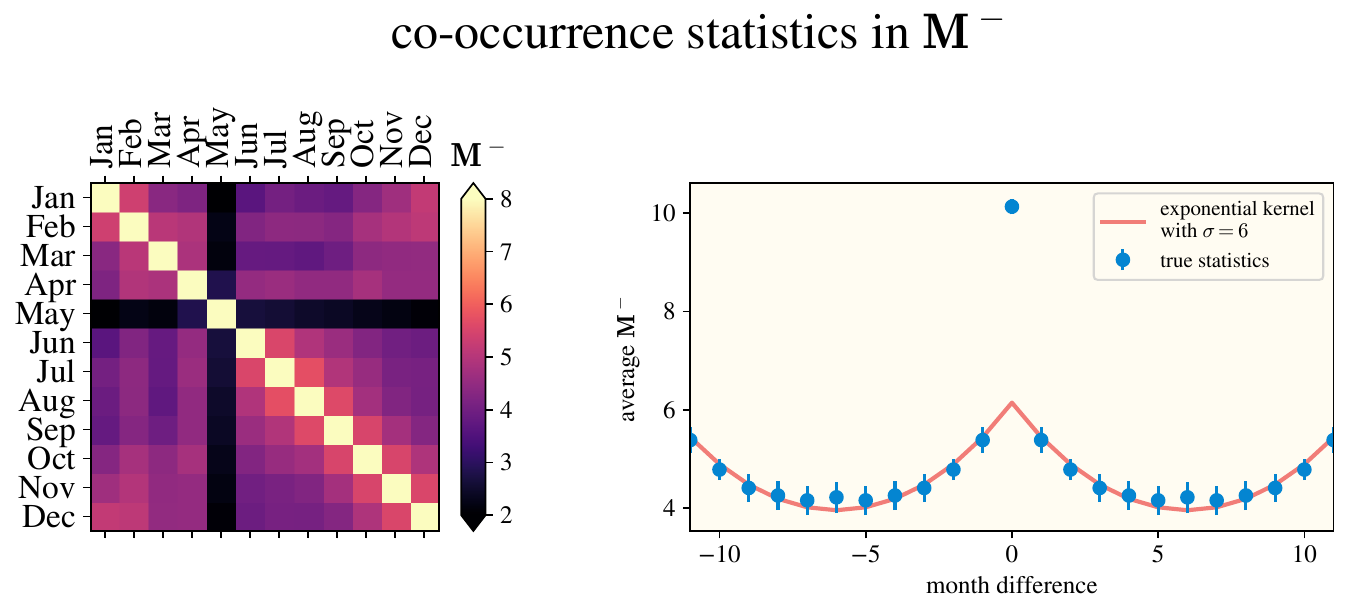}

    \caption{
    \textbf{Translation symmetry in month co-occurrence statistics.}
    The four main quadrants of the plot correspond to the matrices $\Mstar=\mM^+-\mM^-$, $|\Mstar|=\mM^++\mM^-$, $\mM^+$, and $\mM^-$, as defined in \cref{eq:qwem-psd-nsd-decomp}.
    For each, on the left we display the submatrix of calendar months and observe the circulant structure indicative of time translation symmetry.
    On the right, we plot the normalized co-occurrence as a function of the time interval between months, with bars indicating one standard deviation of variation across months.
    We find excellent agreement with the periodized exponential kernel, except along the main diagonal.
    We argue that this discrepancy does not affect eigenvector ordering, since adding $\kappa\mI$ to any matrix $\mA$ only shifts each eigenvalue by a constant $\kappa$ (leaving the eigengaps unchanged).
    We use the periodized kernel $C(\Delta x)=\sum_{n\in\mathbb{Z}} \exp(-|\Delta x+2n|/\sigma)$ because, e.g., a co-occurrence of April and January may refer not only to the closest January to April, but also with the following January, and with the previous year's January, etc.
    }
    \label{fig:app-month-mstar}
\end{figure}

\vspace{12pt}

\begin{figure}[h!tbp]
    \centering
    \includegraphics[width=0.48\textwidth]{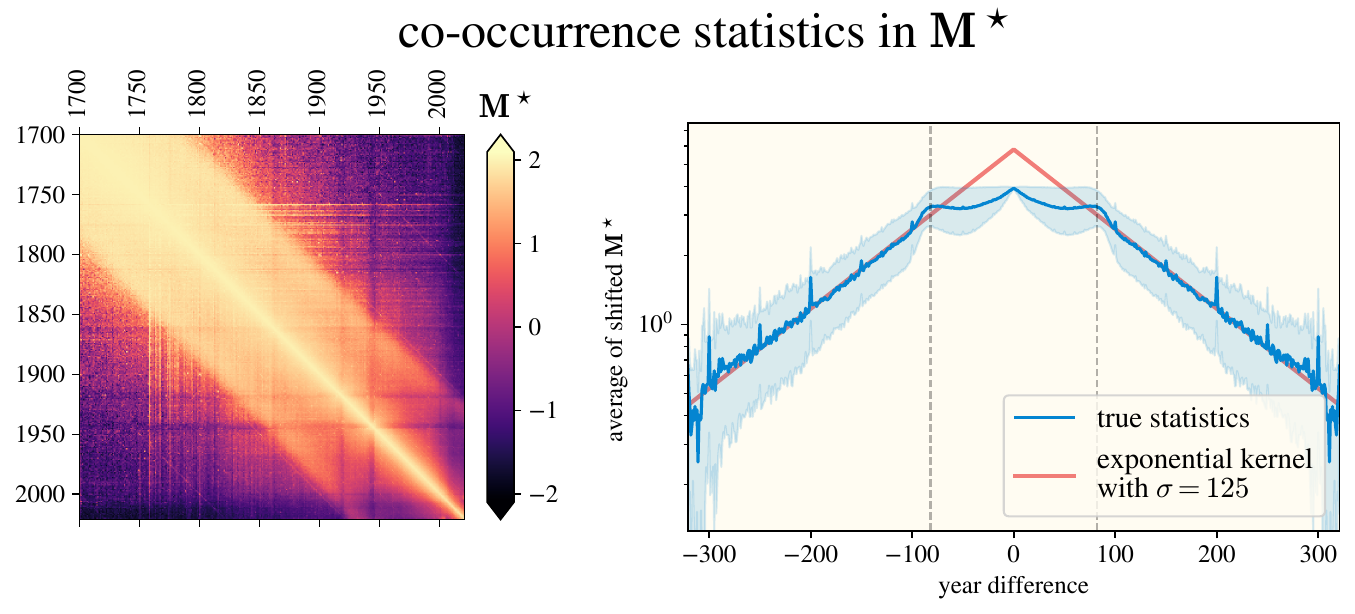}
    \hfill
    \includegraphics[width=0.48\textwidth]{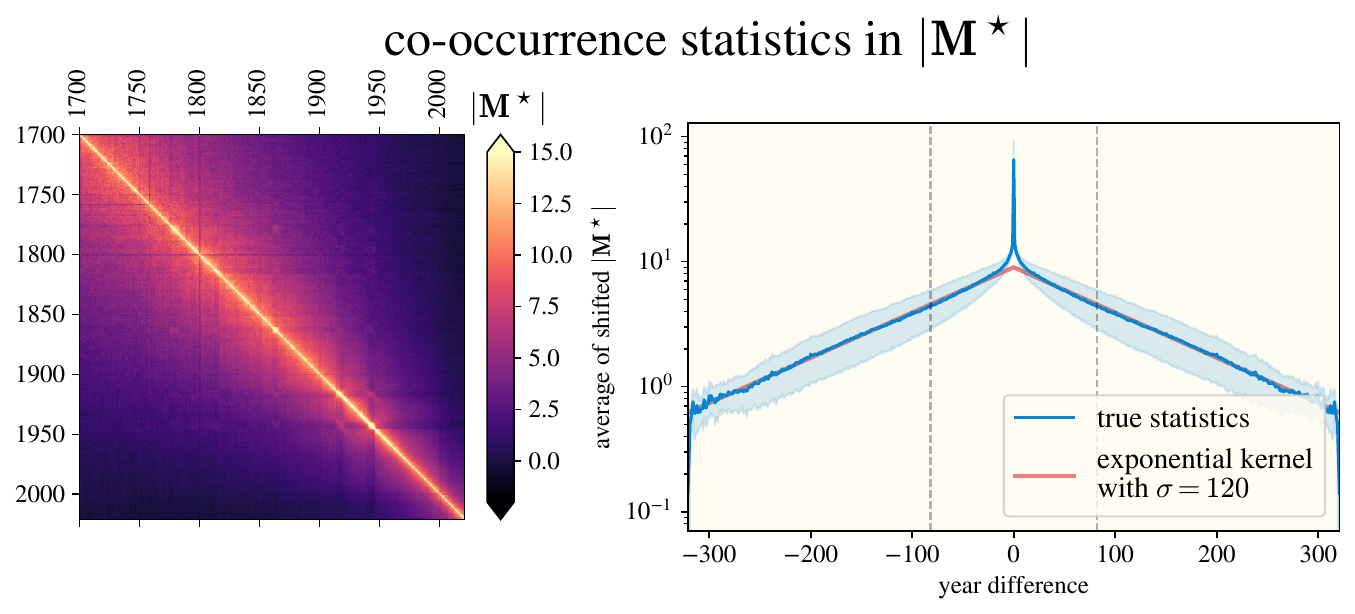}

    \vspace{0.5em}

    \includegraphics[width=0.48\textwidth]{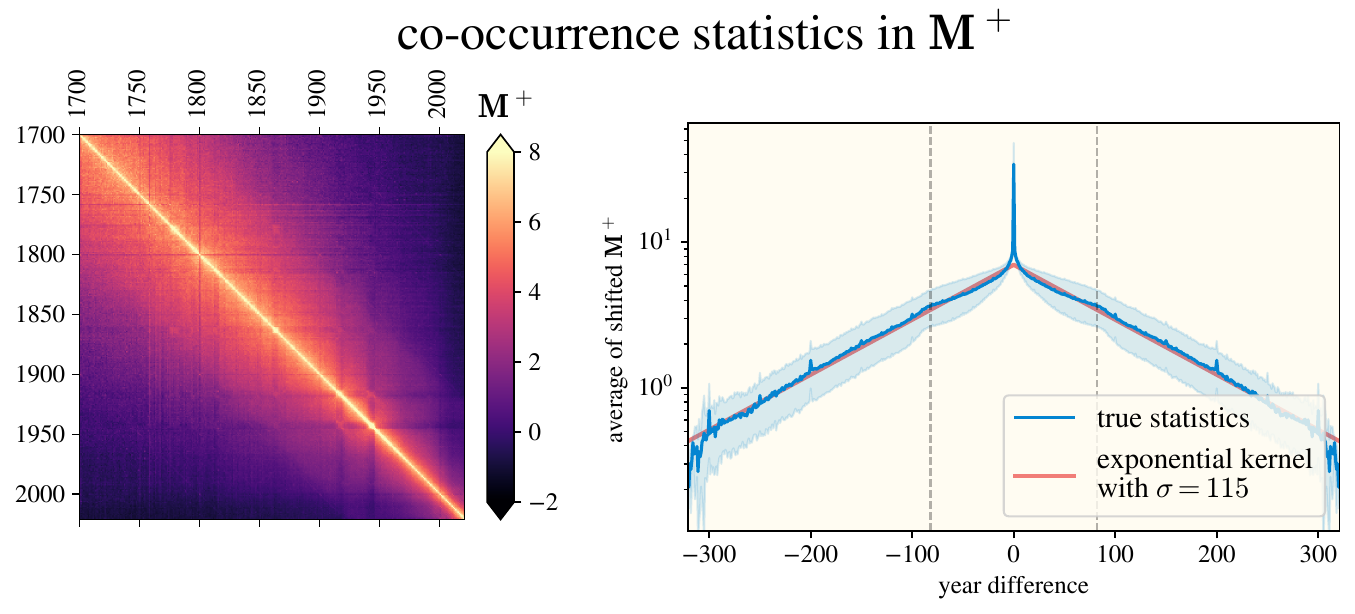}
    \hfill
    \includegraphics[width=0.48\textwidth]{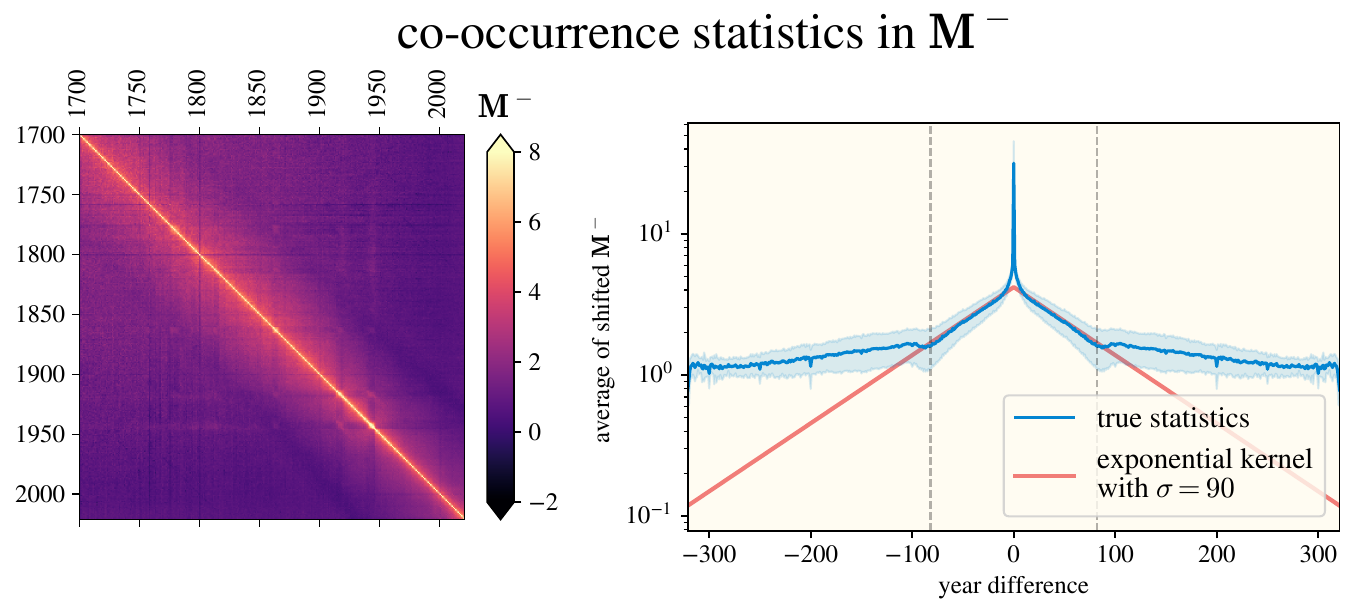}

    \caption{
    \textbf{Translation symmetry in year co-occurrence statistics.}
    Exactly analogous to \cref{fig:app-month-mstar}. Since $\Mstar$ takes negative values, we first shift the empirical kernel by a constant before fitting the exponential kernel $C(\Delta x)=\exp(-|\Delta x|/\sigma)$. This is equivalent to fitting a shifted exponential kernel to the statistics. We expect that the analysis in \cref{prop:exp-open} remains valid, since PCA projection will annihilate the constant mode.
    }
    \label{fig:app-year-mstar}
\end{figure}

\begin{figure}[htbp]
    \centering
    \includegraphics[width=0.48\textwidth]{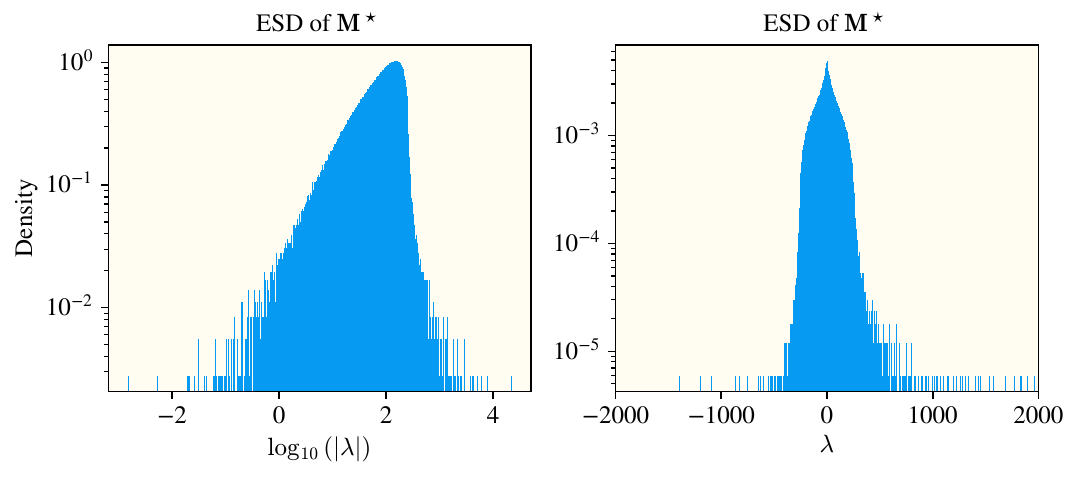}
    \includegraphics[width=0.48\textwidth]{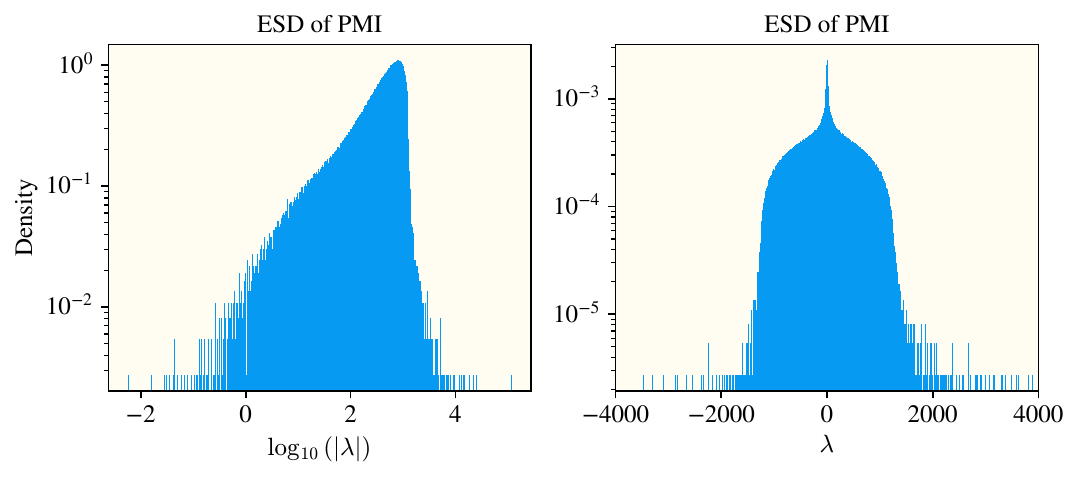}

    \caption{
    \textbf{Empirical spectral distribution of $\Mstar$ and PMI.}
    We show the histogram of eigenvalues of both $\Mstar$ (left two plots) and the PMI matrix (right two plots). For each, we plot both the logarithmic density $p(t)$ for $t\defn\log(|\lambda|)$ as well as $p(\lambda)$ directly. The latter shows that the spectrum is roughly symmetric, with the PSD and NSD components being of comparable rank ($\rank \mM^+ \approx \rank \mM^-$ as defined in \cref{eq:qwem-psd-nsd-decomp}). The former displays a linear growth, indicating a spectral bulk with no divergence at zero, followed by a steep drop-off, which may be interpreted as a bulk spectral edge. Many eigenvalues exist beyond the spectral edge, suggesting that these are semantic signals.
    }
    \label{fig:app-esd}
\end{figure}

\begin{figure}[htbp]
    \centering
    \includegraphics[width=0.29\textwidth]{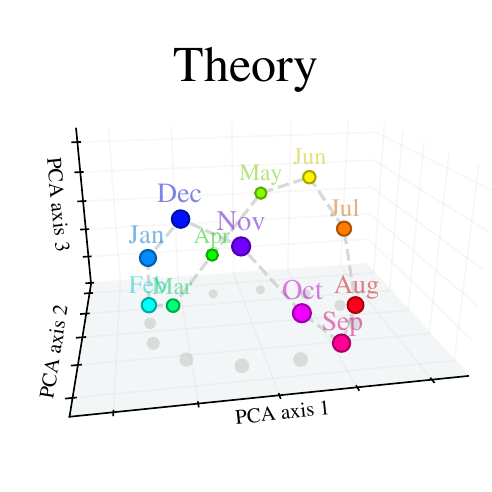}
    \includegraphics[width=0.31\textwidth]{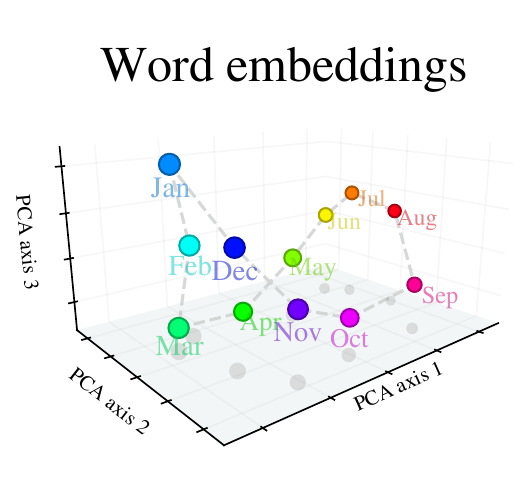}
    \includegraphics[width=0.33\textwidth]{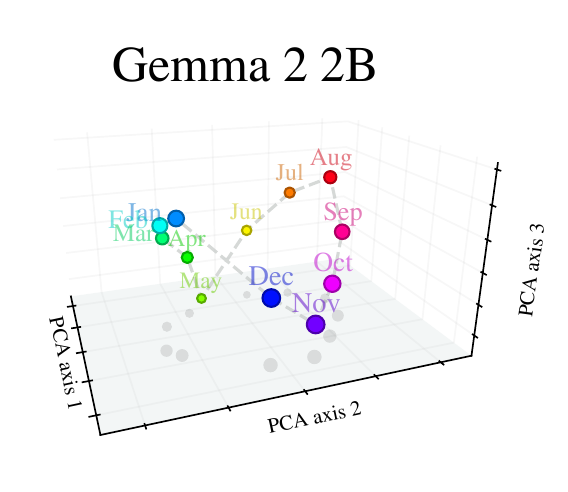}

    \caption{
    \textbf{Calendar Pringles.}
    Visualizing the calendar month representations in 3D, we observe the saddle-like geometry reported in prior work. This is a direct consequence of the Fourier geometry: the third mode encodes the first overtone above the fundamental frequency.
    }
    \label{fig:app-month-3d}
\end{figure}

\begin{figure}[htbp]
    \centering
    \includegraphics[width=\textwidth]{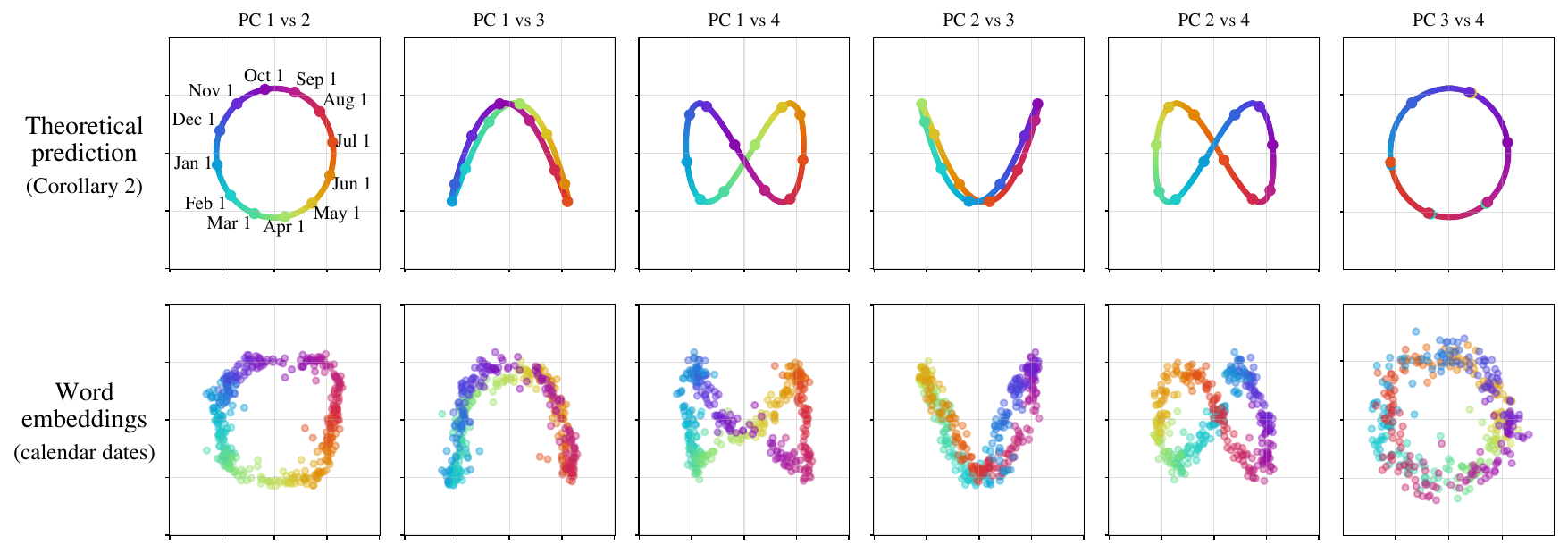}
    \caption{
    \textbf{Lissajous curves for calendar dates.}
    We tokenize the corpus to treat date phrases such as ``January 1'' as a single word. The resulting representational manifold for the 365 calendar dates displays Lissajous curves whose amplitudes, phases, and frequencies are analytically predicted by \cref{cor:exp-periodic}.
    }
    \label{fig:app-month-lissajous}
\end{figure}

\begin{figure}[htbp]
    \centering
    \includegraphics[width=0.24\textwidth]{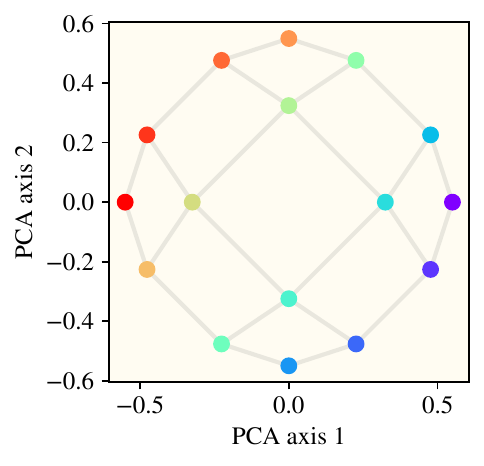}
    \includegraphics[width=0.24\textwidth]{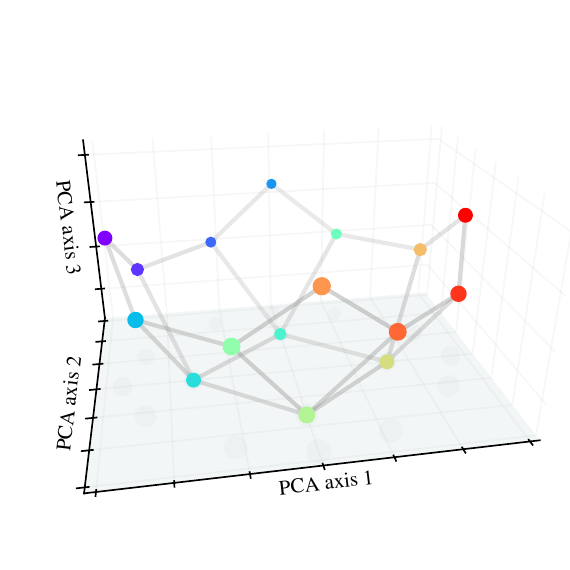}
    \includegraphics[width=0.24\textwidth]{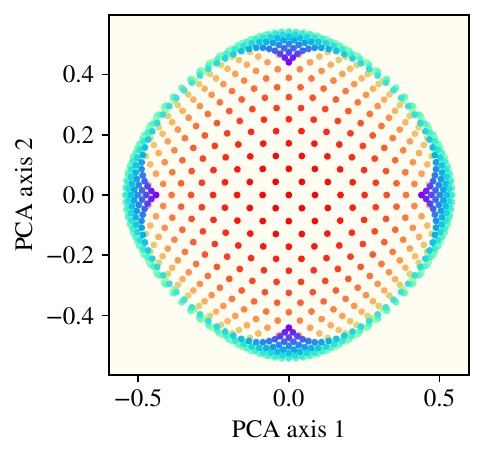}
    \includegraphics[width=0.24\textwidth]{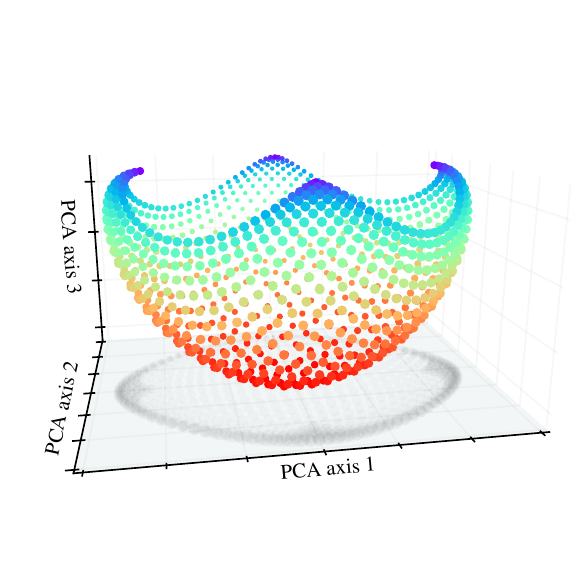}

    \caption{
    \textbf{Embedding geometry for 2-dimensional lattices.}
    We show the embedding geometry for 2D lattices with open BC and exponential kernel. The only difference between the left two and right two plots is that the lattice spacing is smaller on the right ($4\times 4$ grid vs. $31\times 31$ grid). For the denser grid (closer to the continuum limit), the cross sections of the embedding manifold have the same horseshoe-like geometry seen in the 1-dimensional case. In both cases, the center of the grid is faithfully reproduced in representation space, while the edges are distorted. This matches the results reported in \cite{park2025iclr}; we hypothesize that LLMs solve their in-context learning task (random walks on a grid) by computing co-occurrence statistics in-context using a matrix-factorization-like mechanism. This hypothesis is supported by a large literature explaining how transformers learn Markov models by computing the latent variables in-context \citep{bietti2023birth,edelman2024evolution,nichani2024transformers,chen2024unveiling}.
    }
    \label{fig:app-icl-grid}
\end{figure}

\begin{figure}[htbp]
    \centering
    \includegraphics[width=0.99\textwidth]{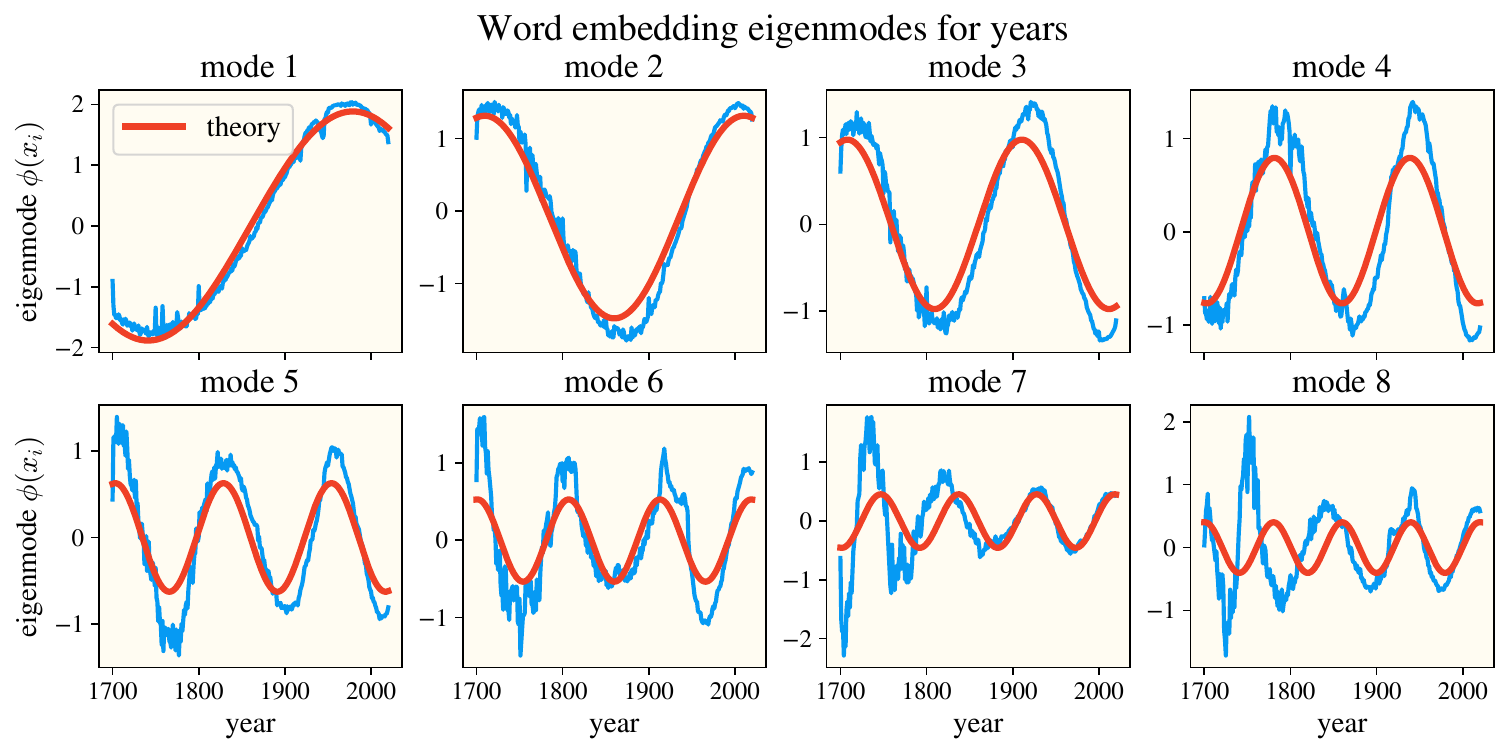}

    \caption{
    \textbf{Eigenmodes of history.}
    We provide an alternate visualization of the result in the left panel of \cref{fig:fig2}. We plot the eigenmodes of the embeddings for historical years and compare with our theoretical predictions. The Lissajous curves in \cref{fig:fig2} are obtained by choosing two of these modes, $\mu$ and $\nu$, and plotting the parametric curve $(x(t),y(t))=(\phi_\mu(t),\phi_\nu(t))$.
    }
    \label{fig:app-year-modes}
\end{figure}

\begin{figure}[htbp]
    \centering
    \includegraphics[width=0.48\textwidth]{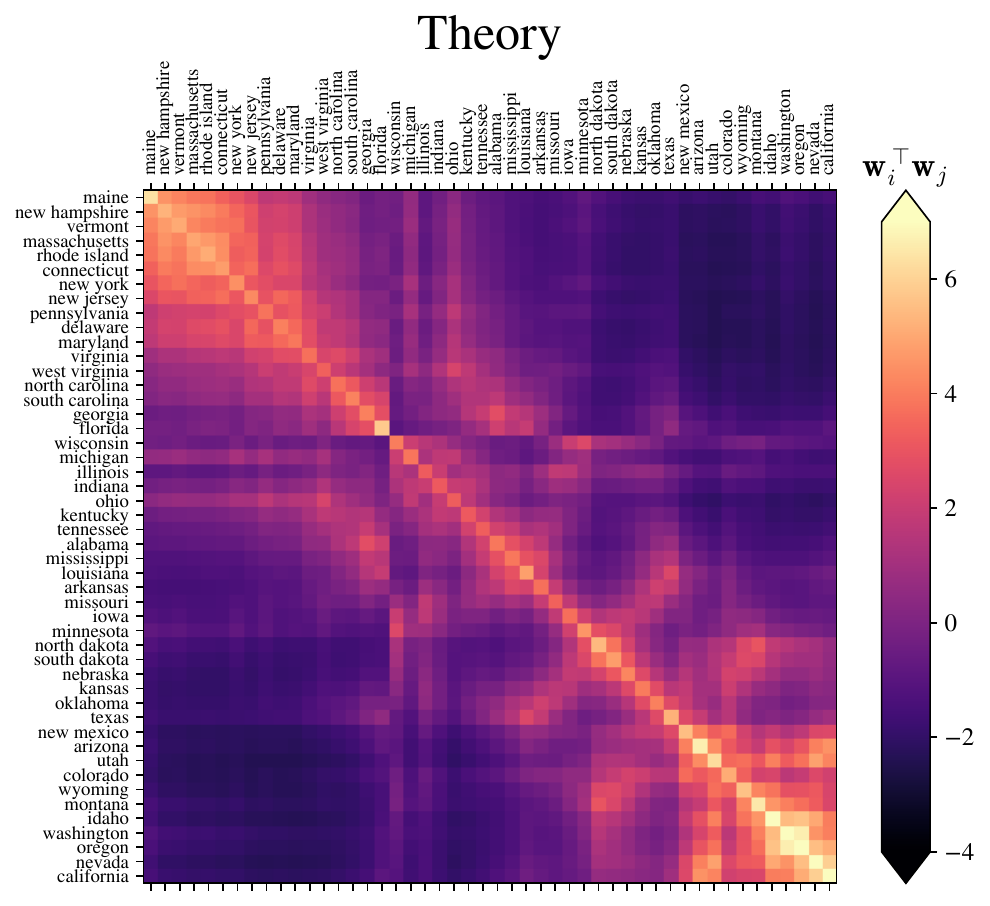}
    \hfill
    \includegraphics[width=0.48\textwidth]{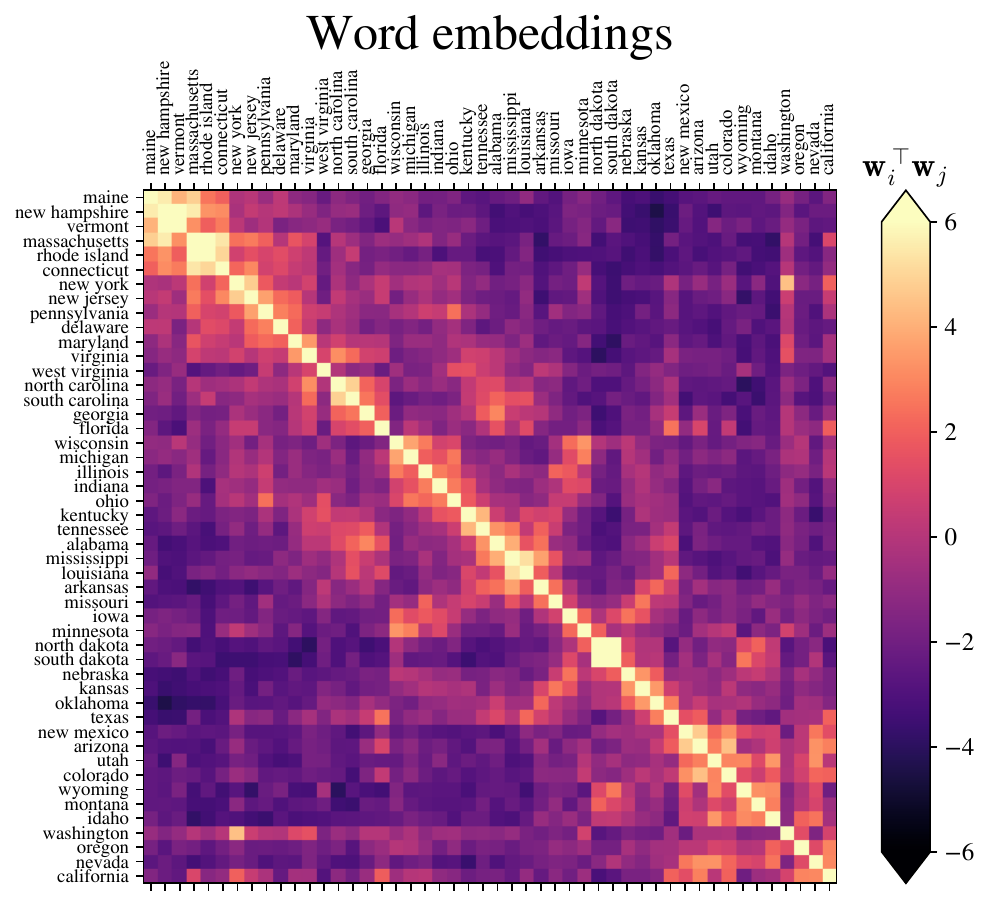}

    \vspace{0.5em}

    \includegraphics[width=0.48\textwidth]{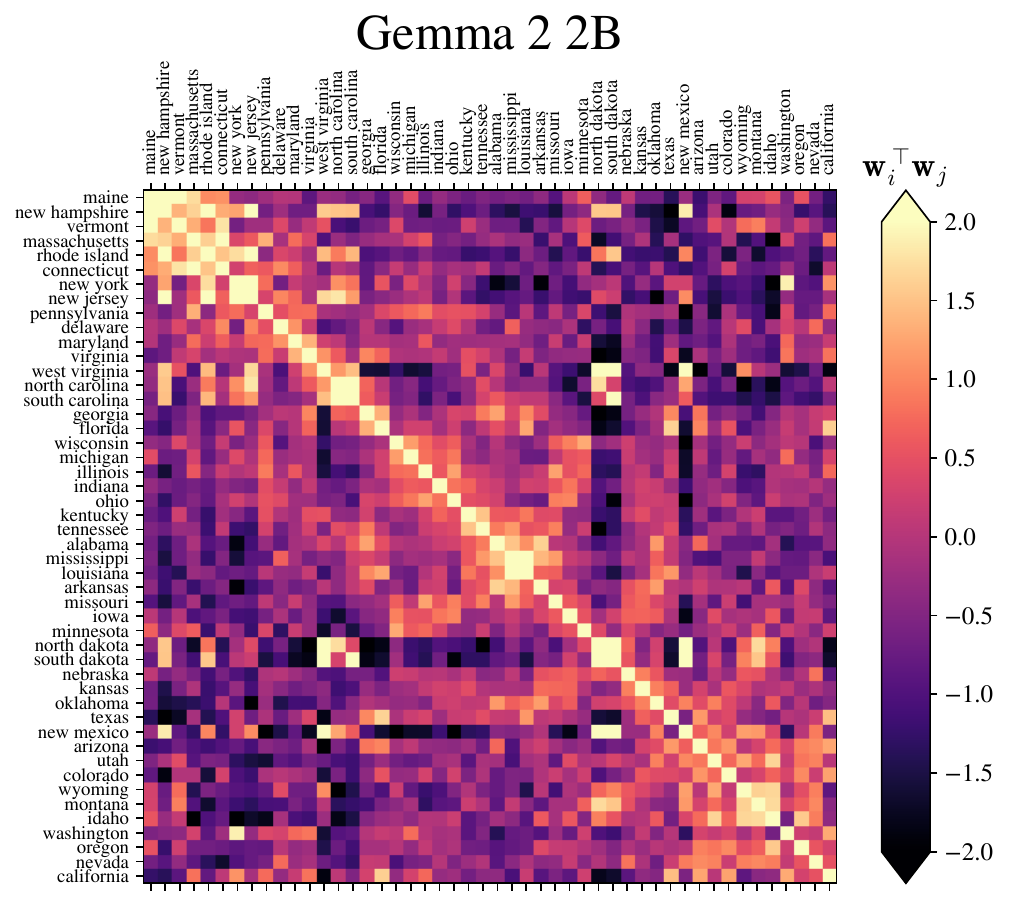}
    \hfill
    \includegraphics[width=0.48\textwidth]{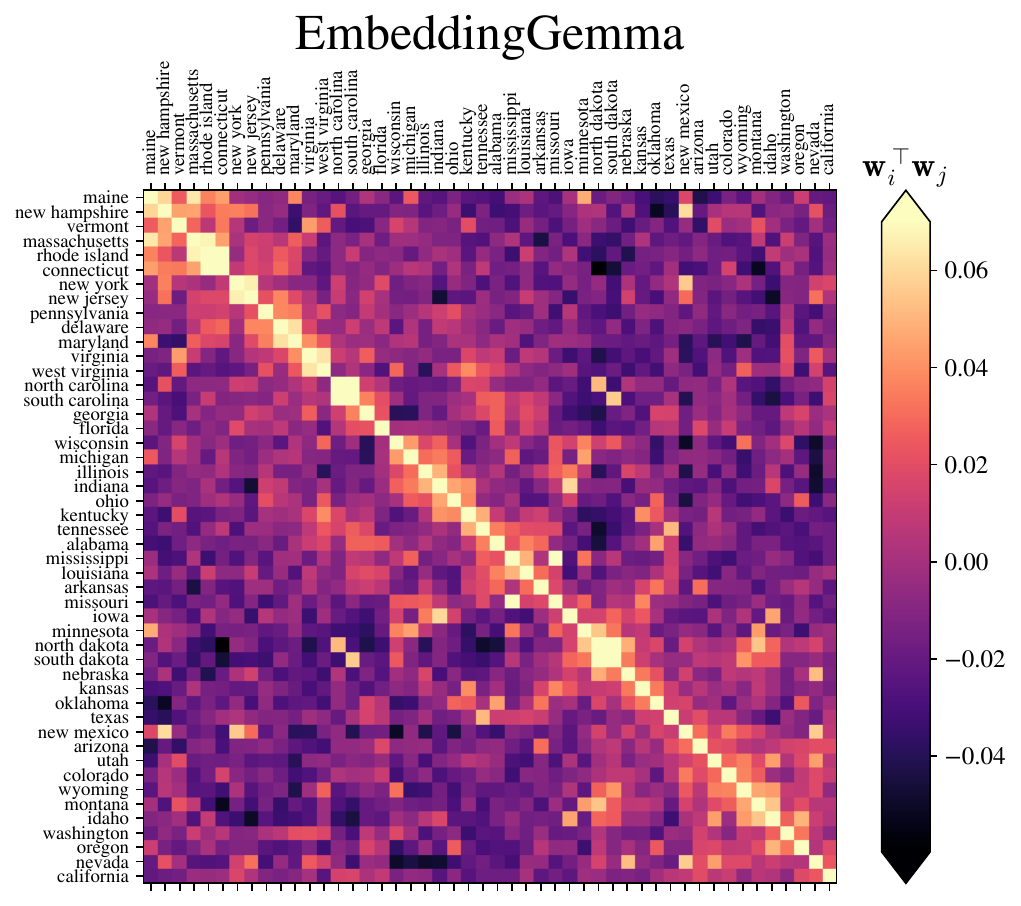}

    \caption{
    \textbf{Gram matrices for US geography.}
    We display the centered Gram matrices for model representations of the 48 contiguous US states, across the various models shown in \cref{fig:fig3}. The theoretical Gram matrix depends only on geographic distance, according to \cref{app:experiments-geography}. The empirical Gram matrices evidently contain other semantic signals; nonetheless, the geographic structure dominates, as evidenced in \cref{fig:fig3}.
    }
    \label{fig:app-states-gram}
\end{figure}

\clearpage

\begin{figure}[htbp]
    \centering
    \includegraphics[width=0.45\textwidth]{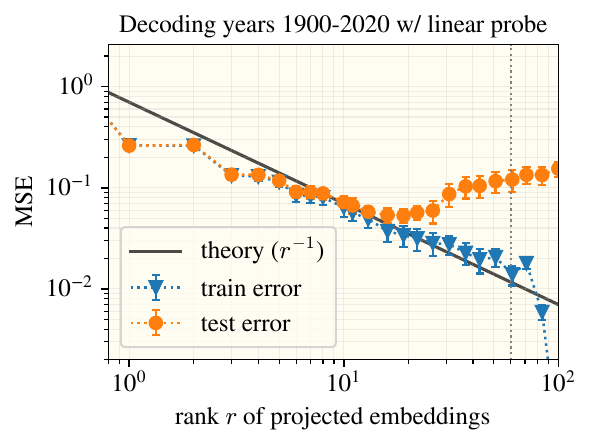}
    \includegraphics[width=0.45\textwidth]{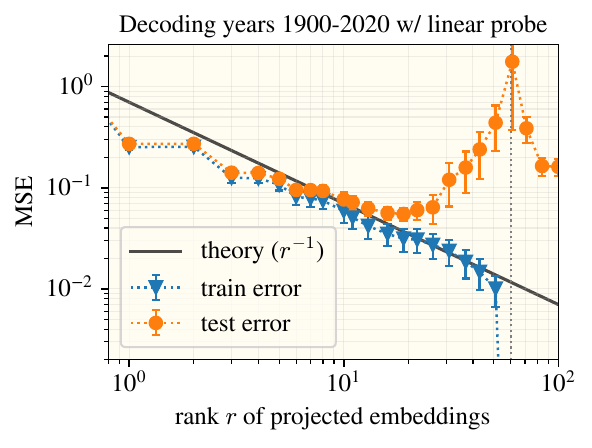}

    \caption{
    \textbf{Decoding coordinates with ridge regularization.}
    Optimally choosing the ridge regularization in the coordinate decoding task avoids the double-descent peak in test error. For convenience, we reproduce the right panel of \cref{fig:fig2} on the right.
    }
    \label{fig:app-decoding-ridge}
\end{figure}
\vspace{24pt}

\begin{figure*}[htbp]
    \centering
    \includegraphics[width=\textwidth]{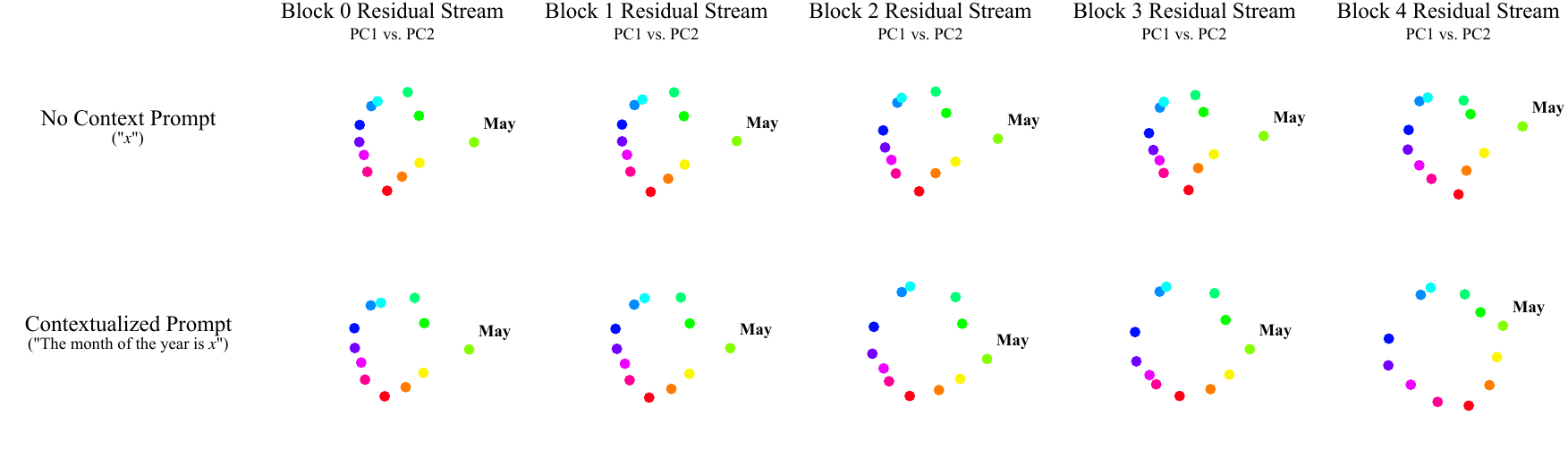}
    
    \caption{
    \textbf{Context allows polysemantic ambiguity to be resolved by the LLM.}
     Prompted with single token months, the embedding for May is distorted relative to the circle and remains that way throughout the layers of the LLM. After adding context to the prompts: ``The month of the year is $x$" for $x \in [\text{January},\ \dots,\ \text{December}]$, we observe that the distortion in the circular geometry associated to May fades.
    }
    \label{fig:appdx_contextualized_months}
\end{figure*}

\subsection{Helper-based reconstruction}
\label{sec:appdx_reconstruction_data}

Here we provide additional data supporting the helper-based reconstruction. In \Cref{fig:appdx_helper_reconstruction_words_numbers}, we show the reconstruction task PMI (as described in \ref{app:exp-details-fig4}) using non-seasonal ``number'' words ranging from ``one'' to ``seventeen,'' which do not exhibit seasonal modulation. The reconstructed PMI and embeddings from this task show that the months are not correctly ordered, consistent with the notion that the numbers do not vary in seasonality and therefore fail to assist in reconstruction.

\begin{figure}[ht]
  \centering
    \includegraphics[width=0.95\linewidth]{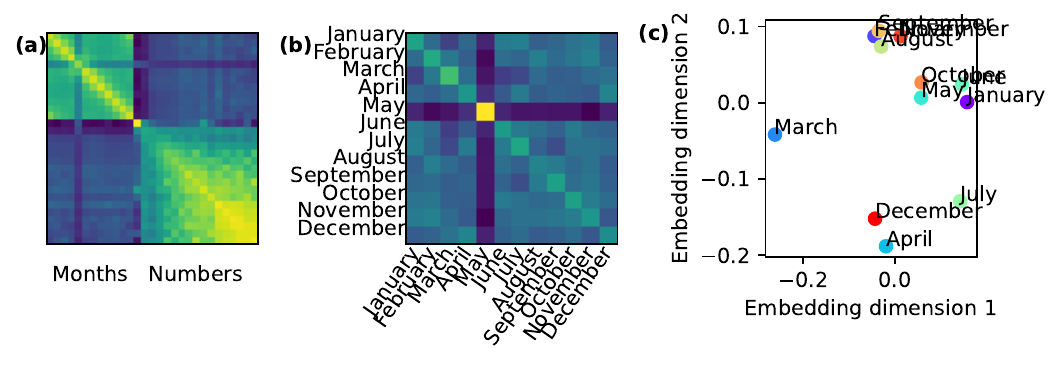}%
  
  \caption{\textbf{Number words are non-seasonal and do not enable reconstruction of seasonality.} (a) Reconstruction task PMI with number words (``one'', ``two'', $\ldots$, ``seventeen'') plotted versus months of year, showing no seasonal modulation in the PMI coupling the numbers and the months. 
  (b) Reconstructed PMI for the months, generated by the Gram matrix of the embeddings for the months derived from (a). (c) Embeddings from the reconstruction task, showing no clear ordering of the months.} \label{fig:appdx_helper_reconstruction_words_numbers}
\end{figure}

We can identify the affinity of each word in the vocabulary with the month embeddings as a proxy for their seasonality. We define the month-word affinity as $\mA = \mQ_\mathrm{words} \mQ_\mathrm{months}^\dagger \in \mathbb{R}^{V \times 12}$ where $\mQ_\mathrm{months}\in\mathbb{R}^{12\times K}$ for embedding dimension $K$ and vocabulary size $V$. By weighting each month's affinity by $e^{i 2 \pi m / 12}$ for month $m$, we can extract a seasonality score from the phase of each word's resulting complex affinity. Sorting words by the magnitude of the complex affinity (we take large magnitude to indicate a clear seasonal signal), we select the top-100 seasonal words and perform the reconstruction task again. As shown in \Cref{fig:appdx_helper_reconstruction_seasonal_words}, these words produce a very high quality reconstruction. 

\begin{figure}[ht]
  \centering 
  \includegraphics[width=0.95\linewidth]{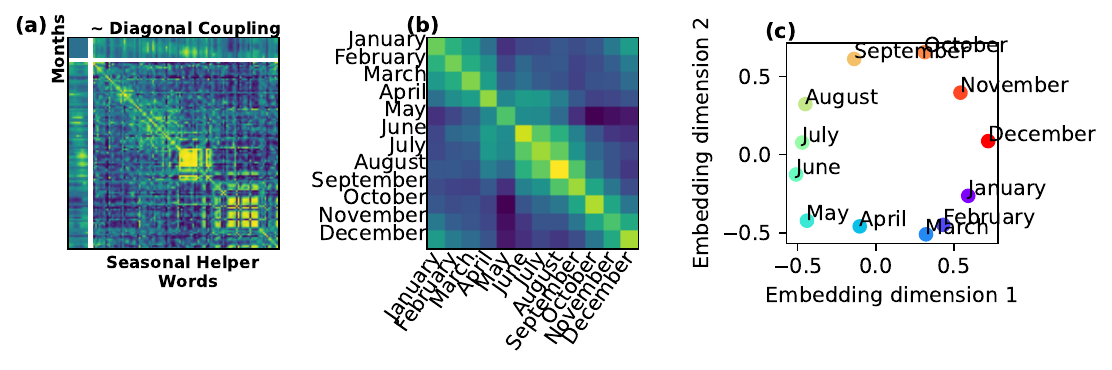}
  \caption{\textbf{Reconstruction with seasonal words.}
  (a) Reconstruction task PMI with the top-100 seasonal words plotted versus months of year, showing strong seasonal modulation. Indices of the auxilliary word subset have been ordered by their estimated seasonality score.  (b) Reconstructed PMI, generated by the Gram matrix of the embeddings for the months derived from a. (c) Embeddings from the reconstruction task, showing the correct ordering of the months.
  \label{fig:appdx_helper_reconstruction_seasonal_words}
  }
\end{figure}

In \Cref{fig:appdx_helper_reconstruction_seasonal_words_error}, we show the reconstruction error versus the number of helper words $H$ used in the reconstruction task, where the helpers are chosen from the top-100 seasonal words or randomly from the vocabulary. We see that the reconstruction error decreases as we increase the number of helper words, 
and that the top seasonal words yield a faster scaling and lower reconstruction error than random words, with an error scaling as $1/\sqrt{H}$.

\begin{figure}[ht]
  \centering 
  \includegraphics[width=0.4\linewidth]{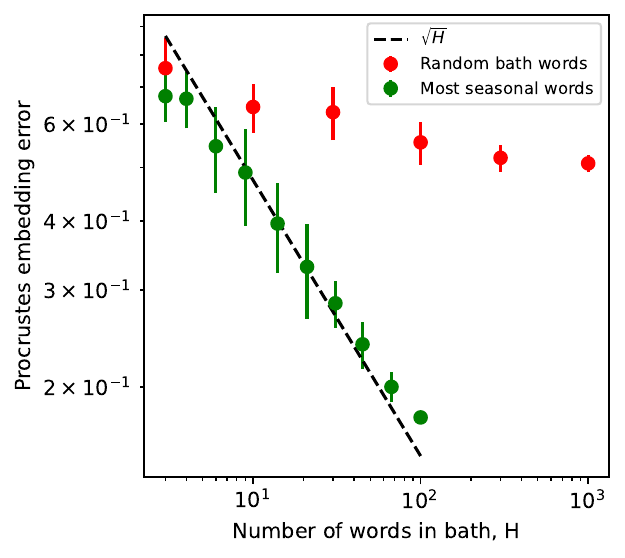}
  \caption{\textbf{Reconstruction error with H.} Here we plot the Procrustes error versus the number of helper words $H$ used in the reconstruction task, where the helpers are chosen from the top-100 seasonal words or randomly from the vocabulary. We see that the reconstruction error decreases as we increase the number of helper words. 
  \label{fig:appdx_helper_reconstruction_seasonal_words_error}
  }
\end{figure}

\clearpage
\section{Combined Model of Seasonality and Binary Semantic Attributes \label[app]{sec:combined_model}}
In Section 4.1, we consider words sharing a continuous latent attribute, but lacking any other additional semantic information. This is an  unrealistic assumption, and predicts that the PCA dimensions are strictly Fourier. Here, we now enrich the vocabulary by assigning to each word both a seasonal phase 
$x\in\{0,\dots,N-1\}$ and a binary attribute vector 
$\va=(a_1,\dots,a_d)\in\{\pm 1\}^d$, where the binary attributes encode additional semantic information (e.g. gender, singular vs. plural, royal vs non-royal, etc...), as in \cite{korchinski2025emergence}.  
The total vocabulary size is $N2^d$.

\paragraph{Multiplicative generative model.}

Seasonality and semantic attributes are assumed to influence co-occurrence
independently.  
The seasonal factor is the same as in \cref{sec:collective}:
\[
F_{\rm time}(x,y)=1+C(t_x-t_y),
\]
where $C$ is the circular autocorrelation of a unimodal kernel $g$.

For semantic attributes we use a fully factorized multiplicative model:
\begin{equation}
F_{\rm attr}(\va ,\vb)
= \prod_{r=1}^d (1+s_r a_r b_r),
\label{eq:Fattr-product}
\end{equation}
where $s_r$ measures the strength of attribute $r$.

Thus the generative probability of observing $(x,\va)$ and $(y,\vb)$ jointly is
\begin{equation}
P((x,\va),(y,\vb))
= P(x,\va)P(y,\vb)\,
  F_{\rm time}(x,y)\,
  F_{\rm attr}(\va,\vb).
\end{equation}

\paragraph{Exact PMI and decomposition into constant and structured terms}

The PMI is
\[
\mathrm{PMI}((x,\va),(y,\vb))
= \log F_{\rm time}(x,y)
+ \log F_{\rm attr}(\va,\vb).
\]

The semantic factor contributes
\begin{align}
\log F_{\rm attr}(\va,\vb)
  &= \sum_{r=1}^d \log(1+s_r a_r b_r). 
\end{align}
Because $a_r b_r\in\{\pm 1\}$, each term admits the exact decomposition
\[
\log(1+s_r a_r b_r)
= \alpha_r + \beta_r\,a_r b_r,
\]
with
\[
\alpha_r=\tfrac12\!\left[\log(1+s_r)+\log(1-s_r)\right],\qquad
\beta_r=\tfrac12\!\left[\log(1+s_r)-\log(1-s_r)\right].
\]
Hence
\begin{equation}
\log F_{\rm attr}(a,b)
= A + \sum_{r=1}^d \beta_r\,a_r b_r,
\qquad
A := \sum_{r=1}^d \alpha_r.
\label{eq:attr-PMI-exact}
\end{equation}

The constant $A$ affects all pairs equally, whereas the second term encodes the
semantic structure.

For the seasonal part we define:
\[
\log F_{\rm time}(x,y) = K_t(t_x-t_y).
\]

Combining these contributions yields the \emph{exact} PMI up to seasonal
linearization:
\begin{equation}
\boxed{
\mathrm{PMI}((x,\va),(y,\vb))
= K_t(t_x - t_y) 
+ A
+ \sum_{r=1}^d \beta_r\,a_r b_r.
}
\tag{E.1}
\end{equation}

\paragraph{Matrix structure on the product space}

Let $\mJ_{\rm time}$ and $\mJ_{\rm attr}$ denote the all-ones matrices of sizes 
$N$ and $2^d$, respectively.  
Define
\[
K(x,y)=K_t(t_x-t_y), \qquad
K_{\rm attr}(\va,\vb)=\sum_{r=1}^d \beta_r a_r b_r.
\]
Then (E.1) takes the exact Kronecker form
\begin{equation}
\boxed{
\mathbf{PMI}
= \mK_t\otimes \mJ_{\rm attr}
\;+\;
\mJ_{\rm time}\otimes \mK_{\rm attr}
\;+\;
A\, \mJ_{\rm time}\otimes \mJ_{\rm attr}.
}
\tag{E.2}
\end{equation}

Each term acts on a distinct tensor component:
\begin{itemize}
    \item $\mK_t$ and $\mJ_{\rm time}$ act on seasonal indices,
    \item  $\mK_{\rm attr}$ and $\mJ_{\rm attr}$ act on attribute indices.
\end{itemize}

\paragraph{Spectral decomposition}

We diagonalize the PMI in the product basis of: (i) Fourier modes $\phi_k$ on the seasonal circle, and (ii) Walsh characters $\psi_S$, which serve as the equivalent Fourier basis on the hypercube. For any subset of attribute indices $S\subseteq\{1,2,\ldots,d\}$, the Walsh character  $ \Psi_S(\va)$ is defined $ \Psi_S(\va) \equiv \prod_{r\in S} \va_{r} $.

\paragraph{Seasonal operators.}
As before,
\[
\mK_t\phi_k = \mu_k\,\phi_k, \qquad
\mJ_{\rm time}\phi_0 = N\,\phi_0, \qquad
\mJ_{\rm time}\phi_{k\neq 0}=0.
\]
where the relation on

\paragraph{Attribute operators.}
Walsh characters satisfy
\[
\mJ_{\rm attr}\psi_\varnothing = 2^d \psi_\varnothing,\qquad
\mJ_{\rm attr}\psi_{S\neq\varnothing}=0,
\]
and
\[
\mK_{\rm attr}\psi_{\{r\}} = 2^d\beta_r\,\psi_{\{r\}},\qquad
\mK_{\rm attr}\psi_{S} = 0 \text{ if } S=\varnothing \text{ or } |S|>1.
\]
which follows from the fact that $\mK(\va,\vb) = \sum_r^d \beta_r \psi_{\{r\}}(\va)\psi_{\{r\}}(\vb)$. Note this implies $\mK$ is rank $d$. 
\paragraph{Joint eigenbasis.}
All terms in (E.2) are diagonal in the product basis
\[
\Phi_{k,S}(x,\va)=\phi_k(x)\psi_S(\va).
\]

The three components contribute:
\[
(\mK_t\otimes J_{\rm attr}): \qquad 2^d\,\mu_k 
\ \text{on } S=\varnothing;
\]
\[
(\mJ_{\rm time}\otimes \mK_{\rm attr}):\qquad 
N\,2^d\beta_r 
\ \text{on } (k=0, S=\{r\});
\]
\[
(A\, \mJ_{\rm time}\otimes \mJ_{\rm attr}): \qquad
A\,N 2^d 
\ \text{only on } (k=0,S=\varnothing).
\]

Thus:

\begin{theorem}[Spectrum of the combined seasonal--attribute PMI]
For the PMI defined in \emph{(E.1)}, the eigenvectors are
$\Phi_{k,S}(x,a)=\phi_k(x)\psi_S(a)$ with eigenvalues
\begin{equation}
\lambda_{k,S}
=
\begin{cases}
2^d\mu_k + A\,N2^d, & k=0,\ S=\varnothing,\\[4pt]
2^d\mu_k, & k\neq 0,\ S=\varnothing,\\[4pt]
N\,2^d\beta_r, & k=0,\ S=\{r\},\\[4pt]
0, & \text{otherwise}.
\end{cases}
\tag{2.3}
\end{equation}
\end{theorem}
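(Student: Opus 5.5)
The approach is to treat (E.2) as a sum of three Kronecker products and verify directly that every product vector $\Phi_{k,S}=\phi_k\otimes\psi_S$ is a simultaneous eigenvector of each summand. The identity $(\mA\otimes\mB)(\vu\otimes\vv)=(\mA\vu)\otimes(\mB\vv)$ then gives $\lambda_{k,S}$ as the sum of the three contributions. Completeness follows from a counting argument. The Fourier modes $\{\phi_k\}$ form an orthonormal basis of $\mathbb{C}^N$. The normalized Walsh characters $\{2^{-d/2}\psi_S\}_{S\subseteq[d]}$ form an orthonormal basis of $\mathbb{R}^{2^d}$, since $\sum_{\va}\psi_S(\va)\psi_{S'}(\va)=2^d\delta_{SS'}$. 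So the $N2^d$ products form an orthonormal basis of the full space, and the listed eigenpairs exhaust the spectrum.

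First I record the one-factor spectra. On the seasonal side, $\mK_t$ is circulant with entries $K_t(t_x-t_y)$, so by the diagonalization already used in \cref{ssec:collective-latent} (and \cref{prop:fourier-embed}) we have $\mK_t\phi_k=\mu_k\phi_k$. The all-ones matrix is the circulant with kernel identically $1$, whose DFT is $N\delta_{k0}$; hence $\mJ_{\rm time}\phi_k=N\delta_{k0}\phi_k$.

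On the attribute side, $\mJ_{\rm attr}=\psi_\varnothing\psi_\varnothing^\top$, so Walsh orthogonality gives $\mJ_{\rm attr}\psi_S=2^d\delta_{S\varnothing}\psi_S$. Next, $\mK_{\rm attr}=\sum_r\beta_r\psi_{\{r\}}\psi_{\{r\}}^\top$. Applying this to $\psi_S$ and using orthogonality again gives $\mK_{\rm attr}\psi_S=\sum_r\beta_r 2^d\delta_{S,\{r\}}\psi_S$. This equals $2^d\beta_r\psi_{\{r\}}$ when $S=\{r\}$ and $0$ for $S=\varnothing$ or $|S|\ge 2$.

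Then I combine the three terms:
\begin{align*}
(\mK_t\otimes\mJ_{\rm attr})\Phi_{k,S}&=2^d\mu_k\delta_{S\varnothing}\,\Phi_{k,S},\\
(\mJ_{\rm time}\otimes\mK_{\rm attr})\Phi_{k,S}&=N\delta_{k0}\,2^d\textstyle\sum_r\beta_r\delta_{S,\{r\}}\,\Phi_{k,S},\\
A(\mJ_{\rm time}\otimes\mJ_{\rm attr})\Phi_{k,S}&=AN2^d\delta_{k0}\delta_{S\varnothing}\,\Phi_{k,S}.
\end{align*}
Summing and splitting into the four cases reproduces (2.3).
\begin{itemize}
\item $(0,\varnothing)$: the first and third terms survive, giving $2^d\mu_0+AN2^d$.
\item $(k\ne0,\varnothing)$: only the first survives, giving $2^d\mu_k$.
\item $(0,\{r\})$: only the second survives, giving $N2^d\beta_r$.
\item $(k\ne0,\{r\})$ or $|S|\ge2$: every Kronecker delta vanishes, giving $0$.
\end{itemize}

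There is no real obstacle. The only points needing care are bookkeeping ones. First, the step from the additive PMI (E.1) to the Kronecker form (E.2) must be checked entrywise on the product index $(x,\va)$; this is immediate because each summand depends on only one factor's indices. Second, the $\phi_k$ are complex. To state a real eigenbasis I would pair $\phi_k$ with $\phi_{-k}=\phi_k^*$, using $\mu_k=\mu_{-k}$, which holds because $K_t$ is even. The cosine and sine combinations then share the same eigenvalue, just as in \cref{prop:fourier-embed}.
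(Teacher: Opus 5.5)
Your proposal is correct and matches the paper's argument: diagonalize each Kronecker factor separately (Fourier modes for $\mK_t$ and $\mJ_{\rm time}$, Walsh characters for $\mJ_{\rm attr}$ and $\mK_{\rm attr}$), then sum the three contributions on the product basis $\Phi_{k,S}=\phi_k\otimes\psi_S$. Your completeness argument (the $N2^d$ orthonormal product vectors span the space) and your remark on pairing $\phi_{\pm k}$ into a real basis are small additions that the paper leaves implicit.
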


Crucially, the PMI decomposes into orthogonal seasonal and binary attributes subspaces.  All the geometric structures obtained in the main text can thus be obtained in this model again by projecting  on the  space of the $\phi_k(x)\psi_{S=\varnothing}$.
\clearpage


\newpage
\section{Proofs}
\label[app]{app:proofs}

\subsection{Latent semantic lattice}
\label[app]{app:semanticlattice}

Let $\sys$ be any subset of the vocabulary with cardinality $|\sys|=L^D$, for positive integers $L$ and $D$. To define the latent semantic lattice, we first define the centered index set $\mathcal N$ as
\begin{equation}\label{eq:centered-index-set}
\mathcal N \defn
\begin{cases}
\{-\tfrac{L}{2},-\tfrac{L}{2}+1,\dots,\tfrac{L}{2}-1\}^{D}, & L \text{ even},\\[4pt]
\{-\tfrac{L-1}{2},-\tfrac{L-1}{2}+1,\dots,\tfrac{L-1}{2}\}^{D}, & L \text{ odd}.
\end{cases}
\end{equation}
This defines a simple integer lattice in $D$ dimensions with $L$ sites along each axis; thus $|\mathcal N|=L^{D}$.


For each word $i\in\sys$, let $i \mapsto \vn_i$ be a bijective map from a word to its lattice index.
Define the latent semantic continuum coordinate for word $i$ as $\vx_i \defn 2\vn_i/L$.
Thus the latent coordinates constitute a lattice on the interval $[-1,1]^D$.

\subsubsection{Wavevectors for periodic BC}
With periodic boundary conditions, we may define the associated reciprocal lattice. We begin with the auxiliary notion of a positive half-space: we write $\vn\succ\bm 0$ if the first nonzero component of $\vn$ is positive, i.e.,
\begin{equation}
\vn\succ\bm 0
\iff
\exists j\in\{1,\dots,D\} \text{ such that }
n_{1}=\cdots=n_{j-1}=0 \text{ and } n_{j}>0.
\end{equation}

Next, for each $\vn\in\mathcal N$ we define the periodized negation $\ominus\vn\in\mathcal N$ to be the unique element satisfying
\begin{equation}
\ominus\vn \equiv -\vn \pmod L \quad \text{(componentwise)}.
\end{equation}

Then we may define the following disjoint sets:
\begin{align}
\mathcal K_{+}
&\defn \{\pi \vn : \vn\in \mathcal N \text{ and } \ominus\vn \neq \vn \text{ and }\vn\succ\bm 0\},\\[4pt]
\mathcal K_{-}
&\defn \{ \ominus\vk : \vk\in \mathcal K_{+}\},\\[4pt]
\mathcal K_{\mathrm{sc}}
&\defn \{\pi \vn : \vn\in\mathcal N\setminus\{\bm 0\} \text{ and } \ominus\vn = \vn\},\\[4pt]
\mathcal K_0
&\defn \{\bm 0 \}.
\end{align}
Their union comprises the set of wavevectors allowed by the periodic boundary conditions:
\begin{align}
\mathcal K
&\defn 
\mathcal K_{+}
\;\cup\;
\mathcal K_{-}
\;\cup\;
\mathcal K_{\mathrm{sc}}
\;\cup\;
\mathcal K_0\\
&=\{\pi\vn_i: i\in\mathcal S\}.
\end{align}
Despite being constructed from the same index set, the lattice coordinates $\{\vx_i\}_i$ and the wavevectors $\{\vk_\mu\}_\mu$ are not commensurate. The coordinates occupy a lattice in real space whereas the wavevectors live on a lattice in reciprocal space.

Each wavevector $\vk\in\mathcal{K}$ is associated with the plane wave mode $\exp(\iu \vk^\top\vx)$.
$\mathcal{K}_{+}$ and $\mathcal{K}_{-}$ correspond to $\pm$ mode pairs, $\mathcal{K}_\mathrm{sc}$ corresponds to nonzero self-conjugate modes, and $\mathcal{K}_0$ corresponds to the constant (DC) mode. When $L$ is odd, the only self-conjugate wavevector is $\vk=\bm 0$, so $\mathcal K_{\mathrm{sc}}=\varnothing$; when $L$ is even there are $2^D-1$ nonzero self-conjugate modes, which lie on the faces of the Brillouin zone and correspond to plane waves at the Nyquist frequency of the lattice.

Define $P\defn|\mathcal K_{+}|$ and $S\defn|\mathcal K_{\mathrm{sc}}|$. We choose an ordering $\mu\mapsto\vk_\mu$ of the wavevectors $\mathcal K$ such that
\begin{equation}
\mathcal K
= \{\vk_{2p-1},\vk_{2p}\}_{p=1}^{P}
\;\cup\;
\{\vk_{2P+s}\}_{s=1}^{S}
\;\cup\;
\{\vk_{2P+S+1}\},
\end{equation}
with
\begin{equation}
\vk_{2p-1}\in\mathcal K_{+},\qquad
\vk_{2p}=\ominus\vk_{2p-1}\in\mathcal K_{-},\qquad
\vk_{2P+s}\in\mathcal K_{\mathrm{sc}},\qquad
\vk_{2P+S+1}=\bm 0 .
\end{equation}
Thus $\mathcal{K}$ contains first the conjugate pairs (in arbitrary order), followed by the nonzero self-conjugate modes (arbitrary order), followed by the zero mode.

\subsection{Proof of Fourier representation geometry (periodic lattice)}
\label[app]{app:fourier-embed-proof}

\paragraph{\cref{prop:fourier-embed}, precise statement.}
Let $\sys$ be any subset of the vocabulary.
Using the notation defined in \cref{app:semanticlattice}, we define the associated latent semantic lattice $\{\vx_i\}$ as well as the associated allowed wavevectors $\mathcal{K}$ with enumeration $\{\vk_\mu\}$ and partition indices $P, S$.
Let $\Mstar\in\R^{V\times V}$ denote the normalized co-occurrence matrix.

Assume \cref{asmp:symmetry-psd-nsd} holds on $\sys$ with co-occurrence convolution kernel $C(\cdot)$.
Define $\tilde m (\vk)\defn \mathcal{F}_\vx [C(\mathrm{dist}(\vx, \bm{0}))]$, the Fourier-domain transfer function associated with the lattice impulse response $m(\vx)\defn C(\mathrm{dist}(\vx, \bm{0}))$.

Let $\mW$ denote the word embeddings obtained as specified in \cref{app:qwem-asym}.
Assume the embedding dimension satisfies $d\geq \rank \Mstar$.
Let $\overline \mW_\sys$ denote the PCA projection of the embeddings for $\sys$ as specified in \cref{ssec:prelim-geom}.

Then, up to orthogonal transformations within degenerate subspaces, and up to permutations of the principal directions, the PCA-aligned embeddings are given by
\begin{equation}
    \overline \mW_{\sys\idx{i \mu}} = 
    \begin{cases}
        \sqrt{\dfrac{2 |\tilde m(\vk_\mu)|}{|\sys|}} \sin(\vk_\mu^\top \vx_i)
        & \text{ if } \mu\leq 2P \text{ and } \mu \text{ is odd, }\\[12pt]
        \sqrt{\dfrac{2 |\tilde m(\vk_\mu)|}{|\sys|}} \cos(\vk_\mu^\top \vx_i)
        & \text{ if } \mu\leq 2P \text{ and } \mu \text{ is even, }\\[12pt]
        \sqrt{\dfrac{|\tilde m(\vk_\mu)|}{|\sys|}} \cos(\vk_\mu^\top \vx_i)
        & \text{ if } 2P<\mu\leq (2P+S), \\[12pt]
        0 & \text{ otherwise.}
    \end{cases}
\end{equation}

\paragraph{Proof.}
Let $|\Mstar|$ denote the matrix absolute value of $\Mstar$. Let $\mH\in\R^{|\sys|\times|\sys|}$ denote the submatrix of $|\Mstar|$ corresponding to words in $\sys$. Define the mean-centering projector $\mP\defn \mI-|\sys|^{-1}\bm 1\bm 1^\top$.

Since $\mW$ has no rank constraint with respect to $\Mstar$, it follows from \cref{app:qwem-asym} that $\mW\T \mW=|\Mstar|$. Therefore the PCA-projected embeddings $\overline\mW_\sys$ factorize the centered submatrix of $|\Mstar|$:
\begin{equation}
    \overline \mW_\sys \T {\overline \mW_\sys} = \mP  \mH \T\mP 
\end{equation}
and the PCA geometry of the projected embeddings can be directly obtained from the spectral decomposition of $\mP \mH \T\mP$.

In general, knowing the spectral decomposition of $\mH$ does not allow one to directly infer the spectral decomposition of its centered version. However, in this case, we will find that the constant mode $\bm 1$ is an eigenvector of $\mH$, so centering simply removes this mode without mixing eigenvectors, allowing the remaining spectrum to be read off directly.
With this knowledge, we may begin by explicitly diagonalizing $\mH$.

We will follow the standard argument for diagonalizing block-circulant matrices with circulant blocks. Let $\vv\in\R^{|\sys|}$ be a candidate eigenvector of $\mH$.
We define $v(\vx_i)\defn \vv_\idx{i}$ and use a plane wave ansatz: $v(\vx_i)=\exp(\iu \T\vk\vx_i)$, where $\iu$ is the imaginary unit and $\vk\in\mathcal{K}$.
Then the $j$-th coordinate of the image of $\vv$ is
\begin{align}
  [\mH \vv]_\idx{j} &= \sum_{i=1}^{|\sys|} C\!\left(\mathrm{dist}(\vx_i,\vx_j)\right) v(\vx_i)\\[4pt]
  &= \sum_{\va\in \{\vx_i\}} C\!\left(\mathrm{dist}(\mathbf{0},\va)\right) \exp(\iu \T\vk(\vx_j+\va)) \\[4pt]
  &= \left(\sum_{\va\in \{\vx_i\}} C\!\left(\mathrm{dist}(\mathbf{0},\va)\right) \exp(\iu \T\vk \va)\right) \vv_{\idx{j}}\\[4pt]
  &= \tilde m(\vk) \vv_{\idx{j}},
\end{align}
where in the second step we use the periodic lattice symmetry and perform the change of variables $\va \defn \vx_i - \vx_j$, and the final step simply recognizes the discrete Fourier transform on the lattice. This confirms that $\vv$ is a (complex-valued) eigenvector of $\mH$ with eigenvalue $\tilde m(\vk)$. The eigenvalue is real, by the spectral theorem for real symmetric matrices.

The function $C(\mathrm{dist}(\mathbf{0},\vx))$ is real-valued and even in $\vx$, implying conjugate symmetry in Fourier space: $\tilde m(\vk)=\tilde m(\ominus \vk)$. Therefore the eigenvectors come in degenerate pairs, and we may superpose them in the standard way to obtain real-valued Fourier modes. These Fourier eigenvectors, considered over all lattice wavevectors $\vk$, form the standard orthonormal $D$-dimensional discrete Fourier basis and hence span $\R^{|\sys|}$. Thus, we have fully diagonalized $\mH$:
\begin{equation}
\begin{aligned}
    {\mH}_\idx{ij} = & \hspace{12pt} \sum_{\text{odd } \mu \leq 2P} \tilde m(\vk_\mu) 
    \frac{\sin(\T \vk_\mu \vx_i)\sin(\T \vk_\mu \vx_j)}{|\sys|/2} \\[4pt]
    &+ \sum_{\text{even } \mu \leq 2P} \tilde m(\vk_\mu) 
    \frac{\cos(\T \vk_\mu \vx_i)\cos(\T \vk_\mu \vx_j)}{|\sys|/2} \\[4pt]
    &+ \sum_{2P < \mu \leq 2P+S} \tilde m(\vk_\mu) 
    \frac{\cos(\T \vk_\mu \vx_i)\cos(\T \vk_\mu \vx_j)}{|\sys|} \\[4pt]
    &+ \tilde m(\bm 0) \frac{1}{|\sys|}.
\end{aligned}
\end{equation}
The third sum is over nonzero self-conjugate modes; since these modes are at the Nyquist frequency, their values simply alternate $\pm1$ on the lattice. Therefore the eigenvector normalization is $\sqrt{1/|\sys|}$ rather than $\sqrt{2/|\sys|}$. The same holds for the constant mode (final term).

It is easy to verify that centering annihilates the constant mode. Therefore the Gram matrix of the centered embeddings satisfy
\begin{equation}
\begin{aligned}
    \left[\overline\mW_\sys \T {\overline\mW_\sys}\right]_\idx{ij} = & \hspace{12pt} \sum_{\text{odd } \mu \leq 2P} \tilde m(\vk_\mu) 
    \frac{\sin(\T \vk_\mu \vx_i)\sin(\T \vk_\mu \vx_j)}{|\sys|/2} \\[4pt]
    &+ \sum_{\text{even } \mu \leq 2P} \tilde m(\vk_\mu) 
    \frac{\cos(\T \vk_\mu \vx_i)\cos(\T \vk_\mu \vx_j)}{|\sys|/2} \\[4pt]
    &+ \sum_{2P < \mu \leq 2P+S} \tilde m(\vk_\mu) 
    \frac{\cos(\T \vk_\mu \vx_i)\cos(\T \vk_\mu \vx_j)}{|\sys|}
\end{aligned}
\end{equation}

Since the sinusoidal modes are mutually orthogonal, the Gram matrix is already expressed in the diagonalized form $\overline\mW_\sys \T {\overline\mW_\sys} = \mPhi\mLambda\T\mPhi$,
where the columns $\mPhi_\idx{\cdot\mu}$ encode the normalized sinusoids, $\mLambda_\idx{\mu\mu}=\tilde m(\vk_\mu)$, and $\mD\defn\mathrm{sign}(\mLambda)$.
Recalling the definition of PCA projection given in \cref{ssec:prelim-geom}, we recognize that this immediately yields PCA coordinates of $\mW_\sys$. \cref{prop:fourier-embed} follows immediately.

$\blacksquare$

\clearpage
\subsection{Proof of Fourier representation geometry under exponential kernel (periodic BC)}
\label[app]{app:exp-periodic-proof}

\paragraph{\cref{cor:exp-periodic}, precise statement.}
Let $\sys$ be any subset of the vocabulary.
Using the notation defined in \cref{app:semanticlattice}, we define the associated 1$D$ latent semantic lattice $\{x_i\}$ with length $L$, as well as the associated allowed wavenumbers $\mathcal{K}$ with enumeration $\{k_\mu\}$ and partition indices $P, S$.
Let $\Mstar\in\R^{V\times V}$ denote the normalized co-occurrence matrix.
Let $\overline \mW_\sys$ denote the PCA-projected embeddings for $\sys$ as specified in \cref{app:qwem-asym} and \cref{ssec:prelim-geom}.

Assume \cref{asmp:symmetry-psd-nsd} holds on $\sys$.
Assume the co-occurrence convolution kernel has the functional form $C(\Delta x)=\sum_{n\in\mathbb{Z}} \exp(-|\Delta x+2n|/\sigma)$, where $\sigma$ is a free parameter.
Assume the embedding dimension satisfies $d\geq \rank \Mstar$.

Then, up to orthogonal transformations within degenerate subspaces, and up to permutations of the principal directions, the PCA-projected embeddings are given by
\begin{equation}
     \overline \mW_{\sys\idx{i \mu}} = 
    \begin{cases}
        \sqrt{\frac{2}{L}} a_\mu \sin(k_\mu x_i)
        & \text{ if } \mu\leq 2P \text{ and } \mu \text{ is odd, }\\[12pt]
        \sqrt{\frac{2}{L}} a_\mu \cos(k_\mu x_i)
        & \text{ if } \mu\leq 2P \text{ and } \mu \text{ is even, }\\[12pt]
        \sqrt{\frac{1}{L}} a_\mu \cos(k_\mu x_i)
        & \text{ if } 2P<\mu\leq (2P+S), \\[12pt]
        0 & \text{ otherwise,}
    \end{cases}
\end{equation}
where
\begin{equation}
a_\mu = \sqrt{\frac{2}{L} \; \frac{1-q^2} {1-2q\cos(2 k_\mu/ L)+q^2}},
\qquad q \defn e^{-2/(\sigma L)}
\end{equation}
which has the continuum limit
\begin{equation}
    \lim_{L\to\infty} a_\mu = \sqrt\frac{2\sigma}{1+\sigma^2 k_\mu^2}.
\end{equation}

\paragraph{Proof.}
Applying \cref{prop:fourier-embed}, we see that \cref{cor:exp-periodic} holds if we equate $a_\mu = \sqrt{|\tilde m(k_\mu)|}$. We define for convenience $q\defn e^{-2/(\sigma L)}$. Under the periodized exponential kernel, the transfer function is
\begin{align}
    \tilde m(k)
    &= \frac{2}{L}\sum_{x \in \{x_i\}} \left(\sum_{n\in\mathbb{Z}} \exp\left(-\frac{| x+2n|}{\sigma}\right)\right) e^{-\iu k x} \\
    &= \frac{2}{L}\Big(1+2\sum_{r\ge 1} q^r \cos(2kr/L)\Big)
\end{align}
using a coordinate relabeling and standard trigonometric identities. Now, using the fact $|q|<1$, we apply the standard geometric series identity $\sum_{r\ge 1} q^r e^{\iu\theta r}=q e^{\iu\theta}/(1-q e^{\iu\theta})$, take the real part, and obtain
\begin{align}
    \tilde m(k)
    &= \frac{2}{L}\left(1+2\Re\!\left(\frac{q e^{2\iu k/L}}{1-q e^{2\iu k/L}}\right)\right) \\
    &= \frac{2}{L} \; \frac{1-q^2}{1-2q\cos(2k/L)+q^2}.
\end{align}
The $L\to\infty$ limit can be found by direct computation.

$\blacksquare$

\clearpage
\subsection{Proof of Fourier representation geometry under exponential kernel (open BC)}
\label[app]{app:exp-open-proof}

\paragraph{\cref{prop:exp-open}, precise statement.}
Let $\sys$ be any subset of the vocabulary.
Using the notation defined in \cref{app:semanticlattice}, we define the associated 1$D$ latent semantic lattice $\{x_i\}$ with length $L$ with open boundary conditions.
Let $\Mstar\in\R^{V\times V}$ denote the normalized co-occurrence matrix.
Let $\overline \mW_\sys$ denote the PCA-projected embeddings for $\sys$ as specified in \cref{app:qwem-asym} and \cref{ssec:prelim-geom}.

Assume \cref{asmp:translationsymmetry} holds on $\sys$.
Assume the co-occurrence convolution kernel has the functional form $C(\Delta x)= \exp(-|\Delta x|/\sigma)$, where $\sigma$ is a free parameter.
Assume the embedding dimension satisfies $d\geq \rank \Mstar$.

Then in the continuum limit the PCA-aligned embeddings are given by
\begin{equation}
    \lim_{L\to\infty} \sqrt{L} \;\; \overline \mW_{\sys\idx{i \mu}} = 
    \begin{cases}
        a_\mu \dfrac{\sin(k_\mu x_i)}{N_\mu}
        & \text{ if }\mu \text{ is odd, }\\[12pt]
        a_\mu \dfrac{\cos(k_{\mu}x_i)-\frac{\sin k_\mu}{k_\mu}}{N_\mu}
        & \text{ if } \mu \text{ is even,}
    \end{cases}
\end{equation}
where for each mode index $\mu$, the wavenumber $k_\mu>0$, the normalization $N_\mu>0$, and the singular value $a_\mu>0$ are given as follows.  $k_\mu$ is the unique solution to the self-consistent quantization condition
\begin{equation}
k_\mu
=
\begin{cases}
\dfrac{(\mu+1)\pi}{2} - \tan^{-1}(\sigma k_\mu),
& \mu \text{ odd}, \\
\dfrac{\mu \pi}{2} + \tan^{-1}\left(\dfrac{k_\mu}{1+\sigma(1+\sigma)k_\mu^2}\right),
& \mu \text{ even}.
\end{cases}
\end{equation}
It follows that $k_\mu<k_{\mu+1}$. The normalization is
\begin{equation}
N_\mu
=
\begin{cases}
\sqrt{\frac12-\frac{\sin(2k_{\mu})}{4k_{\mu}}}
& \mu \text{ odd}, \\
\sqrt{\frac12+\frac{\sin(2k_{\mu})}{4k_{\mu}}-\left(\frac{\sin k_{\mu}}{k_{\mu}}\right)^2}
& \mu \text{ even}.
\end{cases}
\end{equation}
The singular values are
\begin{equation}
a_\mu = \sqrt\frac{2\sigma}{1+\sigma^2 k_\mu^2}.
\end{equation}

\paragraph{Proof.}
In the continuum limit, we update our variables as follows. The lattice $\{x_i\}$ becomes the continuum $x\in[-1,1]$ with uniform measure. Vectors become functions, and in particular, eigenmodes $\vphi$ become eigenfunctions denoted $\phi(x)$.
The submatrix of $|\Mstar|$ corresponding to words in $\sys$ becomes the kernel operator denoted $H$, whose action on a function $f$ can be expressed as the integral transform
\begin{equation}
(H f)(x) := \int_{-1}^1 C(|x-y|) \,f(y)\,dy,
\end{equation}
where $C(|x-y|)=e^{-|x-y|/\sigma}$ is the co-occurrence kernel. The mean-centering matrix $\mP$ becomes the self-adjoint operator $P$ that projects out the constant function:
\begin{equation}
(Pf )(x) \defn ((I-\Pi)f)(x) = f(x) - \frac12\int_{-1}^1 f(y)\,dy.
\end{equation}
where $I$ is identity and $\Pi$ is the rank-one projector onto the constant mode.

\paragraph{Warm-up.}
To obtain the embedding geometry, we must diagonalize the operator $PHP$. As a warm-up, let us first diagonalize the uncentered co-occurrence operator $H$. 
A general solution strategy is given by \cite{youla1957solution}; here we specialize and simplify the solution for our setup.
We begin by obtaining endpoint identities for functions in the image of $H$, e.g., $u=Hf$, by differentiating the integral representation of $H$ and evaluating at the endpoints:
\begin{equation}
u'(1)=-\frac{1}{\sigma}u(1),
\qquad
u'(-1)=\frac{1}{\sigma}u(-1).
\label{eq:app-robin}
\end{equation}

We then notice that $C$ satisfies the identity
\begin{equation}
\left(1-\sigma^2\partial_x^2\right)e^{-|x|/\sigma}=2\sigma\,\delta(x).
\end{equation}
This convenient identity states that $H$ is the Green's function of the linear differential operator
\begin{equation}
L\defn \frac{1-\sigma^2\partial_x^2}{2\sigma}
\end{equation}
on $[-1,1]$ with the boundary conditions given in \cref{eq:app-robin}. Thus, diagonalizing $H=L^{-1}$ amounts to diagonalizing $L$, i.e., if $H\phi=\lambda\phi$ then $L\phi=\lambda^{-1}\phi$. We begin by evaluating the latter equation, yielding
\begin{equation}
\phi''+ \frac{1}{\sigma^2}\left(\frac{2\sigma}{\lambda}-1\right)\phi=0.
\end{equation}
Associating $k^2 \defn \frac{1}{\sigma^2}\left(\frac{2\sigma}{\lambda}-1\right)$, we see that this is simply the homogeneous Helmholtz equation, i.e., the harmonic oscillator ODE. This implies that the eigenfunctions must be sinusoids with wavenumber $k$. Solving $\lambda$ in terms of $k$, we find that the eigenvalues of the exponential kernel $H$ are
\begin{equation}
\lambda = \frac{2\sigma}{1+\sigma^2 k^2}.
\end{equation}
Since $H$ and its domain have parity symmetry around the origin, if $\phi(x)$ is an eigenfunction then so must be $\phi(-x)$ with the same eigenvalue. Furthermore, since $L$ defines a regular Sturm–Liouville problem on a finite interval with separated boundary conditions, we may use the known fact that such operators have non-degenerate spectra. This implies that $\phi(x)=\pm\phi(-x)$. This constraint precludes both complex plane wave solutions and sinusoids with arbitrary phase shifts; each eigenfunction must be either $\sin(kx)$ or $\cos(kx)$.
Plugging these into the boundary conditions (\cref{eq:app-robin}) yields constraints on the possible values of $k$.

\paragraph{With mean-centering.}
We now aim to obtain the spectral decomposition of the mean-centered operator $H_c\defn PHP$. We first observe that since $Pf=f$ for any odd function $f$, the odd sector solutions obtained above are unaffected. Let us now consider how the even modes change under mean-centering.

First we observe that since $P$ projects out the DC mode, $H_c\phi$ must have no DC component. If $\phi$ is an eigenfunction, then $\phi$ can have no DC component, and $P \phi=\phi$. Therefore the eigenproblem for $H_c$ becomes
\begin{align}
    \lambda \phi &= PHP \phi \\
    &=PH\phi\\
    &= H\phi - \Pi H\phi.
\end{align}

Note that the last term is a constant function (whose amplitude depends on $\phi$ and $\lambda$). Denoting $\alpha \defn \Pi H\phi$, we obtain the integral equation
\begin{equation}
    H\phi = \lambda \phi + \alpha.
    \label{eq:app-inhom-eigenproblem}
\end{equation}
Acting on both sides with $L$, we obtain
\begin{align}
    LH\phi &= \lambda L\phi + L\alpha \\
    \phi &= \lambda\left(\frac{\phi-\sigma^2\phi''}{2\sigma}\right) + \frac{\alpha}{2\sigma}
\end{align}
which simplifies to
\begin{align}
    \phi'' + \frac{1}{\sigma^2}\left( \frac{2\sigma}{\lambda} - 1 \right)\phi = \frac{\alpha}{\lambda \sigma^2}.
\end{align}

This is simply the \textit{inhomogeneous} Helmholtz equation with constant driving. Since $\phi=\mathrm{const.}$ is a particular solution, and since the Helmholtz equation is a linear ODE, the general solutions must be sinusoids offset by constants. We have already seen that the constant offset for the odd solutions is zero (since there we have $\alpha=0$). Additionally, we have already discussed that the constant offset must be exactly such that the DC component of $\phi$ vanishes. Therefore the even solutions take the form
\begin{align}
    \phi(x) &=\cos(kx) - \frac{1}{2}\int_{-1}^1 \cos(ky)\;dy \\
    &= \cos(kx) - \frac{\sin k}{k}
    \label{eq:app-centered-phi}
\end{align}
where, just as before, $k^2=\frac{1}{\sigma^2}\left( \frac{2\sigma}{\lambda} - 1 \right)$. Therefore the eigenvalue $\lambda$ is unaffected by centering.

Now we impose the boundary conditions to constrain the possible values of $k$.
Differentiating \eqref{eq:app-inhom-eigenproblem} and evaluating at $x=\pm 1$ yields the centered boundary conditions
\begin{equation}
\phi'(1) = -\frac{1}{\sigma}\phi(1) - \frac{\alpha}{\lambda\sigma},
\qquad
\phi'(-1) = +\frac{1}{\sigma}\phi(-1) + \frac{\alpha}{\lambda\sigma}.
\label{eq:app-robin-centered}
\end{equation}

In the odd sector, $\alpha=0$ and the boundary conditions reduce to those in the warm-up. Plugging in the candidate eigenfunction $\phi(x)=\sin(kx)$, we find the quantization condition
\begin{equation}
k\cos k = -\frac{1}{\sigma}\sin k
\quad\Longleftrightarrow\quad
\tan k = -\sigma k.
\label{eq:app-sin-quant}
\end{equation}

In the even sector, we plug \cref{eq:app-centered-phi} into the inhomogeneous Helmholtz equation and obtain
\begin{equation}
\frac{\alpha}{\lambda\sigma^2} = -k\sin k.
\label{eq:app-cos-alpha}
\end{equation}
Evaluating the boundary conditions \cref{eq:app-robin-centered} yields the quantization condition
\begin{equation}
\cos k = \sin k\left(\frac{1}{k} + \sigma(1+\sigma)k\right)
\quad\Longleftrightarrow\quad
\tan k = \frac{k}{1+\sigma(1+\sigma)k^2}.
\label{eq:app-cos-quant-centered}
\end{equation}

Combining the odd-sector quantization \eqref{eq:app-sin-quant} with the centered-even
quantization \eqref{eq:app-cos-quant-centered}, we may write the self-consistent enumerated
forms (with $k_\mu>0$ the unique solution in the appropriate interval for each $\mu$):
\begin{equation}
k_\mu
=
\begin{cases}
\dfrac{(\mu+1)\pi}{2} - \tan^{-1}(\sigma k_\mu),
& \mu \text{ odd}, \\[10pt]
\dfrac{\mu\pi}{2} + \tan^{-1}\!\left(\dfrac{k_\mu}{1+\sigma(1+\sigma)k_\mu^2}\right),
& \mu \text{ even}.
\end{cases}
\label{eq:app-kmu-enumerated-centered}
\end{equation}
Since the argument of each $\tan^{-1}(\cdot)$ is strictly positive, those terms are strictly in the interval $(0, \frac{\pi}{2})$. Therefore $k_\mu<k_{\mu+1}$.

Finally, we compute the normalizing constants for the eigenfunctions. For odd modes $\phi_\mu(x)=\sin(k_\mu x)$,
\begin{equation}
N_\mu^2
=
\frac12\int_{-1}^1 \sin^2(k_\mu x)\,dx
=
\frac12-\frac{\sin(2k_\mu)}{4k_\mu}.
\end{equation}
For even modes $\phi_\mu(x)=\cos(k_\mu x)-\sin k_\mu/k_\mu$,
\begin{equation}
N_\mu^2
=
\frac12+\frac{\sin(2k_\mu)}{4k_\mu}
-\left(\frac{\sin k_\mu}{k_\mu}\right)^2.
\end{equation}

Since the centered operator $H_c$ is diagonalized by the above odd and centered-even
eigenfunctions, with eigenvalues $\lambda_\mu=2\sigma/(1+\sigma^2k_\mu^2)$, the coefficients of the word embeddings are simply the singular values $a_\mu=\sqrt{\lambda_\mu}$. This yields the stated PCA-aligned embeddings.

$\blacksquare$
\clearpage
\subsection{Proof of lattice coordinate decoding}
\label[app]{app:coord-decode-proof}

\paragraph{\cref{prop:decode}, precise statement.}
We work in the setting of \cref{prop:fourier-embed}, which defines the vocabulary subset $\sys$; the lattice coordinates $\{\vx_i\}$ and lattice length $L$; the wavevectors $\{\vk_\mu\}$ and their partition indices $P, S$; the eigenvalues $\{\lambda_\mu\}$; and the PCA projection of the embeddings $\overline \mW_\sys\in\R^{|\sys|\times |\sys|}$.

Assume $\lambda_\mu$ is monotone non-increasing with $\vk_\mu$. For simplicity, we assume $L$ is odd; the analysis and result is almost identical for the even case.

Let $r\leq |\sys|$ be the embedding rank, and define $\mW_r$ to be the Eckart-Young-Mirsky rank-$r$ approximation of $\overline \mW_\sys$. We assume this approximation is unique, i.e., $r$ is chosen such that there is a spectral gap at the truncation rank. (In other words, $r$ is chosen so that the truncation does not select only a strict subset of a degenerate singular subspace.)


Let $\mX\in\R^{|\sys|\times D}$ be the matrix of lattice coordinates.
Let $\mOmega\in \R^{r\times D}$ be a linear probe.
Define the decoding error as
\begin{equation}
    \varepsilon^2(\mOmega;r) \defn \frac{\|\mW\mOmega-\mX\|_F^2}{\|\mX\|_F^2}.
\end{equation}
Then
\begin{equation}
    \min_\mOmega \varepsilon^2(\mOmega;r) \leq \frac{6}{\pi^2}  \frac{L^2}{L^2-1} \left(\left(\frac{r}{\mathrm{Vol}_D}\right)^{1/D}-\frac{\sqrt D}{2} \right)^{-1}.
\end{equation}

\textbf{Proof.}
We recall from \cref{prop:fourier-embed} that $\overline\mW_\sys=\mF\mA$, where the eigenmodes $\mF_\idx{\cdot,\mu}$ are normalized, discrete, real-valued Fourier modes, and $\mA$ is the diagonal matrix of amplitudes whose elements are given by the square root of the eigenvalues of $|\Mstar|_\sys$.
It follows that $\mW_r=\mF_r\mA_r$.

The mean-squared-error optimal $\mOmega$ is simply the OLS solution, $\hat\mOmega = (\T\mW_r\mW_r)^{-1}\T\mW_r\mX$. Therefore
\begin{align}
    \mW_r\hat\mOmega 
    &= \mW_r(\T{\mW_r}\mW_r)^{-1}\T{\mW_r}\mX \\
    &= \mF_r\mA_r \left(\mA_r \T{\mF_r} \mF_r\mA_r \right)^{-1} \mA_r \T{\mF_r}\mX \\
    &= \mF_r\T{\mF_r}\mX
\end{align}
where in the last step we use the semi-orthogonality of $\mF_r$. Note that $\mF_r\T\mF_r$ is an orthogonal projection matrix; it projects $\mX$ into its top-$r$ Fourier approximation. The remainder of the proof is in simply computing the resulting error.

Let $\vk_n$ denote the $n$-th smallest wavevector (by magnitude), and let $\vphi_n\in\R^{|\sys|}$ be the corresponding lattice Fourier mode. The ordering of wavevectors sharing the same magnitude may be arbitrary.
Additionally, since $\varepsilon^2$ is invariant to rescaling of the coordinates, let us rescale $\mX\mapsto (L/2) \mX$ so that the lattice coordinates are integers. The allowed wavevectors are rescaled commensurately: $\vk\mapsto (2/L)\vk$.

Then the numerator of the minimal error is
\begin{align}
    \|\mW_r\hat\mOmega-\mX\|_F^2
    &= \|(\mI_{|\sys|} - \mF_r\T{\mF_r})\mX\|_F^2\\
    &= \sum_{\ell=1}^D \sum_{n={r+1}}^{|\sys|} \left(\T{\vphi_n}\mX_\idx{\cdot\ell}\right)^2.
\end{align}
This equation states that the coordinate decoding error decomposes into overlaps between the lattice coordinates and the short-wavelength Fourier modes not realized in the embedding vectors.
To make further progress, let us unroll one of the overlap terms. WLOG let us examine the case $\ell=1$:
\begin{equation}
    \T{\vphi_n}\mX_\idx{\cdot,1} = \sum_{x_1=-M}^M x_1 \sum_{x_2=-M}^M \cdots \sum_{x_D=-M}^M \sqrt\frac{2}{|\sys|}\exp(\iu\sum_{p=1}^D (\vk_n)_\idx{p} x_p)
\end{equation}
where we use the fact the lattice length is odd, $L=2M+1$. We implicitly understand that this equation takes the real or imaginary parts as prescribed by $\vphi_n$.

One may verify that this inner product vanishes if, for any $p\neq \ell$, the corresponding wavevector coordinate $(\vk_n)_\idx{p}\neq 0$. Thus, the only inner products that survive are the ones for which the wavevector is perfectly aligned with the lattice coordinate in question, i.e., $\hat\ve_\ell^\top\vk_n = |\vk_n|$.
Furthermore, the real part also vanishes since $f(x_\ell)=x_\ell$ is odd while $\cos(kx_\ell)$ is even. Together, this gives
\begin{align}
    \T{\vphi_n }\mX_\idx{\cdot\ell} &= \sqrt\frac{2}{|\sys|}\sum_{x_1=-M}^M x_1 \sin(|\vk_n| x_1) \sum_{x_2=-M}^M \cdots \sum_{x_D=-M}^M 1 \\
    &= L^{D-1} \sqrt\frac{2}{L^D}  \sum_{x_1=-M}^M x_1 \sin(|\vk| x_1)
\end{align}
if $\hat\ve_\ell^\top\vk_n = |\vk_n|$ and $\vphi_n$ odd, and 0 otherwise. Therefore the squared overlap is
\begin{align}
    \left(\T{\vphi_n }\mX_\idx{\cdot\ell}\right)^2  &=  \frac{2L^{D-1}}{L}  \left(\sum_{x=-M}^M x \sin(|\vk_n| x) \right)^2.
\end{align}
Note that this does not depend on $\ell$. Thus, the $D$-dimensional lattice problem decouples into independent 1D problems:
\begin{align}
    \|\mW_r\hat\mOmega-\mX\|_F^2 &= \sum_{\ell=1}^D \sum_{n={r+1}}^{|\sys|}
    \left(\T{\vphi_n}\mX_\ell\right)^2\\
    &= DL^{D-1} \sum_{n=R+1}^M
    \frac{2}{L}  \left(\sum_{x=-M}^M x \sin(2\pi nx/L) \right)^2
\end{align}
where we define $R$ such that $2\pi R/L$ is the maximal wavenumber magnitude encoded in the rank $r$ embeddings. Our spectral gap condition guarantees that $R$ is well-defined. We will later find a bound for $R$ in terms of $r$. 

We now evaluate the trigonometric sum:
\begin{align}
    \sum_{x=-M}^M x \sin(2\pi n x/L) 
    &= 2\sum_{x=1}^M x \sin(2\pi n x/L)\\
    &= \frac{1}{2}\left(\frac{\sin(\pi n (L+1)/L)}{\sin^2 (\pi n/L)} - \frac{(L+1)\cos(\pi n)}{\sin(\pi n/L)}\right) \\
    &= (-1)^{n+1} \frac{L}{2\sin(\pi n / L)}
\end{align}
where the sum formula can be found in \cite{gradshteyn2014table} Eq. 1.352.1 and the final step follows from a few lines of algebra. Thus the numerator simplifies further:
\begin{equation}
    \|\mW_r\hat\mOmega-\mX\|_F^2 = DL^{D-1} \sum_{n=R+1}^M
    \frac{L}{2}  \csc^2(\pi n/L).
\end{equation}

It is now useful to compute the denominator of the error. Using a very similar decoupling argument, we find
\begin{align}
    \|\mX\|_F^2 &= \sum_{\ell=1}^D L^{D-1} \sum_{x=-M}^M x^2 \\
    &= D L^{D-1} \cdot 2 \cdot \frac{M(M+1)(2M+1)}{6}\\
    &= D L^{D-1} \frac{L(L^2-1)}{12}
\end{align}
using a well-known formula for the sum of a sequence of squares. Combining with the numerator, we can now obtain an expression for the relative error:
\begin{equation}
     \min_\mOmega \varepsilon^2(\mOmega;r)
    = \frac{6}{L^2-1} \sum_{n=R+1}^M \csc^2(\pi n/L).
\end{equation}

Since $\csc$ is positive and decreasing on this interval, we may bound the sum as
\begin{align}
    \sum_{n=R+1}^M \csc^2(\pi n/L)
    &\leq \int_R^M \dd t \csc^2 (\pi t/L) \\
    &\leq \frac{L}{\pi} \cot\frac{\pi R}{L} - \cot \frac{\pi M}{L} \\
    &\leq \frac{L}{\pi} \cot\frac{\pi R}{L} \\
    &\leq \frac{L^2}{\pi^2 R} \\    
\end{align}
where the fourth step follows from $\cot(x)\leq 1/x$ on this interval. This gives an upper bound in terms of $R$:
\begin{align}
     \min_\mOmega \varepsilon^2(\mOmega;r) \leq \frac{6}{\pi^2}  \frac{L^2}{L^2-1}  \frac{1}{R}.
\end{align}

Now we must relate $1/R$ to the embedding rank $r$. Since the wavenumbers are monotonic decreasing with the eigenvalues, the top $r$ embedding directions contain the Fourier modes with the $r$ shortest wavevectors. Therefore, in the so-called \textit{reciprocal lattice}, we identify $r$ as the number of integer lattice sites enclosed by a $D$-sphere of radius $R$.
Using the standard volume argument for counting integer points, we recognize that
\begin{align}
    r \leq \mathrm{Vol}_D \cdot\left(R+\frac{\sqrt D}{2} \right)^D
\end{align}
where $\mathrm{Vol}_D$ is the volume of the unit $D$-sphere.
Rearranging and substituting, we obtain
\begin{align}
    \varepsilon^2 \leq \frac{6}{\pi^2}  \frac{L^2}{L^2-1} \left(\left(\frac{r}{\mathrm{Vol}_D}\right)^{1/D}-\frac{\sqrt D}{2} \right)^{-1}.
\end{align}
Note that when the lattice dimension $D=1$ we have
\begin{align}
    \varepsilon^2 \leq \frac{12}{\pi^2}  \frac{L^2}{L^2-1} \left(\frac{1}{r-1}\right) \sim \frac{1}{r}
\end{align}
and when $D=2$ we have
\begin{align}
    \varepsilon^2 \leq \frac{6}{\pi^2}  \frac{L^2}{L^2-1} \left(\sqrt\frac{r}{\pi}-\frac{1}{\sqrt 2} \right)^{-1}
    \sim \frac{1}{\sqrt r}.
\end{align}

$\blacksquare$

\clearpage

\end{document}